\newtheorem{theorem}{Theorem}
\newtheorem*{theorem*}{Theorem}
\newtheorem{corollary}{Corollary}
\newtheorem*{prop*}{Proposition}
\newtheorem{lemma}{Lemma}
\newtheorem*{lemma*}{Lemma}
\newtheorem{assumption}{Assumption}
\newtheorem{amsthmexample}{Example}
\algrenewcommand\algorithmicrequire{\textbf{Input:}}
\algrenewcommand\algorithmicensure{\textbf{Output:}}
\newcommand\appendix{%
  \par
  \setcounter{section}{0}%
  \setcounter{subsection}{0}%
  \renewcommand\thesection{\Alph{section}}%
  \renewcommand\thesubsection{\thesection.\arabic{subsection}}%
}
\def\@seccntformat#1{%
  \csname the#1\endcsname\quad
}
\title[A Scalable Multi-Axis Gaussian Graphical Model]{Making Multi-Axis Gaussian Graphical Models Scalable to Millions of Cells}
\author[1,$\dagger$,$\ast$]{Bailey Andrew}
\author[2,$\dagger$]{Erica L. Harris}
\author[2,$\dagger$]{James A. Poulter}
\author[3,$\dagger$]{David R. Westhead}
\author[4,$\dagger$]{Luisa Cutillo}
\address[1]{\orgdiv{School of Computing}}
\address[2]{\orgdiv{School of Medicine}}
\address[3]{\orgdiv{School of Molecular and Cellular Biology}}
\address[4]{\orgdiv{School of Mathematics}}
\begin{document}

\makeatletter
\global\def\@abstract{
    \textbf{Motivation:}
    Networks underlie the generation and interpretation of many biological datasets: gene networks shed light on the regulatory structure of the genome, and cell networks can capture structure of the tumor micro-environment.  However, most methods that learn such networks make the faulty `independence assumption'; to learn the gene network, they assume that no cell network exists.  `Multi-axis' methods, which do not make this assumption, fail to scale beyond a few thousand cells or genes.  This limits their applicability to only the smallest datasets.
    
    \textbf{Results:} We develop a multi-axis method capable of processing million-cell datasets within minutes.  This was previously impossible, and unlocks the use of such methods on modern scRNA-seq datasets, as well as more complex datasets.  We show that our method yields novel biological insights from real single-cell data{, and compares favorably to the existing hdWGCNA methodology}.  In particular, it identifies long non-coding RNA genes that potentially have a regulatory or functional role in neuronal development.
    
    \textbf{Availability and implementation:} Our methodology is available as a Python package GmGM on PyPI (\href{https://pypi.org/project/GmGM/0.5.3/}{https://pypi.org/project/GmGM/0.5.3/}).  The code for all experiments performed in this paper is available on GitHub (\href{https://github.com/BaileyAndrew/GmGM-Bioinformatics}{https://github.com/BaileyAndrew/GmGM-Bioinformatics}).
    
    \textbf{Contact:} \href{mailto:sceba@leeds.ac.uk}{sceba@leeds.ac.uk}
    
    \textbf{Supplementary information:} Our proofs, and some additional experiments, are available in the supplementary material.
\par
\if@modern
{\sffamilyfontcnbold\fontsize{8bp}{11}\keywordsname\ \sffamilyfontcn gaussian graphical models, multi-axis models, transcriptomics, multi-omics, scalability}%
\else
\if@traditional
{\fontsize{8bp}{10}\textbf{\uppercase{Keywords}:}\ gaussian graphical models, multi-axis models, transcriptomics, multi-omics, scalability}%
\else
{\fontsize{8bp}{10}\textbf{{Keywords}:}\ gaussian graphical models, multi-axis models, transcriptomics, multi-omics, scalability}%
\fi\fi
}
\makeatother

\maketitle

\section{Introduction}
\label{sec:gmgm-bio-introduction}

Learning a network (`graph') that describes a dateset can improve understanding of the systems that generated it.  A common example is learning a gene association network from a single-cell RNA-sequencing (scRNA-seq) dataset.  These datasets typically come in the form of a $\text{cells}\times\text{genes}$ matrix, with the individual matrix entries of gene expression values; learning a gene network from this type of data gives insight into the regulatory pathways governing cell behavior.  There are many methods to learn networks, such as the graphical lasso \citep{friedman_sparse_2008}, WGCNA and its single-celled variant hdWGCNA \citep{langfelder_wgcna_2008, morabito_hdwgcna_2023}, and neighborhood selection \citep{meinshausen_high-dimensional_2006}.

However these methods all make the `independence assumption'; to learn the gene network, they must assume that the cells do not interact.  This is clearly false; cells come from a highly interactive micro-environment, and come in a variety of cell types - all of this information is useful for the construction of gene networks, and should not be thrown away.  In this paper, we propose a scalable method to learn gene networks while still capturing cellular heterogeneity; as a byproduct, our method also produces a cell network that can be used for cell type inference.  By learning both networks simultaneously, one can expect to get better results than learning them sequentially - not only does our method not make the independence assumption, but it can share information between the networks when learning them, leading to better estimates of both.

Our method is a type of `multi-axis' method (which are often called `Kronecker-separable' methods), named so because they consider the dependencies within all \textit{axes} of a dataset; in contrast, methods that make the independence assumption are `single-axis'.  In general, for any matrix-variate dataset there are two potential \textit{kinds of dependencies} one must consider: sample (`row', `cell') and feature (`column', `gene') dependencies.  This concept, and our method, can be extended to more complex (`tensor-variate') datasets, which occasionally show up in bioinformatics (Example \ref{ex:gmgm-bio-tensor-variate}).

\begin{amsthmexample}
    scRNA-seq datasets have two axes (cells, genes).
\end{amsthmexample}
\begin{amsthmexample}
    \mbox{scATAC-seq datasets have two axes (cells, peaks).}
\end{amsthmexample}
\begin{amsthmexample}
    \label{ex:gmgm-bio-tensor-variate}
    A longitudinal bulk RNA study of multiple patients has three axes (timepoints, patients, genes).
\end{amsthmexample}

Also in scope are datasets with `shared axes'.  A shared axis is an axis that appears in two or more matrices at a time.  The goal is to learn a network describing each axis, without making an independence assumption on any.

\begin{amsthmexample}
    Paired scRNA-seq/scATAC-seq datasets have three axes (genes, peaks, and a cells axis shared between the two modalities).
\end{amsthmexample}
\begin{amsthmexample}
    A 5 patient scRNA-seq dataset has six axes (one for the cells of each patient, and a gene axis shared between them all).
\end{amsthmexample}

The standard independence assumption is that there are no row dependencies (i.e. there are no dependencies between cells in the scRNA-seq case), which is tractable but inaccurate.  One could instead allow `full dependence', i.e. every matrix element $(i_1, j_1)$ could potentially be dependent on any other matrix element $(i_2, j_2)$.  This would allow very granular statements to be made about the data, such as `gene A in cell X affects gene B in cell Y'.  Sadly, inference of such a network is statistically intractable: a $d\times d$ matrix would have $O(d^4)$ potential dependencies, and only one `sample' (the matrix itself) from which to learn such dependencies.

To strike a balance between model accuracy and model tractability, the independence assumption is replaced with a weaker, more realistic assumption about how the axes may interact.  In this paper, we make the \textbf{Cartesian product assumption}, which is easiest to describe in terms of a concrete scRNA-seq example: `a gene can only be (conditionally) dependent on the expression of \textit{other genes within the same cell}, or the \textit{same gene in other cells}'.  This assumption is weaker than the independence assumption, yet like the independence assumption only a more tractable $O(d^2)$ possible dependencies need be considered.

The networks considered in this paper are \textit{conditional dependence} networks; two genes are conditionally dependent, given the rest of the genes in the dataset, if they are statistically dependent on one another after removing the effect of the other genes.  Conditional dependence captures the concept of `direct correlation'; if $A$ and $B$ are only correlated indirectly due to their mutual correlation with another gene $C$, then they would be conditionally independent.  Thus, conditional dependency networks tend to be sparser and more meaningful than co-expression networks.  For Gaussian data, the inverse covariance matrix $\mathbf{\Psi}$ encodes the conditional dependency graph as follows:

\begin{align*}
    \mathbf{\Psi}_{ij} = 0 &\iff \text{gene }i \text{ is conditionally independent from gene } j
\end{align*}

This convenient relationship allows our assumptions about the data to be expressed as a probabilistic model, where $\mathbf{\Psi}^\textit{(cells)}, \mathbf{\Psi}^\textit{(genes)}$ are the cell and gene conditional dependency matrix and $d_1, d_2$ are the numbers of cells and genes, respectively.  In doing so, our model is amenable to maximum likelihood estimation (MLE).  Letting $\mathbf{X}$ be our dataset, $\mathbf{I}$ and $\mathbf{0}$ be the identity matrix and zero matrix, $\mathrm{vec}$ be the operation that flattens matrices into vectors, and $\otimes$ be the `Kronecker product' operation, traditional single-axis models and our model can be expressed as follows (with $\mathcal{N}\left(\boldsymbol{\mu},\mathbf{\Sigma}\right)$ being the normal distribution with mean $\boldsymbol{\mu}$ and covariance $\mathbf{\Sigma}$):

\begin{align*}
    \mathrm{vec}\left[\mathbf{X}\right] &\sim \mathcal{N}\left(\mathbf{0}_{d_1d_2},  \left( \mathbf{\Psi}^{(\textit{genes})}_{d_2\times d_2} \otimes \mathbf{I}_{d_1\times d_1} + \mathbf{I}_{d_2 \times d_2} \otimes \mathbf{\Psi}^{(\textit{cells})}_{d_1\times d_1}\right)^{-1}\right) \tag{Cartesian product [our method]}
\end{align*}
\begin{align*}
    \mathrm{vec}\left[\mathbf{X}\right] &\sim \mathcal{N}\left(\mathbf{0}_{d_1d_2}, \left(\mathbf{\Psi}^{(\textit{genes})}_{d_2\times d_2} \otimes \mathbf{I}_{d_1 \times d_1}\right)^{-1}\right) \tag{Independence of cells [glasso, hdWGCNA, etc.]}
\end{align*}

The operation $\mathbf{F}\otimes \mathbf{I} + \mathbf{I}\otimes \mathbf{G}$ is common enough that it is known as the `Kronecker sum' $\mathbf{F} \oplus \mathbf{G}$.  For two graphs $\mathfrak{F}, \mathfrak{G}$ encoded as matrices $\mathbf{F}, \mathbf{G}$, the Kronecker sum $\mathbf{F} \oplus \mathbf{G}$ encodes the Cartesian product of $\mathfrak{F}$ and $\mathfrak{G}$.

A Gaussian assumption would not be correct for most biological datasets, particularly those comprising of binary or count data.  However, this assumption can be replaced by the much weaker `Gaussian copula' assumption, which allows the data to follow any distribution as long as the genes `interact Gaussianly'; in particular, if genes were to follow a negative binomial or Poisson distribution, they can still fit the Gaussian copula assumption.  This is discussed further in Section \ref{sec:gmgm-bio-assumptions}.

This paper is not the first to consider the Cartesian product assumption; there are several methods to learn such graphs.  The most scalable include TeraLasso \citep{greenewald_tensor_2019}, which was the first to generalize to tensor-variate data, and GmGM \citep{andrew_gmgm_2024}, which further generalized the scenario to the `shared axis' case\footnote{Other methods that work with shared axes in multi-modal datasets do exist, but they all make the independence assumption on one of the axes; for example, the group graphical lasso and fused graphical lasso \citep{danaher_joint_2014, cai_joint_2016},  and Gaussian chain graphical models \citep{mccarter_sparse_2014}.}.  However, TeraLasso is only scalable to hundreds of cells, and GmGM to low thousands.  Modern scRNA-seq datasets have hundreds of thousands of cells.  In fact, there are 11 Human Cell Atlas datasets with more than a million cells \citep{regev_human_2017}.  Extrapolating their runtimes to a million-cell dataset, TeraLasso would take 15,000 years to run and GmGM would take 200.  Both would require at least 16 terabytes of memory.  Even with access to a high-performance-computing (HPC) cluster, the usability of these algorithms on such datasets is dubious.

Due to these scalability constraints, multi-axis models cannot be applied to modern scRNA-seq datasets.  In this paper, we make significant alterations to the GmGM algorithm to improve its scalability, resulting in an algorithm that can run on million-cell datasets in a matter of minutes.  Our contributions unlock the use of multi-axis models on modern datasets.

\section{Materials and methods}
\label{sec:gmgm-bio-methods}

\subsection{Notation}
\label{sec:gmgm-bio-notation}

Because our model is built to work on a wide variety of datasets (including those with shared axes and more than two axes), the notation can be quite dense; in particular, many variables of the form $d_y^x$ are defined below.  For most users, only the matrix-variate case (which includes scRNA-seq) will be of interest, for which the notification can be drastically simplified.  See Section \ref{sec:gmgm-bio-simplified notation} for such a simplification.  Furthermore, Figure \ref{fig:gmgm-bio-notation} gives a graphical overview of the most complicated bit of the notation.

\begin{figure*}[ht!]
    \centering
    \includegraphics[width=0.6\linewidth]{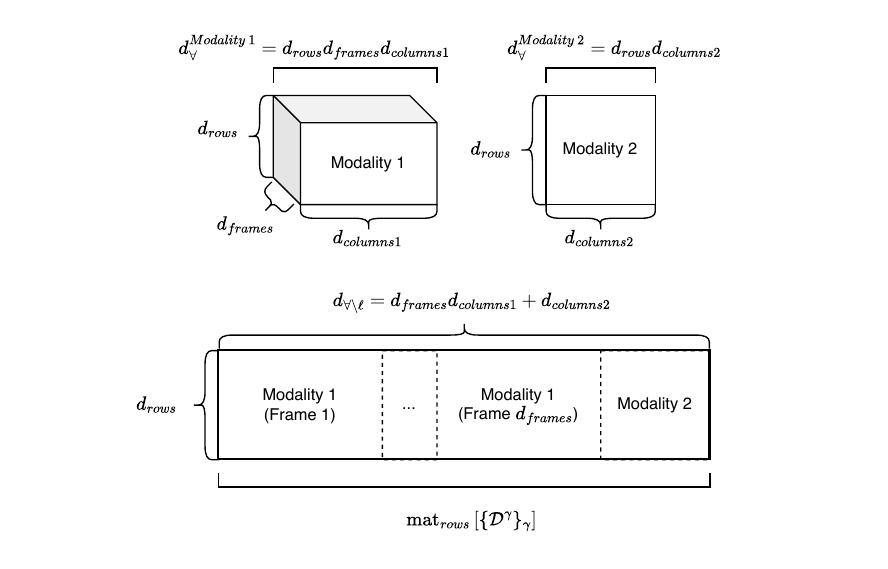}
    \caption{An example two-modality dataset with a few relevant properties highlighted.  Also contains a graphical definition of $\mathrm{mat}_\ell\left[\{\mathcal{D}^\gamma\}_\gamma\right]$ (bottom).}
    \label{fig:gmgm-bio-notation}
\end{figure*}

Axes are denoted using subscripts (often with the symbol $\ell$); for example, $\mathbf{\Psi}_\mathrm{cells}$ would be the precision matrix of the cells axis.  Modalities will be represented with superscripts (often with the symbol $\gamma$); in a multi-patient scRNA-seq study, $\mathbf{D}^\mathrm{patient 1}$ would represent the cells $\times$ genes matrix of the first patient.  $\ell\in\gamma$ represents the statement that an axis is recorded in a modality; in a paired scRNA/ATAC-seq dataset, ``$\mathrm{genes}\in\mathrm{scRNA\text{-}seq\text{ }modality}$'' but ``$\mathrm{genes}\notin\mathrm{scATAC\text{-}seq\text{ }modality}$''.

Let $d_\ell$ be the size of axis $\ell$, $d_\forall^\gamma = \prod_{\ell\in\gamma}d_\ell$ be the size of modality $\gamma$, $d_{\backslash\ell}^\gamma = \frac{d_\forall^\gamma}{d_\ell}$ be the number of samples from $\ell$'s perspective in $\gamma$, and $d_{\backslash\ell}^\forall = \sum_{\gamma | \ell\in\gamma} d_{\backslash\ell}^\gamma$ be the number of samples from $\ell$'s perspective overall.  $r_\ell$ and analogous variables are defined to represent the number of principal components chosen for axis $\ell$.

$\mathrm{det}^\dagger$ represents the \textit{pseudo-determinant} and $\mathrm{vec}$ represents vectorization.  $\mathrm{mat}_\ell\left[\mathcal{X}^\gamma\right]$ represents the `matricization' of a tensor $\mathcal{X}^\gamma$, which reshapes the data into a $d_\ell \times d^\gamma_{\backslash\ell}$ matrix.  Matricization of a dataset, $\mathrm{mat}_\ell\left[\{\mathcal{X}^\gamma\}_\gamma\right]$, reshapes the data into a $d_\ell \times d^\forall_{\backslash\ell}$ matrix.  See Figure \ref{fig:gmgm-bio-notation} for a graphical example of these definitions.  $\mathbf{S}_\ell = \mathrm{mat}_\ell\left[\{\mathcal{X}^\gamma\}_\gamma\right]\mathrm{mat}_\ell\left[\{\mathcal{X}^\gamma\}_\gamma\right]^T$ is the covariance matrix for axis $\ell$.  Its top $r_\ell$ eigenvectors and eigenvalues are $\mathbf{V}_\ell^{(r_\ell)}$ and $\mathbf{E}_\ell^{(r_\ell)}$, respectively.

The final definition needed is that of the stridewise-blockwise trace $\mathrm{tr}^a_b\left[\mathbf{X}\right]$; it arises in the gradient of terms involving Kronecker products.  Letting $\mathbf{J}^{(ij)}$ is a matrix of zeros except for a single 1 in position $(i, j)$, we have:

\[\mathrm{tr}^a_b\left[\mathbf{X}\right]_{ij} = \mathrm{tr}\left[\mathbf{X}\left(\mathbf{I}_{a\times a} \otimes \mathbf{J}^{(ij)}\otimes \mathbf{I}_{b\times b}\right)\right]\]

\subsubsection{Simplified notation}
\label{sec:gmgm-bio-simplified notation}

If one is only interested in matrix-variate datasets, the notation can be simplified.  For any such dataset, there is only one modality, so all superscripts $\gamma$ can be ignored.  Furthermore, there are only two axes (of sizes $d_1, d_2$), and $d_1 = d_{\backslash 2}=d^\forall_{\backslash 2}$ (with analogous relationships for $d_2$).  $d_{\forall} = d_1 d_2$, and for a dataset $\mathbf{X}$ it holds that $\mathrm{mat}_1\left[\mathbf{X}\right]=\mathbf{X}^T$ and $\mathrm{mat}_2\left[\mathbf{X}\right]=\mathbf{X}$.  Any kind of product over $\ell \in \gamma$ is just a product over the two axes; $\ell\in\{1,2\}$.

\subsection{Assumptions}
\label{sec:gmgm-bio-assumptions}

The five assumptions of this work are listed below, citing where they have been used for multi-axis modelling before.

\begin{assumption}
    \label{ass:ks-structure}
    The data's dependencies are Kronecker-sum structured.  First used by \citet{kalaitzis_bigraphical_2013}.
\end{assumption}
\begin{assumption}
    \label{ass:nonparanormality}
    The data has a Gaussian copula, also called the `nonparanormality assumption'.  Used by \citet{li_scalable_2022}.  This assumption allows the data to have any marginal distribution (such as negative binomial), as long as the genes interact `Gaussian-ly'.
\end{assumption}
\begin{assumption}
    \label{ass:linear-sparsity}
    Conditional dependency graphs with $d$ vertices have $O(d)$ edges ,i.e. are `linearly sparse'.  Follows from Assumption A4 of \citet{greenewald_tensor_2019}.
\end{assumption}
\begin{assumption}
    \label{ass:minimum-eigenvalue}
    The largest nonzero eigenvalue of $\mathbf{\Psi}_\ell$ is at least $\epsilon_\ell$, for some small $\epsilon_\ell$.  Follows from Assumption A2 of \citet{greenewald_tensor_2019}.
\end{assumption}
\begin{assumption}[New]
    \label{ass:low-rank-assumption}
    The input data and true graphs are well-approximated by low-rank matrices.  This assumption is common throughout data science (see Appendix \ref{sec:gmgm-bio-low-rank-approx-evidence}) but has not been applied to improve performance for multi-axis models.
\end{assumption}

Our method inherits the nonparanormality assumption from prior work.  Intuitively, this corresponds to a multivariate distribution with arbitrary marginals but whose variables interact with each other `like Gaussian variables'.  This is done by only considering rank statistics of gene expression, rather than the exact expression values.  See \citet{liu_nonparanormal_2012} for more details.  To our knowledge, there has been no Kronecker-structured work that does not make this or a stronger assumption.  If there were, it would likely lack the crucial properties of the Gaussian copula that make it efficient to scale (namely the relationship between $\mathbf{\Psi}$ and the graph of conditional dependencies) - although it is certainly a worthwhile avenue for future work to explore.

The main bottleneck to scalability in the prior version of GmGM was that its memory complexity was $O(\sum_\ell d_\ell^2 )$.  Without making any additional assumptions, this memory complexity is optimal; a set of $K$ fully dense $d_\ell \times d_\ell$ precision matrices require exactly $\sum_\ell\frac{d_\ell^2+d_\ell}{2}$ values to specify.  However, when are working with conditional dependencies, the output graphs will typically be sparse.  The `linearly sparse' assumption makes it at least theoretically possible to achieve $O(\sum d_\ell)$ memory complexity.  Real world networks are often linearly sparse.  Consider the case of a social network; no matter how many people $p$ there are in the world, there is (sadly) a constant upper bound on the possible number of friends someone can make during their life.  This yields a graph with $O(p)$ edges.

While linear sparsity makes a linear memory cost possible, all prior work relies on eigendecompositions.  This entails the use of the matrix of eigenvectors, one for each axis, requiring $\sum_\ell d_\ell^2$ elements.  To achieve scalable memory usage, this must be avoided.  Thankfully, if the dataset is well-approximated by a low-rank matrix, then only the first few eigenvectors are actually meaningful!  The full eigendecomposition is not needed.  This assumption is easily verifiable on real data; PCA is a standard part of many data analysis pipelines, with there being many techniques to estimate the number of principal components (i.e. eigenvectors) required to capture most of the variance in the data.

\subsection{The model}
\label{sec:gmgm-bio-model}

Our model is similar in structure to prior work; however, to take advantage of Assumption \ref{ass:low-rank-assumption} a `\textit{singular}' probability distribution is used; see Appendix \ref{sec:gmgm-bio-prob-deriv} for an explanation and derivation.  Letting $r_\ell$ be the rank chosen for each axis, and $r_\forall^\gamma$ be the product of ranks of all axes occurring in modality $\gamma$, our model can be expressed as follows:

\begin{align}
    \mathrm{pdf}_{\text{normal}}(\mathbf{\mathcal{X}}^\gamma) &= \frac{\sqrt{\mathrm{det}^\dagger\left(\bigoplus_{\ell \in \gamma}\mathbf{\Psi}_\ell\right)}}{\left(2\pi\right)^{\frac{r_\forall^\gamma}{2}}} e^{-\frac{1}{2}\mathrm{vec}\left[\mathcal{X}^\gamma\right]^T\left(\bigoplus_{\ell \in \gamma}\mathbf{\Psi}_\ell\right)\mathrm{vec}\left[\mathcal{X}^\gamma\right]}\notag \\
    &= \frac{\sqrt{\mathrm{det}^\dagger\left(\bigoplus_{\ell \in \gamma}\mathbf{\Psi}_\ell\right)}}{\left(2\pi\right)^{\frac{r_\forall^\gamma}{2}}} e^{-\frac{1}{2}\sum_{\ell\in\gamma} \mathrm{tr}\left[\mathbf{S}_\ell^\gamma\mathbf{\Psi}_\ell\right]}\label{eq:gmgm-bio-model}
\end{align}

When given a multi-modal dataset, the model is:

\begin{align*}
    \mathrm{pdf}_{\text{normal}}\left(\left\{\mathbf{\mathcal{X}}^\gamma\right\}\right) &=  \prod_\gamma \frac{\sqrt{\mathrm{det}^\dagger\left(\bigoplus_{\ell \in \gamma}\mathbf{\Psi}_\ell\right)}}{\left(2\pi\right)^{\frac{r_\forall^\gamma}{2}}} e^{-\frac{1}{2}\sum_{\ell\in\gamma} \mathrm{tr}\left[\mathbf{S}_\ell^\gamma\mathbf{\Psi}_\ell\right]}
\end{align*}

Above, the data is assumed to be Gaussian ($\mathbf{x} \sim \mathcal{N}(\cdot, \cdot)$).  To achieve the weaker `Gaussian copula' assumption (Assumption \ref{ass:nonparanormality}), which states that the dataset is a \textit{monotonic transformation} $f$ of a Gaussian dataset ($f^{-1}(\mathbf{x}) \sim \mathcal{N}(\cdot, \cdot)$), we follow \citet{li_scalable_2022} in using the Nonparanormal Skeptic \citep{liu_nonparanormal_2012} to first compute the sufficient statistics $\{\mathbf{S}_\ell\}$, before applying our model.

\subsection{Theory}
\label{sec:gmgm-bio-theory}

The aim of this paper is to find the maximum likelihood estimator (MLE) of Equation \ref{eq:gmgm-bio-model} efficiently.  A major bottleneck in prior work was the need to re-compute eigenvectors every iteration.  Appendix \ref{sec:gmgm-bio-max-likelihood-estimator} contains the proof for the following theorem:

\begin{theorem}
    \label{thm:gmgm-bio-partial-eigendecomposition}
    Let the rank-$r_\ell$ partial eigendecomposition of $\mathbf{S}_\ell$ be $\mathbf{V}_\ell^{(r_\ell)}\mathbf{E}_\ell^{(r_\ell)}\mathbf{V}_\ell^{(r_\ell), T}$.
    Then $\mathbf{V}_\ell^{(r_\ell)}$ are also eigenvectors of the maximum likelihood estimate $\hat{\mathbf{\Psi}}_\ell$.
\end{theorem}

This reduces the problem to finding the eigenvalues, which lack an exact solution but can be solved for iteratively due to the convexity of the problem.  The next theorem shows that this is a well-founded problem:

\begin{theorem}
    \label{thm:gmgm-bio-well-foundedness}
    There exists a unique MLE that estimates the precision matrix of the singular Kronecker-sum-structured normal distribution, provided $r_\ell \leq \mathrm{rank}\left[\mathbf{S}_\ell\right]$.

    Note that $r_\ell \leq d_{\backslash\ell}^\forall$ suffices.
\end{theorem}

Kronecker-structured models often suffer identifiability issues; in Appendix \ref{sec:gmgm-bio-identifiability} it is shown that there exists an identifiable parameterization.  In Appendix \ref{sec:gmgm-bio-fisher-information}, a hypothesis testing procedure is derived to allow one to deduce the statistically significant edges.

\begin{theorem}
    \label{thm:gmgm-bio-hypothesis-testing}
    Under the null hypothesis $\bigoplus_{\ell\in\gamma} \mathbf{\Psi}_\ell = \frac{1}{\left(\sigma^\gamma\right)^2}\mathbf{I}_{d^\gamma_\forall}$, the following distribution holds for each $\psi_{\ell_{ij}}$ independently:

    \begin{align*}
        \sqrt{\frac{\sum_{\gamma|\ell\in\gamma} d^\gamma_\forall\left(\sigma^\gamma\right)^4}{d_\ell}}\frac{\psi_{\ell_{ij}}}{2} &\mathrel{\dot{\sim}} \mathcal{N}\left(0, 1\right)
    \end{align*}
\end{theorem}

In practice, we have found this test to be quite weak, so thresholding may be preferred, especially if the graph is used as a preprocessing step (such as for clustering) rather than an end in and of itself.

\section{Algorithm}
\label{sec:gmgm-bio-algorithm}

\begin{figure*}[]
    \centering
    \includegraphics[width=0.65\textwidth]{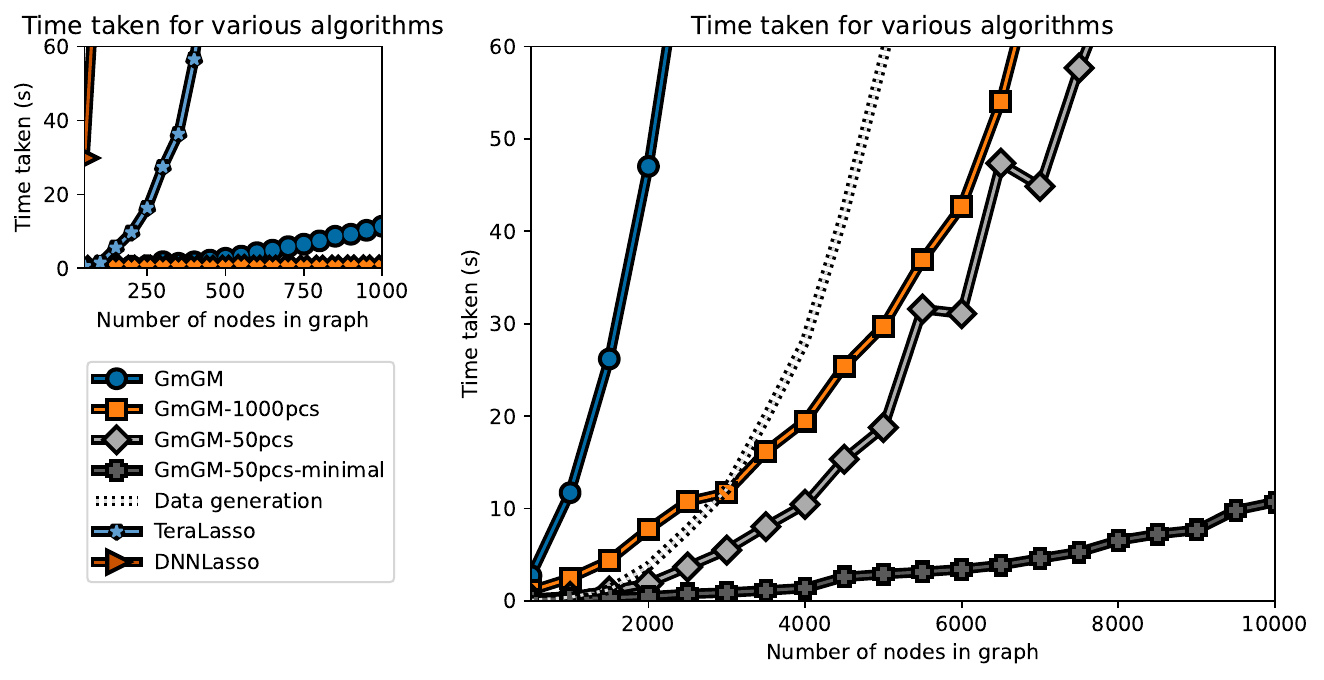}
    \caption{The running time of various multi-axis algorithms on synthetic data as the number of nodes in the graph increases.  The synthetic datasets were generated from a Kronecker-sum-structured normal distribution  The top-left focuses in on the graphs with less than 2000 nodes to be able to show TeraLasso \citep{greenewald_tensor_2019} and DNNLasso \citep{lin_dnnlasso_2024}.  `GmGM' refers to the old GmGM \citep{andrew_gmgm_2024}, whereas GmGM-Xpcs refers to our modifications, using only $X$ principal components.  GmGM-50pcs-minimal corresponds to the case in which the number of edges in the graph is proportional to the number of nodes (Assumption \ref{ass:linear-sparsity}); for the other models, the full graph was kept.}
    \label{fig:gmgm-bio-runtime}
\end{figure*}

Our maximum likelihood estimation algorithm (Appendix \ref{sec:gmgm-bio-algorithm-statement}, Algorithm \ref{alg:GmGM}) can be described as: calculate each $\mathbf{S}_\ell$, partially eigendecompose them, perform gradient descent on the eigenvalues\footnote{The formula for the gradient of the eigenvalues is derived in Appendix \ref{sec:gmgm-bio-max-likelihood-estimator}.}, eigen-recompose to get $\mathbf{\Psi}_\ell$, and then threshold to get the final solution.  However, creation of $\mathbf{S}_\ell$ requires $O(d_\ell^2)$ space - the goal is to only require $O(r_\ell d_\ell)$ space.  This can be avoided by noting that:

\begin{align*}
    \mathbf{S}_\ell &= \sum_\gamma \mathbf{S}_\ell^\gamma \\
    &= \sum_\gamma \mathrm{mat}_\ell\left[\mathcal{D}^\gamma\right] \mathrm{mat}_\ell\left[\mathcal{D}^\gamma\right]^T \\
    &= \begin{bmatrix}
        \mathrm{mat}_\ell\left[\mathcal{D}^1\right] & ... & \mathrm{mat}_\ell\left[\mathcal{D}^\Gamma\right]
    \end{bmatrix}\begin{bmatrix}
        \mathrm{mat}_\ell\left[\mathcal{D}^1\right]^T \\ ... \\ \mathrm{mat}_\ell\left[\mathcal{D}^\Gamma\right]^T
    \end{bmatrix} \\
    &\triangleq \mathrm{mat}_\ell\left[\{\mathcal{D}^\gamma\}_\gamma\right]\mathrm{mat}_\ell\left[\{\mathcal{D}^\gamma\}_\gamma\right]^T
\end{align*}

$\mathbf{S}_\ell$ can be thought of as the covariance matrix for a $d_\ell \times \sum_\gamma d_{\backslash\ell}^\gamma$ matrix ``$\mathrm{mat}_\ell\left[\{\mathcal{D}^\gamma\}_\gamma\right]$'', and hence its eigenvectors are precisely the singular vectors of $\mathrm{mat}_\ell\left[\{\mathcal{D}^\gamma\}_\gamma\right]$.  This allows avoiding ever constructing $\mathbf{S}_\ell$.

Such an approach has one major drawback; the Nonparanormal Skeptic works by supplying a rank-based $\mathbf{S}_\ell$ that, intuitively, represents the correlation matrix `as if the data had come from a Gaussian'.  To both avoid quadratic memory usage \textit{and} weaken the normality assumption, it requires a modification.  COCA \citep{han_high_2014} is used to calculate nonparanormal eigenvectors; it works by replacing $\mathrm{mat}_\ell\left[\{\mathcal{D}^\gamma\}_\gamma\right]$ with a matrix of ranks, mapping these ranks to a normal distribution, and then finding the singular vectors of this rank matrix.

The final practical issue to surmount is that of preserving input sparsity.  $\mathrm{mat}_\ell\left[\{\mathcal{D}^\gamma\}_\gamma\right]$ has $d_\forall$ elements, but in many contexts, such as transcriptomics, the dataset will be highly sparse; the majority of the input is full of zeros.  Ideally, the matrix would never be `densified'.  Using COCA, however, requires computing the matrix of ranks, which will be fully dense.  If unaddressed, this would enforce an $\Omega(d_\forall)$ minimum memory requirement.  \textbf{Solving this is necessary for scalability}; as will be seen in Section \ref{sec:gmgm-bio-scalability}, our algorithm can handle very large sparse datasets; if densified, they would be in excess of 30 GB of memory.  In Appendix \ref{sec:gmgm-bio-input-sparsity}, we show how to solve this using a rank-one update technique.

\section{Implementation}
\label{sec:gmgm-bio-implementation}

The algorithm is implemented in Python, using Numpy, Numba, SciPy, and Dask \citep{harris_array_2020, lam_numba_2015, virtanen_scipy_2020, rocklin_dask_2015}.  The experiments were run on a 13-inch MacBook Pro with an M1 chip and 8GB RAM.  23GB of hard disk storage were available when running the experiments; the million cell experiment (Section \ref{sec:gmgm-bio-scalability}) could not fit in 8GB of RAM, and thus much of the free hard disk space was used as swap memory.

\section{Results}
\label{sec:gmgm-bio-results}

The scalability of our method is demonstrated in Section \ref{sec:gmgm-bio-scalability}.  In Section \ref{sec:gmgm-bio-lncrna}, an exploration of the role of long non-coding RNA (lncRNA) in neural progenitor cells is performed.  In biology, it can be hard to know the ground truth; in Appendix \ref{sec:gmgm-bio-additional-experiments}, there is an additional comparison on a non-biological real-world toy dataset with known ground truth to validate our methodology.

\subsection{Scalability}
\label{sec:gmgm-bio-scalability}

In this section, the scalability of our method is demonstrated, comparing it to two other multi-axis methods on in-distribution synthetic data\footnote{See Appendix \ref{sec:gmgm-bio-additional-experiments} for precision/recall comparisons on said data.}.  The synthetic datasets were generated from a Kronecker-sum-structured normal distribution, and the ground truth networks to be learned were generated from a Barabasi-Albert distribution.  The Barabasi-Albert distribution was chosen as it is a power law distribution; many real world networks approximately follow a power law.  Figure \ref{fig:gmgm-bio-runtime} shows how much more scalable our modification makes GmGM, with it being the only method that can feasibly run on datasets of several thousand cells.

To emphasize this point, our method was run on a 1,248,980 cell PBMC dataset by \citet{yazar_single-cell_2022}, keeping the top 50 principal components ($\sim$50\% of the variance) and only the statistically significant edges (5890 of them).  Our method ran in less than two minutes on a personal computer (see Appendix \ref{sec:gmgm-bio-additional-experiments} for more details and a sanity-checking of the results).  It should be noted that prior multi-axis work is fundamentally not scalable to problems of this size, \textit{even with access to high performance computing clusters}.

On a theoretical level, the scalability of our method is achievable in large part due to its lower computational complexity.  For a dataset of $O(d)$ genes and cells, prior work required an $O(d^3)$ runtime and $O(d^2)$ space use; in contrast, ours requires $O(d^2)$ time and $O(d)$ space.  A derivation of the computational complexity, as well as its full statement for tensor-variate and shared-axis datasets, is given in Appendix \ref{sec:gmgm-bio-computational-complexity}.

\subsection{Application to single cell neuronal differentiation data}
\label{sec:gmgm-bio-lncrna}

\begin{figure*}[ht!]
    \centering
    \includegraphics[width=0.8\textwidth]{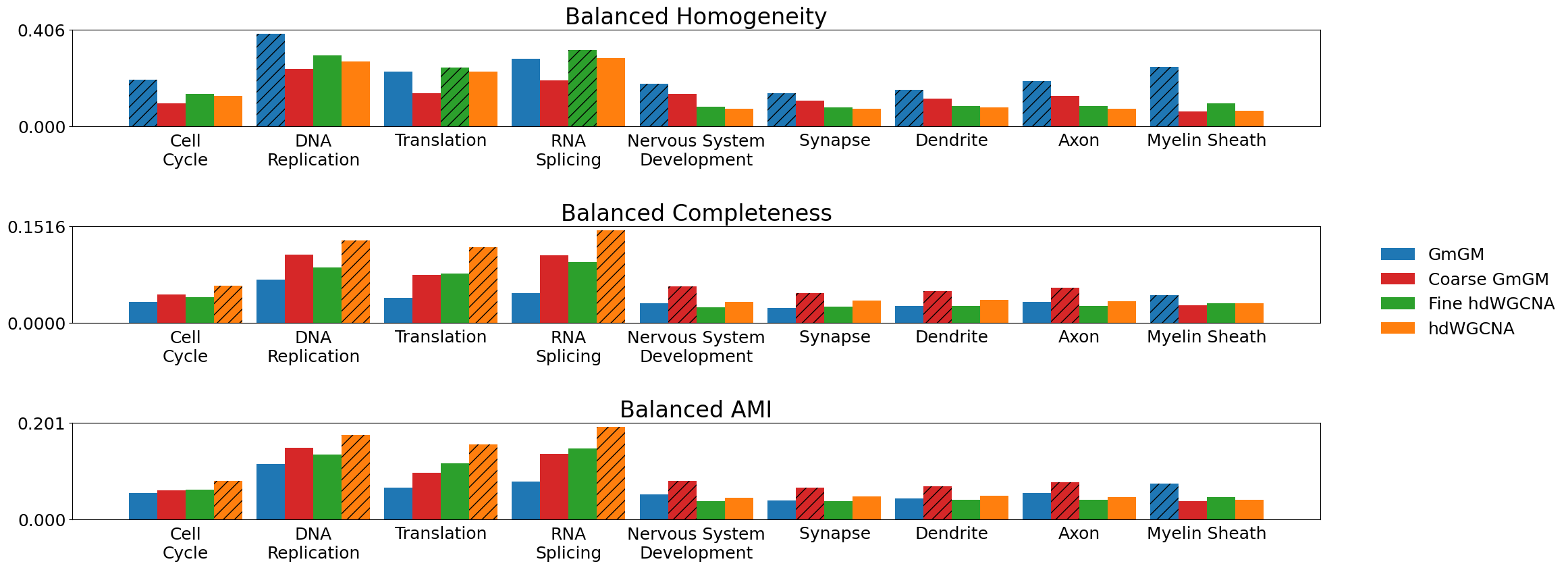}
    \caption{A comparison of the homogeneity and completeness of clusterings our method (GmGM) and hdWGCNA across various metrics over different sets of GO terms (see Appendix \ref{sec:gmgm-bio-go-term-cluster-metrics} for GO ids).  Homogeneity measures the tendency for modules to contain only genes of a given set of GO terms, whereas completeness measures the tendency for a set of GO terms to all be contained in a single module.  AMI stands for adjusted mutual information.  `Coarse GmGM' refers to a clustering on GmGM with 7 modules (the same as hdWGCNA), and `fine hdWGCNA' refers to a clustering on hdWGCNA with 119 modules (the same as GmGM), most of which were singletons.  These metrics were adjusted for chance following \citet{romano_adjusting_2016}.}
    \label{fig:gmgm-bio-go-term-cluster-metrics}
    \vspace{-10pt}
\end{figure*}

The previous section proved the scalability of our method; in this section, we will demonstrate that our method leads to real biological insights, benchmarking against the single-axis hdWGCNA \citep{morabito_hdwgcna_2023} methodology, which identifies gene co-expression networks.  {Method comparison for biological networks is difficult, as there is no ground truth; however, we will see that GmGM leads to networks with different structure than hdWGCNA, and that these differences result in new biological insights.  In particular, GmGM yields gene modules that are more specific to the relevant biology in our dataset.}  Both methods are applied to a scRNA-seq dataset of induced pluripotent stem cells (iPSCs), measured at 4 different time points as they develop into neural progenitor cells; the dataset consists of 6876 cells and 29,324 genes.  See Appendix \ref{sec:gmgm-bio-lncrna-data-details} for details on how the data was collected.

To generate the GmGM network, the top 100 principal components were kept and the nonparanormal skeptic was used. The top 5 strongest edges per gene/cell were retained.  To generate gene clusters (`modules') from the learned gene network, the Leiden \citep{traag_louvain_2019} algorithm was used with a `resolution' parameter of 3 (chosen to give at least 100 modules).  For the hdWGCNA network, `soft power' of 4 was determined to be optimal by the hdWGCNA package's TestSoftPower's method.  {We followed hdWGCNA's guidelines to ensure that the network and clustering obtained were the best possible, as defined by the authors of hdWGCNA}.  The hdWGCNA package's recommended clustering was used for the modules (only 8 modules); a finer clustering was attempted {with Leiden, although} this resulted in many singletons and very small clusters.  GmGM learns conditional dependency and hdWGCNA learns co-expression; these types of networks have fundamentally different topologies which is the likely cause of this difference.

The networks were evaluated by analyzing clusterings on them.  We identified 119 gene modules on the GmGM network, labeled m0, m1, m2, and so on. A short description of each is given in Appendix \ref{sec:gmgm-bio-lncrna-data-results} (Table \ref{tab:gmgm-bio-lnc-big-table}).  The modules range from having 37 genes (m118) to 600 genes (m2), with the exception of m1 with 1756 genes and m0 with 9047.  hdWGCNA identified 7 gene modules (ranging from 241 to 6399 genes each) - an eighth `unnassigned' module contained the 14,405 genes that were not assigned to any module by hdWGCNA.  A short description of each is given in Appendix \ref{sec:gmgm-bio-lncrna-data-results} (Table \ref{tab:gmgm-bio-hdwgcna-gene-modules}).  To limit the potentially confounding effects of cluster quantity on our analysis, an additional clustering was produced for each method.  These were an 8 module clustering on the GmGM network (`Coarse GmGM') and a 119 module clustering on the hdWGCNA network (`Fine hdWGCNA').

The networks and clusterings considered had very different properties.  The hdWGCNA network contained many singletons (the genes in the unassigned module) whereas GmGM contained none.  The GmGM network also had a higher transitivity (the tendency for connected genes to have neighbors in common) than hdWGCNA (0.055 vs 0.002).  The GmGM, coarse GmGM, fine hdWGCNA, and hdWGCNA clusterings achieved modularities (the tendency for a clustering to have more within-cluster than between-cluster edges) of 0.569, 0.600, 0.347, and 0.220, respectively.  

To compare methods in terms of their ability to extract biologically relevant information, we evaluated how well their clusterings aligned with a set of curated GO terms.  The GO terms were picked to either represent a fundamental biological process (such as the cell cycle or DNA replication) or the development of a key component of neurons (axons, dendrites, synapses, the myelin sheath, and more broadly nervous system development in general).  Appendix \ref{sec:gmgm-bio-go-term-cluster-metrics} (Table \ref{tab:gmgm-bio-go-term-choices}) contains the GO terms chosen as well as the rationale behind the choice.

The GO term comparison is given in Figure \ref{fig:gmgm-bio-go-term-cluster-metrics}.  It shows that our method's clusters are better aligned with neuron-specific GO terms, while hdWGCNA better captured non-specific processes, such as the cell cycle.  This is unsurprising; when we qualitatively inspect hdWGCNA modules (Appendix \ref{sec:gmgm-bio-lncrna-data-results}'s Table \ref{tab:gmgm-bio-hdwgcna-gene-modules}), only two of the eight modules are associated with GO terms specific to neurons (the `blue' and `red' clusters); GmGM, in contrast, had many (Appendix \ref{sec:gmgm-bio-lncrna-data-results}'s Table \ref{tab:gmgm-bio-lnc-big-table}).

In general, a qualitative analysis shows that every module of hdWGCNA (except the `unassigned' module) clearly relates to some biological process, although typically a broad process such as RNA binding.  In contrast, only 39 of GmGM's 119 modules show something of immediate biological interest. However, those that do tend to be more specific than hdWGCNA; GmGM can capture biological processes that are more relevant to the domain at hand (in this case, neural development).  For a module-by-module breakdown, see the tables of Appendix \ref{sec:gmgm-bio-lncrna-data-results}.

\begin{figure*}[]
    \centering
    \includegraphics[width=0.44\textwidth]{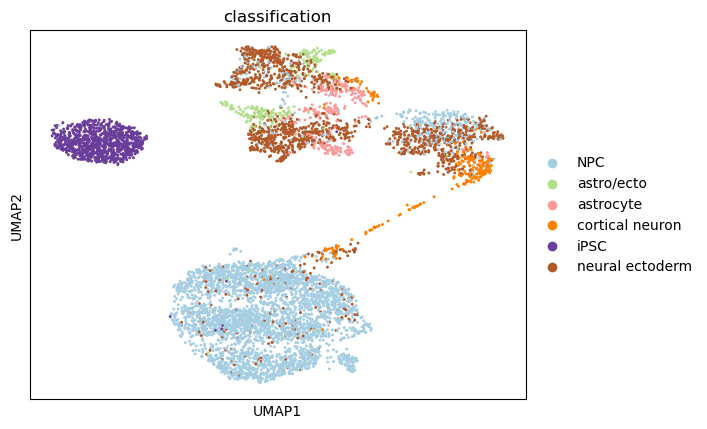}
    \includegraphics[width=0.40\textwidth]{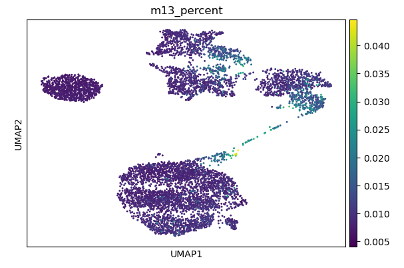}
    \caption{(left) The cell type classification produced by GmGM using the marker genes from Appendix \ref{sec:gmgm-bio-lncrna-markers}.  NPC stands for `neural progenitor cell', and `astro/ecto' denotes cells which could not be distinguished between astrocyte and neural ectoderm with the markers given. (right) The average gene expression of the genes in GmGM module m13; it tends to express itself only in and around cells that are identified as cortical neurons. (both) UMAP plots have been used for visualization; the overexpression of m13 in the suspected cortical neuron cells has also been validated statistically.}
    \vspace{-7pt}
    \label{fig:gmgm-bio-cell-types}
\end{figure*}

\begin{table*}[t!]
\centering
\begin{tabular}{l >{\raggedright\arraybackslash}p{10cm}}
\hline
Cell Type & Clusters (Overexpressed gene modules in parentheses) \\
\hline
iPSC & 
\mbox{0 (m1)}, \mbox{5 (m1)} \\ \arrayrulecolor{gray}\hline

Neural ectoderm & 
\mbox{1 (m90,m9)}; 
\mbox{3 (m6,m90,m118,m30)}; 
\mbox{6 (m14,m11,m90,m30)}; 
\mbox{15 (m13)} \\ \arrayrulecolor{gray}\hline

NPC & 
\mbox{2 (m25)}; 
\mbox{4 (m11,m19,m51,m25,m50,m62)}; 
\mbox{7 (m11,m50,m91)}; 
\mbox{8 (m11)}; 
\mbox{9 (m10)}; 
\mbox{10 (m14,m51)}; 
\mbox{12 (m11,m91,m50,m25,m51,m62)}; 
\mbox{14 (m110,m91,m11,m50,m20,m51,m10,m62,m25,m12)}; 
\mbox{16 (m51,m8)}; 
\mbox{17 (m16,m90,m62,m14)}; 
\mbox{19 (m11)}; 
\mbox{20 (m50)}; 
\mbox{21 (m11,m91,m62,m25,m51,m50)}; 
\mbox{23 (m14)} \\ \arrayrulecolor{gray}\hline

Astrocyte & 
\mbox{18 (m90,m3,m13)}; 
\mbox{22 (m9)} \\ \arrayrulecolor{gray}\hline

Cortical Neuron & 
\mbox{11 (m3,m13,m27)} \\ \arrayrulecolor{gray}\hline

Astrocyte or neural ectoderm & 
\mbox{13 (m90,m2,m6)} \\
\arrayrulecolor{black}\hline
\end{tabular}
\caption{The overexpressed modules for each GmGM cell cluster, using the marker genes from Appendix \ref{sec:gmgm-bio-lncrna-markers}'s Table \ref{tab:gmgm-bio-marker-genes} to identify cell types. Each cluster had many statistically significantly overexpressed modules; here, we are only showing the most extremely overexpressed modules.}
\label{tab:gmgm-bio-lncrna-cell-clusters}
\vspace{-7pt}
\end{table*}

Another benefit of GmGM is its ability to simultaneously learn a cell network; this is not available to hdWGCNA, which only learns the gene graph.  This cell network was clustered (Leiden, resolution=1) and each cluster was associated with a cell type based on its statistical association with a set of marker genes (Appendix \ref{sec:gmgm-bio-lncrna-markers}, Table \ref{tab:gmgm-bio-marker-genes}).  The resulting clustering is shown in Figure \ref{fig:gmgm-bio-cell-types}.

There was often an overlap between cell clusters and the expressions of genes in the gene modules; for example, the cortical neuron cells significantly express the modules m3, m13, and m27.  The relation of these clusters to cortical neurons is confirmed by their associated GO terms (including `forebrain development', `synapse organization', and `dendrite') and their dominant genes (the m27 sub-network is dominated by \textit{TBR1}, which is a key transcription factor for early cortical neurogenesis).  The overlap between GmGM's gene and cell clusters are given by Table \ref{tab:gmgm-bio-lncrna-cell-clusters}.

We conclude this section with a brief exploration of selected modules of the GmGM network.  The m90 module was especially interesting, as it had GO terms indicating a role in neurogenesis and transcription factors.  The most central genes for this module were \textit{NFIA} and \textit{NFIB} (known to be crucial for nervous system development) and the unnamed lncRNA \textit{ENSG00000243620}.  \textit{ENSG00000243620} and another unnamed lncRNA (\textit{ENSG00000286757}) were found in the GmGM network to be strongly connected
to the \textit{ZIC1}, \textit{ZIC2}, and \textit{ZIC4} transcription factors (also known to be important in neural development).  They lie on chromosomes 3 and 13, respectively; \textit{ENSG00000243620} is only 50 kilobases from \textit{ZIC1} and \textit{ZIC4}, whereas \textit{ENSG00000286757} is only 20 kilobases from \textit{ZIC2} (and \textit{ZIC5}, not in this module).  The closeness on the genome may indicate a regulatory relationship between \textit{ZIC} genes and the unnamed lncRNAs.

To see if this finding could be repeated, the two other neurogenesis-and-transcription-factor-associated modules were also investigated (m13 and m8).  m13 associated with the genes \textit{EBF2}, \textit{EBF3}, and \textit{NEUROD4}
(all of which are relevant to neural development).  The strongest connections in this module often involve the \textit{ELAVL1}, \textit{ELAVL3}, and \textit{ELAVL4} genes, one of which is between \textit{ELAVL2} and an unnamed lncRNA `\textit{ENSG00000283982}' 17 kilobases away. Finally, m8 is not particularly dominated by any one set of genes, but one of the strongest connections in the module is between \textit{SHISA9} and an unnamed lncRNA `\textit{ENSG00000262801}' 5000 base pairs away.

The hdWGCNA network was checked to see if it also identified relationships between the same lncRNAs.  The lncRNA results show that the m8 and m13 lncRNAs of interest are unimportant in the hdWGCNA network: each only connected to 5 other genes, on other chromosomes.  In contrast, the m90 lncRNAs of interest are both highly central to hdWGCNA's `brown' module, being in the top 4 most central (behind \textit{DMD} and \textit{MIR100HG}), and each having more than 100 connections to other genes, including the \textit{ZIC} genes.

\vspace{-10pt}
\section{Discussion}
\label{sec:gmgm-bio-discussion}

In this paper, we have developed a novel and general multi-axis methodology able to produce results for large-scale scRNA-seq datasets, and proven this by applying it to a million-cell dataset.  Furthermore, we show that our methodology is able to extract novel biological insights by finding promising connections between transcription factors and some poorly-studied lncRNA genes.

We compared our results to hdWGCNA, demonstrating that our method is able to create networks conducive to finer clustering, albeit with a higher proportion of clusters for which no immediate biological meaning is apparent.  The results from hdWGCNA lend credence to a potential ZIC-lncRNA relationship found by our method which warrants further study.

Finally, by simultaneously learning both the gene and cell graph, one gains multiple means of learning new biological insights and validating results.  This is a benefit our methodology has over more traditional single-axis methodologies like hdWGCNA, which only learn gene graphs.

\section{Acknowledgements}
\label{sec:gmgm-bio-acknowledgements}

Bailey Andrew is supported by the UKRI Engineering and Physical Sciences Research Council (EPSRC)
[EP/S024336/1].  David Westhead is supported in part by the National Institute for Health and Care Research (NIHR) Leeds Biomedical Research Centre (BRC) (NIHR203331). The views expressed are those of the author(s) and not necessarily those of the NHS, the NIHR or the Department of Health and Social Care.  James Poulter is supported by a UKRI Future Leaders Fellowship (MR/Y034325/1).

\section{Author Contributions}
\label{sec:gmgm-bio-author-contributions
}

Bailey Andrew (Conceptualization, Formal analysis, Methodology, Software, Writing - original draft), Erica Harris (Data curation, Resources, Investigation, Writing - original draft), James Poulter (Conceptualization, Supervision, Writing - review \& editing), David R. Westhead (Conceptualization, Supervision, Writing - review \& editing), and Luisa Cutillo (Conceptualization, Supervision, Writing - review \& editing).

\bibliography{references}

\newpage

\onecolumn

\appendix

\section{Evidence for the low-rank approximability assumption}
\label{sec:gmgm-bio-low-rank-approx-evidence}

\begin{figure}[h!]
    \centering
    \includegraphics[width=0.5\linewidth]{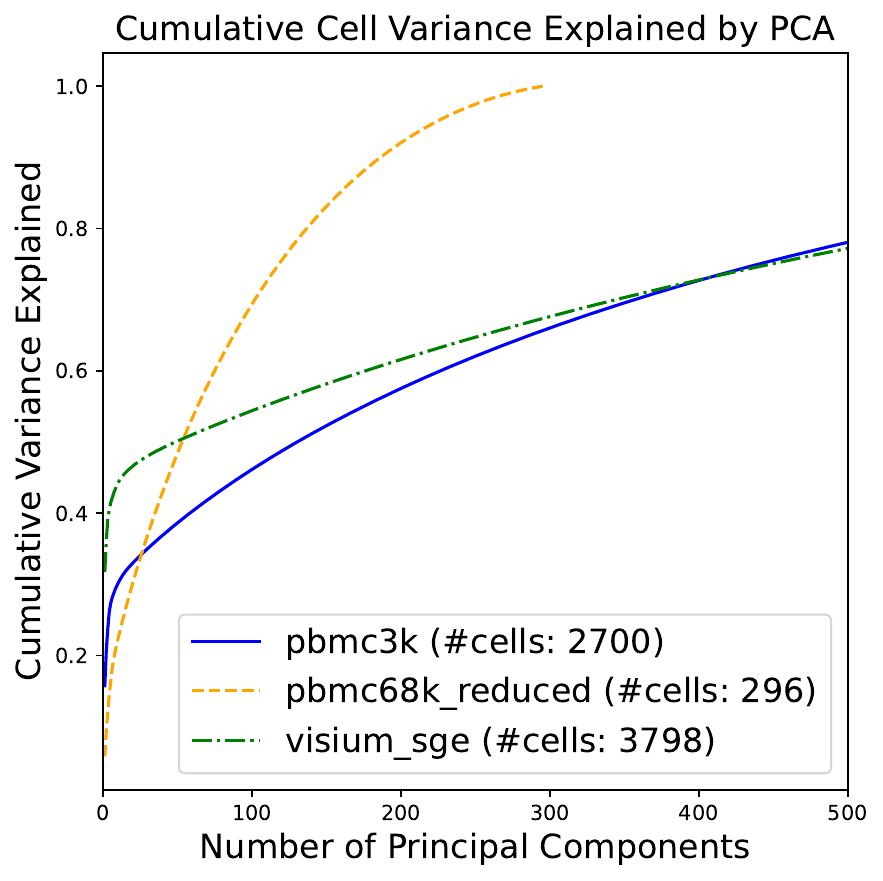}
    \caption{The cumulative variance explained by low-rank approximations (PCA) for three scRNA-seq datasets.}
    \label{fig:gmgm-bio-low-rank}
\end{figure}

The low-rank approximability assumption (Assumption \ref{ass:low-rank-assumption}) is frequently made in practice; whenever one performs PCA, this assumption is being made.  This section aims to demonstrate the reasonability of Assumption \ref{ass:low-rank-assumption}.  As PCA is likely to be most practitioner's primary encounter with low-rank approximations, we will frame our argument in terms of PCA.

The number of principal components to chose (and hence the rank of the low-rank approximation) is typically chosen by looking at the total explained variance of the components.  An explained variance of 100\% indicates perfect representation of the original matrix by the low-rank approximation.  One method of picking the amount of principal components is to simply pick an amount that gives a satisfactory recovery rate, such as 60\% or 80\%.  However, most components often represent random noise; we do not care about recovering them.  An alternative selection procedure is to look for `sharp corners' in the plot of cumulative variance, indicating the transition from informative to uninformative components.  The first approach is liable to pick too many components, whereas the second approach is liable to pick too few.

Figure \ref{fig:gmgm-bio-low-rank} displays the cumulative variance plot for three scRNA-seq datasets available in the ScanPy package and produced by 10X Genomics \citep{wolf_scanpy_2018}.  Figure \ref{fig:gmgm-bio-low-rank} shows that one does not need too many principal components to achieve good recovery; the explained variance approach requires a couple hundred components and the informative components approach requires less than a hundred for the larger datasets.  This validates our use of low-rank approximations.

\section{Derivation of the probability distribution}
\label{sec:gmgm-bio-prob-deriv}

The space of all full-rank $d\times d$ matrices has dimension $d^2$.  In contrast, the space of all rank-$r$ $d\times d$ matrices has dimension $2dr-r^2$.  Thus, any probability distribution over rank-$r$ matrices must be degenerate, or `singular', in the sense that the density is only positive on a subset of measure 0.  First, let's consider a multivariate Gaussian of independent variables:

\begin{align*}
    &&\mathbf{x} &\sim \mathcal{N}\left(\mathbf{0}, \mathrm{diag}\left[\boldsymbol{\sigma}\right]\right) \\
    \iff&& \mathrm{pdf}\left(\mathbf{x}\right) &= \frac{\sqrt{\prod_i \frac{1}{\sigma_i}}}{\left(2\pi\right)^{\frac{r}{2}}} e^{-\frac{1}{2}\mathbf{x}^T\mathrm{diag}\left[\boldsymbol{\sigma}\right]^{-1}\mathbf{x}} \tag{$r$ Independent Normal Variables}
\end{align*}

Suppose we then left-multiply $\mathbf{x}$ by an $r \times r$ orthonormal matrix $\mathbf{V}$ to introduce dependencies into our data while preserving Gaussianity.  Letting $\mathbf{V}\mathrm{diag}\left[\boldsymbol{\sigma}\right]^{-1}\mathbf{V} = \mathbf{\Psi}$, we get:

\begin{align*}
    \mathrm{pdf}\left(\mathbf{V}\mathbf{x}\right) &= \frac{\sqrt{\prod_i \frac{1}{\sigma_i}}}{\left(2\pi\right)^{\frac{r}{2}}} e^{-\frac{1}{2}\mathbf{x}^T\mathbf{V}\mathrm{diag}\left[\boldsymbol{\sigma}\right]^{-1}\mathbf{V}^T\mathbf{x}} \\
    &= \frac{\sqrt{\det \mathbf{\Psi}}}{\left(2\pi\right)^{\frac{r}{2}}}e^{-\frac{1}{2}\mathbf{x}^T\mathbf{\Psi}\mathbf{x}}
\end{align*}

Notice that $\mathbf{\Psi}$ is fully general; $\mathbf{V}$ are the eigenvectors and $\boldsymbol{\sigma}^{-1}$ are the eigenvalues.  To achieve a singular probability distribution, we multiply by a $d \times r$ orthonormal matrix $\mathbf{U}$ ($d > r$) instead.  The only noticeable change in the distribution's specification is the use of the pseudo-determinant (the product of only the non-zero eigenvalues of $\mathbf{\Psi}=\mathbf{U}\mathrm{diag}\left[\boldsymbol{\sigma}\right]\mathbf{U}^T$).

\begin{align*}
    \mathrm{pdf}\left(\mathbf{U}\mathbf{x}\right) &= \frac{\sqrt{\det^\dagger \mathbf{\Psi}}}{\left(2\pi\right)^{\frac{r}{2}}}e^{-\frac{1}{2}\mathbf{x}^T\mathbf{\Psi}\mathbf{x}}
\end{align*}

The above formulas were for vector-variate $\mathbf{x}$; for tensor-variate $\mathcal{X}$ we can use the identity $\mathrm{vec} \left[\mathcal{X} \times_1 \mathbf{V}_1 \times_2 \cdots \times_K \mathbf{V}_K\right] = \left(\bigoplus_{\ell=K}^1 \mathbf{V}_\ell\right)\mathrm{vec}\left[\mathcal{X}\right]$.  It is then straightforward to present a Kronecker-sum-structured version of this distribution, the `singular Kronecker-sum-structured normal distribution'.  We let $r_\ell$ be the rank chosen for each axis, and $r_\forall^\gamma$ be the product of ranks of all axes occurring in modality $\gamma$.

\begin{align*}
    \mathrm{pdf}_{\text{normal}}(\mathbf{\mathcal{X}}^\gamma) &= \frac{\sqrt{\mathrm{det}^\dagger\left(\bigoplus_{\ell \in \gamma}\mathbf{\Psi}_\ell\right)}}{\left(2\pi\right)^{\frac{r_\forall^\gamma}{2}}} e^{-\frac{1}{2}\mathrm{vec}\left[\mathcal{X}^\gamma\right]^T\left(\bigoplus_{\ell \in \gamma}\mathbf{\Psi}_\ell\right)\mathrm{vec}\left[\mathcal{X}^\gamma\right]} \\
    &= \frac{\sqrt{\mathrm{det}^\dagger\left(\bigoplus_{\ell \in \gamma}\mathbf{\Psi}_\ell\right)}}{\left(2\pi\right)^{\frac{r_\forall^\gamma}{2}}} e^{-\frac{1}{2}\sum_{\ell\in\gamma} \mathrm{tr}\left[\mathbf{S}_\ell^\gamma\mathbf{\Psi}_\ell\right]}
\end{align*}

When given a multi-modal dateset, we have:

\begin{align*}
    \mathrm{pdf}_{\text{normal}}\left(\left\{\mathbf{\mathcal{X}}^\gamma\right\}\right) &=  \prod_\gamma \frac{\sqrt{\mathrm{det}^\dagger\left(\bigoplus_{\ell \in \gamma}\mathbf{\Psi}_\ell\right)}}{\left(2\pi\right)^{\frac{r_\forall^\gamma}{2}}} e^{-\frac{1}{2}\sum_{\ell\in\gamma} \mathrm{tr}\left[\mathbf{S}_\ell^\gamma\mathbf{\Psi}_\ell\right]}
\end{align*}


\section{Identifiability}
\label{sec:gmgm-bio-identifiability}

The Kronecker sum decomposition is typically parameterized by a non-identifiable representation of $\bigoplus_{\ell\in\gamma} \mathbf{\Psi}_\ell$.  Let $\{c_\ell\}_{\ell\in\gamma}$ be a set of constants such that $\sum_{\ell\in\gamma} c_\ell = 0$, then: \[\bigoplus_{\ell\in\gamma} \left(\mathbf{\Psi}_\ell + c_\ell \mathbf{I}_{d_\ell}\right) = \bigoplus_{\ell\in\gamma}\mathbf{\Psi}_\ell + \left(\sum_{\ell\in\gamma}c_\ell\right)\mathbf{I}_{d_\forall^\gamma} = \bigoplus_{\ell\in\gamma}\mathbf{\Psi}_\ell\]

This is not unique to the Kronecker sum model; all Kronecker-separable models experience some form of non-identifiability.  However, this non-identifiability is rather benign.  Firstly, it only affects the diagonals of our estimates; the off-diagonals (which encode the graph structure) are unaffected.  Secondly, \cite{greenewald_tensor_2019} derived the following identifiable parameterization for the single-modality case:

\begin{align*}
    \mathbf{\Omega}^\gamma &= \left(\bigoplus_{\ell\in\gamma}\tilde{\mathbf{\Psi}}_\ell\right) + \tau^\gamma \mathbf{I}_{d_\forall^\gamma} \\
    where &\hspace{5pt}\mathrm{tr}\left[\tilde{\mathbf{\Psi}}_\ell\right] = 0
\end{align*}

We can map the standard parameterization to this one with ease.  As a shorthand, let $\tau_\ell = \mathrm{tr}\left[\mathbf{\Psi}_\ell\right]$:

\begin{align*}
    \tau^\gamma &= \sum_{\ell\in\gamma} d_{\backslash\ell}^\gamma \tau_\ell \\
    \tilde{\mathbf{\Psi}}_\ell &= \mathbf{\Psi}_\ell - \tau_\ell\mathbf{I}_{d_\ell}
\end{align*}

However, \citeauthor{greenewald_tensor_2019}'s parameterization does not apply to the multi-modality case.  Ideally, we would wish to parameterize our dataset by the parameters $\{\tau^\gamma\}_\gamma$ and $\{\tilde{\mathbf{\Psi}}_\ell\}_\ell$, but there are linear dependencies in our parameterization which may prevent this.

\begin{amsthmexample}
    \label{ex:first-lindep-param}
    Consider the bi-modal dataset where $\gamma_1 = (\ell_1, \ell_2)$ and $\gamma_2 = (\ell_1, \ell_2, \ell_3)$.  There is a linear dependency in our trace parameters:
    
    \begin{align*}
        \tau^{\gamma_1} &= d_2 \tau_1 + d_1\tau_2 \\
        \tau^{\gamma_2} &= d_2 d_3 \tau_1 + d_1d_3\tau_2 + d_2d_3\tau_3 \\
        &= d_3\tau^{\gamma_1} + d_2d_3\tau_3
    \end{align*}
\end{amsthmexample}

In Example \ref{ex:first-lindep-param}, the parameterization is still valid - we need both $\tau^{\gamma_1}$ and $\tau^{\gamma_2}$ to identify the parameterization.  However, we also get the trace of the third axis, $\tau_3$, as a byproduct.  Unlike in the single modality case, in this case $\mathbf{\Psi}_3$ is fully-specified.

\begin{amsthmexample}
    \label{ex:gmgm-bio-second-lindep-param}
    Consider the tri-modal dataset where $\gamma_1 = (\ell_1, \ell_2)$, $\gamma_2 = (\ell_3, \ell_4)$, and $\gamma_3 = (\ell_1, \ell_2, \ell_3, \ell_4)$.  Then $\tau^{\gamma_3} = d_3d_4\tau^{\gamma_1} + d_1d_2\tau^{\gamma_2}$.
\end{amsthmexample}

Example \ref{ex:gmgm-bio-second-lindep-param} shows that \citeauthor{greenewald_tensor_2019}'s parameterization can be over-determined in the multi-modal case.  GmGM can be used on arbitrarily complicated multi-tensor-variate data; how can we know when a parameterization is over-determined or not?

Note that $\{\tau^\gamma\}$ are always a linear combination of $\tau_\ell$.  Letting $d_{\backslash\ell}^\gamma = 0$ if $\ell \notin \gamma$, we can express this linear relationship as:

\begin{align*}
    \begin{bmatrix}
        \tau^{\gamma_1} \\ \vdots \\ \tau^{\gamma_\Gamma}
    \end{bmatrix} &= \mathbf{M}_{\boldsymbol{\gamma}}\begin{bmatrix}
        \tau_{\ell_1} \\ \vdots \\ \tau_{\ell_L}
    \end{bmatrix} \\
    \mathbf{M}_{\boldsymbol{\gamma}} &= \begin{bmatrix}
        d^{\gamma_1}_{\backslash\ell_1} & \cdots & d^{\gamma_1}_{\backslash\ell_L} \\
        \vdots & \ddots & \vdots \\
        d^{\gamma_\Gamma}_{\backslash\ell_1} & \cdots & d^{\gamma_\Gamma}_{\backslash\ell_L}
    \end{bmatrix}
\end{align*}

Depending on the modalities and the lengths of each axis, $\mathbf{M}_{\boldsymbol{\gamma}}$ could be practically any $\Gamma \times L$ matrix with nonnegative integer coefficients, so it is hard to say much about it in general.  However, we can make a few observations.  In particular, $\mathbf{M}_{\boldsymbol{\gamma}}$ describes how our identifiable quantities $\{\tau^{\gamma}\}_\gamma$ are linked through the latent factors $\{\tau_\ell\}_\ell$.  Zero-eigenvectors $\mathbf{x}$ of $\mathbf{M}_{\boldsymbol{\gamma}}$ correspond to the non-identifiabilities, as $\mathbf{M}_{\boldsymbol{\gamma}} \left(\mathbf{x}+\mathbf{y}\right) = \mathbf{M}_{\boldsymbol{\gamma}} \mathbf{y}$.  Our parameter vector $\begin{bmatrix}\tau^{\gamma_1} & \cdots & \tau^{\gamma_\Gamma}\end{bmatrix}^T$ must lie in the \textit{rowspace} of $\mathbf{M}_{\boldsymbol{\gamma}}$.

For most datasets, $\mathbf{M}_{\boldsymbol{\gamma}}$ has maximal row rank, in which case the approach of \citeauthor{greenewald_tensor_2019} is unchanged, and we can parameterize by $\{\tau^\gamma\}_\gamma$.  However, there are sometimes linear dependencies in $\{\tau^\gamma\}_\gamma$, causing us to choose a new, smaller parameterization that functions as a basis for this rowspace.  Table \ref{tab:gmgm-bio-rowspace-basis} gives example parameterizations of the most common scenarios along with some unlikely-but-demonstrative ones.

\begin{sidewaystable}[htbp!]
\centering
\scalebox{1.2}{
\begin{tabular}{ c c c c }
 \hline\centering
 Dataset & $\mathbf{M}_{\boldsymbol{\gamma}}$ & Basis Vectors & Parameterization \\\hline \hline
 $\begin{matrix}
     \\\gamma_1 = (\ell_1, \ell_2)
 \end{matrix}$ & $\begin{bmatrix}
     d_{\ell_2} & d_{\ell_1}
 \end{bmatrix}$ & $\begin{bmatrix}\mathbf{e}_1\end{bmatrix} = \begin{bmatrix}
     d_{\ell_2} & d_{\ell_1}
 \end{bmatrix}$
 &
 $\mathbf{\Omega}^{\gamma_1} = t_1 \mathbf{I}_{d^{\gamma_1}_\forall} + \bigotimes_{\ell\in\gamma_1} \Tilde{\mathbf{\Psi}}_\ell$
 \\  \hline 

$\begin{matrix}
     \\\gamma_1 = (\ell_1, ..., \ell_L)
 \end{matrix}$ & $\begin{bmatrix}
     d_{\backslash\ell_1} & ... & d_{\backslash\ell_L}
 \end{bmatrix}$ & $\begin{bmatrix}\mathbf{e}_1\end{bmatrix} = \begin{bmatrix}
     d_{\backslash\ell_1} & ... & d_{\backslash\ell_L}
 \end{bmatrix}$
 &
 $\mathbf{\Omega}^{\gamma_1} = t_1 \mathbf{I}_{d^{\gamma_1}_\forall} + \bigotimes_{\ell\in\gamma_1} \Tilde{\mathbf{\Psi}}_\ell$
 \\  \hline 
 
 $\left\{\begin{matrix}\\\gamma_i = (\ell_0, \ell_i)\\\hspace{5pt}\end{matrix}\right\}_{1\leq i \leq \Gamma}$ & $\begin{bmatrix}
     d_{\ell_1} & d_{\ell_0} & 0 & ...\\
     d_{\ell_2} & 0 & d_{\ell_0} & ... \\
     d_{\ell_3} & 0 & 0 & ... \\
     \vdots & \vdots & \vdots & \ddots
 \end{bmatrix}$ & $
    \begin{bmatrix}
        \mathbf{e}_1 \\
        \vdots
    \end{bmatrix} = \begin{bmatrix}
     d_{\ell_1} & d_{\ell_0} & 0 & ...\\
     d_{\ell_2} & 0 & d_{\ell_0} & ... \\
     d_{\ell_3} & 0 & 0 & ... \\
     \vdots & \vdots & \vdots & \ddots
 \end{bmatrix}
    $ & $\begin{bmatrix}
        \mathbf{\Omega}^{\gamma_1} \\
        \vdots
    \end{bmatrix} = \begin{bmatrix}t_1 \mathbf{I}_{d^{\gamma_1}_\forall} + \bigotimes_{\ell\in\gamma_1} \Tilde{\mathbf{\Psi}}_\ell \\ \vdots\end{bmatrix}$ \\  \hline 

$\begin{matrix}\\\gamma_1 = (\ell_1, \ell_2) \\ \gamma_2 = (\ell_2, \ell_3) \\ \gamma_3 = (\ell_3, \ell_1)\end{matrix}$ & $\begin{bmatrix}
     d_{\ell_2} & d_{\ell_1} & 0 \\
     0 & d_{\ell_3} & d_{\ell_2} \\
     d_{\ell_3} & 0 & d_{\ell_1}
 \end{bmatrix}$ & $
    \begin{bmatrix}
        \mathbf{e}_1 \\
        \mathbf{e}_2 \\
        \mathbf{e}_3
    \end{bmatrix} = \begin{bmatrix}
       d_{\ell_2} & d_{\ell_1} & 0 \\
       0 & d_{\ell_3} & d_{\ell_2} \\
       d_{\ell_3} & 0 & d_{\ell_1}
    \end{bmatrix}
    $ & $\begin{bmatrix}
        \mathbf{\Omega}^{\gamma_1} \\
        \mathbf{\Omega}^{\gamma_2} \\
        \mathbf{\Omega}^{\gamma_3}
    \end{bmatrix} = \begin{bmatrix}t_1 \mathbf{I}_{d^{\gamma_1}_\forall} + \bigotimes_{\ell\in\gamma_1} \Tilde{\mathbf{\Psi}}_\ell \\
    t_2 \mathbf{I}_{d^{\gamma_2}_\forall} + \bigotimes_{\ell\in\gamma_2} \Tilde{\mathbf{\Psi}}_\ell
    \\
    t_3 \mathbf{I}_{d^{\gamma_3}_\forall} + \bigotimes_{\ell\in\gamma_3} \Tilde{\mathbf{\Psi}}_\ell
    \end{bmatrix}$ \\  \hline

$\begin{matrix}\\\gamma_1 = (\ell_1, \ell_2) \\ \gamma_2 = (\ell_1, \ell_2, \ell_3) \\ \gamma_3 = (\ell_2, \ell_3, \ell_4) \\\hspace{5pt}\end{matrix}$ & $\begin{bmatrix}
     d_{\ell_2} & d_{\ell_1} & 0 & 0\\
     d_{\ell_2}d_{\ell_3} & d_{\ell_1}d_{\ell_3} & d_{\ell_1}d_{\ell_2} & 0 \\
     0 & d_{\ell_3}d_{\ell_4} & d_{\ell_2}d_{\ell_4} & d_{\ell_2}d_{\ell_3}
 \end{bmatrix}$ & $
    \begin{bmatrix}
        \mathbf{e}_1 \\
        \mathbf{e}_2 \\
        \mathbf{e}_3
    \end{bmatrix} = \begin{bmatrix}
     d_{\ell_2} & d_{\ell_1} & 0 & 0\\
     d_{\ell_2}d_{\ell_3} & d_{\ell_1}d_{\ell_3} & d_{\ell_1}d_{\ell_2} & 0 \\
     0 & d_{\ell_3}d_{\ell_4} & d_{\ell_2}d_{\ell_4} & d_{\ell_2}d_{\ell_3}
 \end{bmatrix}
    $ & $\begin{bmatrix}
        \mathbf{\Omega}^{\gamma_1} \\
        \mathbf{\Omega}^{\gamma_2} \\
        \mathbf{\Omega}^{\gamma_3}
    \end{bmatrix} = \begin{bmatrix}t_1 \mathbf{I}_{d^{\gamma_1}_\forall} + \bigotimes_{\ell\in\gamma_1} \Tilde{\mathbf{\Psi}}_\ell \\
    t_2 \mathbf{I}_{d^{\gamma_2}_\forall} + \bigotimes_{\ell\in\gamma_2} \Tilde{\mathbf{\Psi}}_\ell
    \\
    t_3 \mathbf{I}_{d^{\gamma_3}_\forall} + \bigotimes_{\ell\in\gamma_3} \Tilde{\mathbf{\Psi}}_\ell
    \end{bmatrix}$ \\  \hline

    $\begin{matrix}\\\gamma_1 = (\ell_1, \ell_2) \\ \gamma_2 = (\ell_2, \ell_1)\end{matrix}$ & $\begin{bmatrix}
     d_{\ell_2} & d_{\ell_1} \\
     d_{\ell_2} & d_{\ell_1}
 \end{bmatrix}$ & $
    \begin{bmatrix}
        \mathbf{e}_1
    \end{bmatrix} = \begin{bmatrix}
       d_{\ell_2} & d_{\ell_1} \\
    \end{bmatrix}
    $ & \cellcolor{gray!25}$\begin{bmatrix}
        \mathbf{\Omega}^{\gamma_1} \\
        \mathbf{\Omega}^{\gamma_2}
    \end{bmatrix} = \begin{bmatrix}t_1 \mathbf{I}_{d^{\gamma_1}_\forall} + \bigotimes_{\ell\in\gamma_1} \Tilde{\mathbf{\Psi}}_\ell \\
    t_1 \mathbf{I}_{d^{\gamma_2}_\forall} + \bigotimes_{\ell\in\gamma_2} \Tilde{\mathbf{\Psi}}_\ell\end{bmatrix}$ \\  \hline

    $\begin{matrix}\\\gamma_1 = (\ell_1, \ell_2) \\ \gamma_2 = (\ell_3, \ell_4) \\ \gamma_3 = (\ell_1, \ell_2, \ell_3, \ell_4)\\\hspace{5pt}\end{matrix}$ & $\begin{bmatrix}
     d_{\ell_2} & d_{\ell_1} & 0 & 0 \\
     0 & 0 & d_{\ell_4} & d_{\ell_3} \\
     d_{\backslash\ell_1} & d_{\backslash\ell_2} & d_{\backslash\ell_3} & d_{\backslash\ell_4} &
 \end{bmatrix}$ & $
    \begin{bmatrix}
        \mathbf{e}_1 \\
        \mathbf{e}_2 \\
    \end{bmatrix} = \begin{bmatrix}
     d_{\ell_2} & d_{\ell_1} & 0 & 0 \\
     0 & 0 & d_{\ell_4} & d_{\ell_3}
 \end{bmatrix}
    $ & $
    \begin{bmatrix}t_1 \mathbf{I}_{d^{\gamma_1}_\forall} + \bigotimes_{\ell\in\gamma_1} \Tilde{\mathbf{\Psi}}_\ell \\
    t_2 \mathbf{I}_{d^{\gamma_2}_\forall} + \bigotimes_{\ell\in\gamma_2} \Tilde{\mathbf{\Psi}}_\ell \\
    (d_{\ell_3}d_{\ell_4}t_1 + d_{\ell_1}d_{\ell_2}t_2) \mathbf{I}_{d^{\gamma_2}_\forall} + \bigotimes_{\ell\in\gamma_2} \Tilde{\mathbf{\Psi}}_\ell
    \end{bmatrix}$ \cellcolor{gray!25}\\
    \hline 
\end{tabular}
}
\caption{Identifiable parameterizations for several datasets.  Here, $t_i$ is the coefficient corresponding to basis vector $\mathbf{e}_i$, i.e, $t_i$ is one of the parameters, along with $\Tilde{\mathbf{\Psi}}_\ell$.  Often, $t_i = \tau^{\gamma_i}$.  Note that many different bases are possible; our selection here is somewhat arbitrary.  As can be seen, typically $\mathbf{M}_\gamma$ itself can be used for the basis vectors, but sometimes there can be linear dependencies in the data.  The first three rows in this matrix correspond to the cases most likely to arise in practice; a matrix-variate dataset, a tensor-variate dataset, and a multi-modality matrix-variate dataset with a consistently shared axis.  Cells with a grey background indicate a smaller parameterization than that of \citeauthor{greenewald_tensor_2019}.}
\label{tab:gmgm-bio-rowspace-basis}
\end{sidewaystable}

In practice, the choice of parameterization is not too important; it does not have an effect on how we go about finding solutions, nor does it have an effect on the conditional dependency graph.  However, it will be useful for hypothesis testing, which we will discuss in the next section.

\newpage
\section{The algorithm}
\label{sec:gmgm-bio-algorithm-statement}

\begin{algorithm}[th!]
\begin{algorithmic}[1]
\Require Dataset $\{\mathcal{D}^\gamma\}$, minimum eigenvalues $\{\epsilon_\ell\}$, sparsity $\{n_\ell\}$, principal components $\{r_\ell\}$
\Ensure $\{\mathbf{\Psi}_\ell\}$
\For $1 \leq \ell \leq K$
\State $\mathbf{D}_\ell \gets \begin{bmatrix}\mathrm{mat}_{\ell}\left[\mathcal{D}^{\gamma_1}\right] & \cdots & \mathrm{mat}_{\ell}\left[\mathcal{D}^{\gamma_\Gamma}\right] \end{bmatrix}$
\State $\mathbf{V}_\ell^{(r_{\ell})} \gets \text{top }r_\ell\text{ left singular vectors of }\mathbf{D}_\ell$ \Comment{Theorem \ref{thm:gmgm-bio-partial-eigendecomposition}}
\State $\mathbf{E}_\ell^{(r_{\ell})} \gets \left(\text{top }r_\ell\text{ singular values of }\mathbf{D}_\ell \right)^2$
\EndFor
\State $\mathbf{\Lambda}_\ell^{0, (r_{\ell})} \gets \begin{bmatrix}
    \frac{1}{E_{\ell_{11}}^{(r_{\ell})}} & ... & \frac{1}{E_{\ell_{r_\ell r_\ell}}^{(r_{\ell})}}
\end{bmatrix}^T$
\State $\mu \gets 1$
\While \text{not converged}
    \While $\exists\gamma \sum_{\ell\in\gamma}\mathrm{min}\mathbf{\Lambda}^{t+1,(r_{\ell})}_\ell < \epsilon_\ell$ \Comment{Ensure smallest eigenvalue is at least $\epsilon_\ell$}
    \State Decrease $\mu$ if needed
    \For $1 \leq \ell \leq K$ 
    \Comment{Gradient descent on eigenvalues}
        \State $\scriptstyle \mathbf{\Lambda}^{t+1,(r_{\ell})}_{\ell} \gets \mathbf{\Lambda}^{t,(r_{\ell})}_{\ell} - \frac{\mu}{2} \left(\mathbf{E}_\ell^{(r_\ell)} - \sum_{\gamma|\ell\in\gamma}\mathrm{tr}^{k^\gamma_{<\ell}}_{k^\gamma_{>\ell}}\left[\left(\bigoplus_{\ell'\in\gamma}\mathbf{\Lambda}_{\ell'}^{t,(r_{\ell'})}\right)^{-1}\right]\right)$
    \EndFor
    \EndWhile
\EndWhile
\For $1 \leq \ell \leq K$
    \State $\mathbf{\Psi}_\ell \gets \mathrm{thresh}_{n_\ell}\left[\mathbf{V}_\ell^{(r_{\ell})} \mathbf{\Lambda}_\ell^{t,(r_{\ell})} \mathbf{V}_\ell^{T, (r_\ell)}\right]$ \Comment{Threshold simultaneously with recomposition}
\EndFor
\end{algorithmic}
\caption{\textbf{The improved GmGM algorithm}}
\label{alg:GmGM}
\end{algorithm}

\section{Computational complexity}
\label{sec:gmgm-bio-computational-complexity}

Calculating the top $r_\ell$ singular vectors for each axis's $d_\ell \times d_{\forall\backslash\ell}$  matrix $\mathbf{D}_\ell$ takes $O(\sum_\ell r_\ell d_{\forall | \ell})$ time and $O(\sum_\ell r_\ell d_\ell)$ space.  Per iteration, the convex optimization step requires computing $\left(\bigoplus_{\ell\in\gamma}\mathbf{\Lambda}_\ell^{(r_\ell)}\right)^{-1}$, which takes $O(\sum_\gamma \prod_{\ell \in \gamma} r_\ell)$ time and space.  The eigen-recomposition step takes $O(\sum_\ell r_\ell d_\ell^2)$ time, with thresholding talking $O(\sum d_\ell^2)$ time - although our implementation when not thresholding by significance takes $O(\sum_\ell d_\ell^2 \log n_\ell)$ to minimize the number of times we pass through the data.
As we threshold and eigen-recompose simultaneously, it requires $O(n_\ell)$ space.

The Nonparanormal Skeptic requires ranking the data, taking $O(\sum_\ell d_{\forall|\ell} \log d_\ell)$ time and $O(sd_\forall)$ space, where $s$ is the sparsity of the input dataset.  This leads to an algorithm with the following complexity:

\begin{align*}
    O\left(\sum_\gamma \prod_{\ell \in \gamma} r_\ell + \sum_\ell \left(r_\ell d_{\forall | \ell} + r_\ell d_\ell^2 \textcolor{gray}{\log n_\ell} + \textcolor{blue}{d_{\forall | \ell} \log d_\ell}\right)\right) \tag{time}\\
    O\left(\sum_\gamma \prod_{\ell \in \gamma} r_\ell + \sum_\ell \left(r_\ell d_\ell + n_\ell\right) + \textcolor{blue}{sd_\forall}\right) \tag{space}
\end{align*}

Blue terms arise only if using the Nonparanormal Skeptic, and the grey term is a product of our implementation; it could be dropped, but would in practice make the algorithm slower.  As this formula is complicated, we will give an example of a special case.  In the case of matrix-variate data with $n_\ell \in O(d_\ell)$, and constant $r_\ell$ (the case for many real-world datasets), as well as for simplicity assuming $d = d_{\ell_1} \approx d_{\ell_2}$ and thresholding by statistical significance, these become:

\begin{align*}
    O\left(d^2 + \textcolor{blue}{d^2 \log d}\right) \tag{time}\\
    O\left(d + \textcolor{blue}{sd^2}\right) \tag{space}
\end{align*}

In such a scenario, the optimal runtime would be $\Omega\left(sd^2\right)$ as it must visit every element at least once.  Use of the Nonparanormal Skeptic increases the asymptotics, but as long as the dataset is sparse such an increase is modest.

\section{Kronecker product and stridewise-blockwise trace lemmas}
\label{sec:gmgm-bio-kp-lemmas}

This section contains the derivation of a few lemmas pertaining to Kronecker products.  The stridewise-blockwise trace often appears in derivatives involving Kronecker products, and thus is integral to our method as well.  The first two are known from prior work.

\begin{lemma*}[\citet{dahl_network_2013} Lemma 1; Extraction Property]
    \begin{align*}
        \mathrm{tr}\left[\left(\mathbf{I}_{a\times a} \otimes \mathbf{X} \otimes \mathbf{I}_{b\times b} \right)\mathbf{M}\right] &= \mathrm{tr}\left[\mathbf{X}\mathrm{tr}^a_b\left[\mathbf{M}\right]\right]
    \end{align*}
\end{lemma*}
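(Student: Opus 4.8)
The plan is to reduce the identity directly to the \emph{definition} of the stridewise-blockwise trace by exploiting bilinearity, so that no genuine computation is required. Let $c$ be the size of $\mathbf{X}$ and expand it in the standard basis, $\mathbf{X} = \sum_{p,q=1}^{c} X_{pq}\mathbf{J}^{pq}$. Since the Kronecker product is bilinear, $\mathbf{I}_a \otimes \mathbf{X} \otimes \mathbf{I}_b = \sum_{p,q} X_{pq}\bigl(\mathbf{I}_a \otimes \mathbf{J}^{pq} \otimes \mathbf{I}_b\bigr)$.

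First I would use linearity of the trace to pull this finite sum outside of $\mathrm{tr}\bigl[(\mathbf{I}_a \otimes \mathbf{X} \otimes \mathbf{I}_b)\mathbf{M}\bigr]$, and then apply the cyclic property of the trace term-by-term to rewrite $\mathrm{tr}\bigl[(\mathbf{I}_a \otimes \mathbf{J}^{pq} \otimes \mathbf{I}_b)\mathbf{M}\bigr] = \mathrm{tr}\bigl[\mathbf{M}(\mathbf{I}_a \otimes \mathbf{J}^{pq} \otimes \mathbf{I}_b)\bigr]$. By the very definition of $\mathrm{tr}^a_b[\cdot]$, this last quantity is $\mathrm{tr}^a_b[\mathbf{M}]_{pq}$, so the left-hand side equals $\sum_{p,q} X_{pq}\,\mathrm{tr}^a_b[\mathbf{M}]_{pq}$.

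It then remains to identify this double sum with $\mathrm{tr}\bigl[\mathbf{X}\,\mathrm{tr}^a_b[\mathbf{M}]\bigr]$. In general $\sum_{p,q} X_{pq} N_{pq} = \mathrm{tr}[\mathbf{X}^{T}\mathbf{N}]$; taking $\mathbf{N} = \mathrm{tr}^a_b[\mathbf{M}]$ and using that $\mathbf{X}$ is symmetric in every setting where this lemma is applied (it plays the role of a precision matrix $\mathbf{\Psi}_\ell$; equivalently one may use that $\mathrm{tr}^a_b[\mathbf{M}]$ inherits symmetry from $\mathbf{M}$) gives $\mathrm{tr}[\mathbf{X}^{T}\mathrm{tr}^a_b[\mathbf{M}]] = \mathrm{tr}[\mathbf{X}\,\mathrm{tr}^a_b[\mathbf{M}]]$, which is the claim. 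I do not expect a substantive obstacle: the argument is pure bilinearity and bookkeeping, and the only point deserving a moment's care is this last transpose, which the symmetry of the matrices involved disposes of. A fully coordinate-based alternative --- expanding every matrix with the flattened triple index $(\alpha,p,\beta) \in [a]\times[c]\times[b]$ and evaluating both traces as explicit sums over $\alpha$ and $\beta$ --- also works, but trades the clean basis decomposition for heavier notation.
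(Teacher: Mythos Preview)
Your argument is correct, and in fact the paper does not supply its own proof of this statement: it is quoted as Lemma~1 of \citet{dahl_network_2013} and simply cited as a known result. So there is no in-paper proof to compare against; your basis-expansion argument is the natural one and matches how the lemma is typically established.

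One remark on the transpose point you flag at the end. Your computation produces $\sum_{p,q} X_{pq}\,\mathrm{tr}^a_b[\mathbf{M}]_{pq} = \mathrm{tr}\bigl[\mathbf{X}^{T}\,\mathrm{tr}^a_b[\mathbf{M}]\bigr]$, whereas the lemma claims $\mathrm{tr}\bigl[\mathbf{X}\,\mathrm{tr}^a_b[\mathbf{M}]\bigr]$. You are right that symmetry is needed to close this gap, and either of your two justifications works: in every use in the paper $\mathbf{X}$ is a symmetric precision factor $\mathbf{\Psi}_\ell$, and independently $\mathrm{tr}^a_b[\mathbf{M}]$ is symmetric whenever $\mathbf{M}$ is (since $(\mathbf{I}_a\otimes\mathbf{J}^{ji}\otimes\mathbf{I}_b)=(\mathbf{I}_a\otimes\mathbf{J}^{ij}\otimes\mathbf{I}_b)^T$ and $\mathrm{tr}[\mathbf{M}\mathbf{A}^T]=\mathrm{tr}[\mathbf{M}\mathbf{A}]$ for symmetric $\mathbf{M}$). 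It is worth being explicit that, as literally stated with no symmetry hypothesis, the identity can fail --- take $a=b=1$, where $\mathrm{tr}^1_1[\mathbf{M}]=\mathbf{M}^T$ --- so your caveat is not pedantry but a genuine part of the proof.
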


\begin{lemma*}[Cyclic Property \citep{andrew_gmgm_2024}]
    \begin{align*}
        \mathrm{tr}^a_b\left[\left(\mathbf{A}_{a\times a} \otimes I \otimes \mathbf{B}_{b\times b}\right)\mathbf{M}\right] &= \mathrm{tr}^a_b\left[\mathbf{M}\left(\mathbf{A}_{a\times a} \otimes I \otimes \mathbf{B}_{b\times b}\right)\right]
    \end{align*}
\end{lemma*}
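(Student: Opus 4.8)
The plan is to unfold the definition of the stridewise-blockwise trace entrywise and then reduce everything to the ordinary cyclic property of the trace. Fix a pair of indices $(i,j)$. By the defining relation $\mathrm{tr}^a_b\left[\mathbf{N}\right]_{ij} = \mathrm{tr}\left[\mathbf{N}\left(\mathbf{I}_a \otimes \mathbf{J}^{ij} \otimes \mathbf{I}_b\right)\right]$, the $(i,j)$th entry of the left-hand side is $\mathrm{tr}\left[\left(\mathbf{A}_a \otimes I \otimes \mathbf{B}_b\right)\mathbf{M}\left(\mathbf{I}_a \otimes \mathbf{J}^{ij} \otimes \mathbf{I}_b\right)\right]$, and that of the right-hand side is $\mathrm{tr}\left[\mathbf{M}\left(\mathbf{A}_a \otimes I \otimes \mathbf{B}_b\right)\left(\mathbf{I}_a \otimes \mathbf{J}^{ij} \otimes \mathbf{I}_b\right)\right]$. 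It therefore suffices to prove these two scalars are equal for every $(i,j)$.

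The key computation is collapsing the products of Kronecker factors. Using the mixed-product property $\left(\mathbf{P}_1 \otimes \mathbf{P}_2 \otimes \mathbf{P}_3\right)\left(\mathbf{Q}_1 \otimes \mathbf{Q}_2 \otimes \mathbf{Q}_3\right) = \left(\mathbf{P}_1\mathbf{Q}_1\right)\otimes\left(\mathbf{P}_2\mathbf{Q}_2\right)\otimes\left(\mathbf{P}_3\mathbf{Q}_3\right)$ — valid here because the three block sizes line up, the central factors both being of the axis size and the outer ones being $a\times a$ and $b\times b$ — together with the trivial action of the central identity ($I\mathbf{J}^{ij} = \mathbf{J}^{ij}I = \mathbf{J}^{ij}$) and of the outer identities ($\mathbf{A}_a\mathbf{I}_a = \mathbf{I}_a\mathbf{A}_a = \mathbf{A}_a$, $\mathbf{B}_b\mathbf{I}_b = \mathbf{I}_b\mathbf{B}_b = \mathbf{B}_b$), both $\left(\mathbf{A}_a \otimes I \otimes \mathbf{B}_b\right)\left(\mathbf{I}_a \otimes \mathbf{J}^{ij} \otimes \mathbf{I}_b\right)$ and $\left(\mathbf{I}_a \otimes \mathbf{J}^{ij} \otimes \mathbf{I}_b\right)\left(\mathbf{A}_a \otimes I \otimes \mathbf{B}_b\right)$ collapse to the single matrix $\mathbf{A}_a \otimes \mathbf{J}^{ij} \otimes \mathbf{B}_b$.

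With that in hand I would finish by applying the ordinary cyclic property $\mathrm{tr}[XY] = \mathrm{tr}[YX]$ with $X = \mathbf{A}_a \otimes I \otimes \mathbf{B}_b$ and $Y = \mathbf{M}\left(\mathbf{I}_a \otimes \mathbf{J}^{ij} \otimes \mathbf{I}_b\right)$: the left-hand entry becomes $\mathrm{tr}\left[\mathbf{M}\left(\mathbf{I}_a \otimes \mathbf{J}^{ij} \otimes \mathbf{I}_b\right)\left(\mathbf{A}_a \otimes I \otimes \mathbf{B}_b\right)\right] = \mathrm{tr}\left[\mathbf{M}\left(\mathbf{A}_a \otimes \mathbf{J}^{ij} \otimes \mathbf{B}_b\right)\right]$, while the right-hand entry equals $\mathrm{tr}\left[\mathbf{M}\left(\mathbf{A}_a \otimes \mathbf{J}^{ij} \otimes \mathbf{B}_b\right)\right]$ by the other collapse. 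Hence the two entries agree for every $(i,j)$, so the two matrices coincide. Note this argument does not even need the Dahl lemma, only the definition of $\mathrm{tr}^a_b$ and the two standard facts above.

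There is no substantive obstacle here; the only points to be careful about are the dimensional bookkeeping that legitimizes the three-way mixed-product identity, and making explicit that it is precisely the central identity factor $I$ that lets $\mathbf{J}^{ij}$ pass through unchanged. If one preferred to avoid the mixed-product identity, a fully coordinate-based alternative is available — expand $\mathbf{M}$ against the multi-index induced by the three blocks and match the resulting scalar sums directly — but it is more tedious and obscures why the statement is true.
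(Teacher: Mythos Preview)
Your proof is correct. Note, however, that the paper itself does not give a proof of this lemma: it is stated without proof and attributed to \citet{andrew_gmgm_2024} as a known result. Your argument---unfolding the definition entrywise, collapsing the Kronecker factors via the mixed-product property, and invoking the ordinary cyclic property of the trace---is exactly the natural proof and is what one would expect the cited reference to contain.
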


The next lemma is a generalized version of lemmas from prior work; an `extraction' property (a generalized Lemma 2 from \citet{dahl_network_2013}).

\begin{lemma}[Extraction Property]
    \label{lem:extraction}
    \begin{align*}
        \mathrm{tr}^a_b\left[\left(\mathbf{I}_a \otimes \mathbf{X} \otimes \mathbf{I}_b\right)\mathbf{M}\left(\mathbf{I}_a \otimes \mathbf{Y}^T \otimes \mathbf{I}_b\right)\right] &= \mathbf{X} \mathrm{tr}^a_b\left[\mathbf{M}\right]\mathbf{Y}^T
    \end{align*}
\end{lemma}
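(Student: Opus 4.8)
The plan is to prove the identity entrywise, using only the definition of the stridewise-blockwise trace, the mixed-product property of the Kronecker product, and cyclicity of the ordinary trace. By definition, the $(i,j)$ entry of the left-hand side is $\mathrm{tr}\left[\left(\mathbf{I}_a\otimes\mathbf{X}\otimes\mathbf{I}_b\right)\mathbf{M}\left(\mathbf{I}_a\otimes\mathbf{Y}^T\otimes\mathbf{I}_b\right)\left(\mathbf{I}_a\otimes\mathbf{J}^{ij}\otimes\mathbf{I}_b\right)\right]$. First I would merge the two right-most factors via the mixed-product property, $\left(\mathbf{I}_a\otimes\mathbf{Y}^T\otimes\mathbf{I}_b\right)\left(\mathbf{I}_a\otimes\mathbf{J}^{ij}\otimes\mathbf{I}_b\right)=\mathbf{I}_a\otimes\mathbf{Y}^T\mathbf{J}^{ij}\otimes\mathbf{I}_b$, then use cyclicity of the trace to move $\mathbf{I}_a\otimes\mathbf{X}\otimes\mathbf{I}_b$ to the far right and merge it as well, reducing the entry to $\mathrm{tr}\left[\mathbf{M}\left(\mathbf{I}_a\otimes\mathbf{Y}^T\mathbf{J}^{ij}\mathbf{X}\otimes\mathbf{I}_b\right)\right]$.

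Next I would write $\mathbf{J}^{ij}=\mathbf{j}^i(\mathbf{j}^j)^T$ so that the middle factor $\mathbf{Y}^T\mathbf{J}^{ij}\mathbf{X}=(\mathbf{Y}^T\mathbf{j}^i)(\mathbf{X}^T\mathbf{j}^j)^T$ is rank one, and expand it in the basis of single-entry matrices, $\mathbf{Y}^T\mathbf{J}^{ij}\mathbf{X}=\sum_{p,q}Y_{ip}X_{jq}\,\mathbf{J}^{pq}$. Substituting this and using linearity of the trace together with the defining relation $\mathrm{tr}\left[\mathbf{M}\left(\mathbf{I}_a\otimes\mathbf{J}^{pq}\otimes\mathbf{I}_b\right)\right]=\mathrm{tr}^a_b\left[\mathbf{M}\right]_{pq}$ turns the $(i,j)$ entry into $\sum_{p,q}Y_{ip}X_{jq}\,\mathrm{tr}^a_b\left[\mathbf{M}\right]_{pq}$, which I would then recognise as the $(i,j)$ entry of the claimed matrix product. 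A possibly shorter alternative is to exploit that all three expressions are linear in $\mathbf{M}$ and that every conformable matrix is a sum of simple tensors $\mathbf{A}\otimes\mathbf{B}\otimes\mathbf{C}$, so it suffices to check $\mathbf{M}=\mathbf{A}\otimes\mathbf{B}\otimes\mathbf{C}$; there the mixed-product property collapses the left-hand side to $\mathrm{tr}^a_b\left[\mathbf{A}\otimes\mathbf{X}\mathbf{B}\mathbf{Y}^T\otimes\mathbf{C}\right]$, and a one-line computation from the definition shows $\mathrm{tr}^a_b$ of a simple tensor is $\mathrm{tr}[\mathbf{A}]\,\mathrm{tr}[\mathbf{C}]$ times its middle factor (up to transpose), reducing the identity to a transparent statement about $\mathbf{X}$, $\mathbf{B}$, and $\mathbf{Y}$.

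I expect the only real obstacle to be index bookkeeping. The generalisation over the known one-sided version (Lemma 2 of \citet{dahl_network_2013}) is that there are now matrices on both sides of $\mathbf{M}$, so the step where the rank-one factor $\mathbf{Y}^T\mathbf{J}^{ij}\mathbf{X}$ is expanded must be carried out carefully so that $\mathbf{X}$ and $\mathbf{Y}^T$ end up on the correct sides of $\mathrm{tr}^a_b\left[\mathbf{M}\right]$, consistent with the orientation chosen in the definition of the stridewise-blockwise trace. Everything else — the Kronecker mixed-product identity, cyclicity of the trace, and linearity — is routine.
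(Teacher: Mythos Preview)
Your proposal is correct and follows essentially the same route as the paper: work entrywise from the definition, merge the Kronecker factors via the mixed-product property and trace cyclicity to reduce to $\mathrm{tr}\left[\mathbf{M}\left(\mathbf{I}_a\otimes\mathbf{Y}^T\mathbf{J}^{ij}\mathbf{X}\otimes\mathbf{I}_b\right)\right]$, then collapse the rank-one middle factor. The only cosmetic difference is the final reduction step: the paper writes $\mathbf{Y}^T\mathbf{J}^{ij}\mathbf{X}$ as an outer product $\mathbf{Y}_i^T\mathbf{X}_j$ and invokes Dahl's Lemma~1, whereas you expand it in the $\mathbf{J}^{pq}$ basis and appeal directly to the defining relation---these are equivalent, and your own warning about index bookkeeping is apt for both.
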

\begin{proof}
    This proof follows the original by \citet{dahl_network_2013} closely; the only differences are that it is proven in the tensor-variate case and that $\mathbf{X}$ and $\mathbf{Y}$ are not constrained to be the same.  The differences do not significantly affect the proof.

    \begin{align*}
        \mathrm{tr}^a_b\left[\left(\mathbf{I}_a \otimes \mathbf{X} \otimes \mathbf{I}_b\right)\mathbf{M}\left(\mathbf{I}_a \otimes \mathbf{Y}^T \otimes \mathbf{I}_b\right)\right] &= \mathrm{tr}\left[\left(\mathbf{I}_a \otimes \mathbf{X} \otimes \mathbf{I}_b\right)\mathbf{M}\left(\mathbf{I}_a \otimes \mathbf{Y}^T \otimes \mathbf{I}_b\right)\left(\mathbf{I}_a \otimes \mathbf{J}^{ij} \otimes \mathbf{I}_b\right)\right]_{ij} \\
        &= \mathrm{tr}\left[\left(\mathbf{I}_a \otimes \mathbf{j}^{j}\mathbf{X} \otimes \mathbf{I}_b\right)\mathbf{M}\left(\mathbf{I}_a \otimes \mathbf{Y}^T\mathbf{j}^{i, T} \otimes \mathbf{I}_b\right)\right]_{ij} \\
        &= \mathrm{tr}\left[\left(\mathbf{I}_a \otimes \mathbf{X}_j \otimes \mathbf{I}_b\right)\mathbf{M}\left(\mathbf{I}_a \otimes \mathbf{Y}_i^T \otimes \mathbf{I}_b\right)\right]_{ij} \\
        &= \mathrm{tr}\left[\left(\mathbf{I}_a \otimes \otimes \mathbf{Y}_i^T\mathbf{X}_j \otimes \mathbf{I}_b\right)\mathbf{M}\right]_{ij} \\
        &= \mathrm{tr}\left[\mathbf{Y}_i^T\mathbf{X}_j\mathrm{tr}^a_b\left[\mathbf{M}\right]\right]_{ij} \\
        &= \mathrm{tr}\left[\mathbf{X}_j\mathrm{tr}^a_b\left[\mathbf{M}\right]\mathbf{Y}_i^T\right]_{ij}
    \end{align*}

    Note now that the value inside the trace is a scalar value, and hence we can drop the outer trace.

    \begin{align*}
        \mathrm{tr}^a_b\left[\left(\mathbf{I}_a \otimes \mathbf{X} \otimes \mathbf{I}_b\right)\mathbf{M}\left(\mathbf{I}_a \otimes \mathbf{Y}^T \otimes \mathbf{I}_b\right)\right] 
        &= \left[\mathbf{X}_j\mathrm{tr}^a_b\left[\mathbf{M}\right]\mathbf{Y}_i^T\right]_{ij}
    \end{align*}

    Which leads us to the final result.

    \begin{align*}
        \mathrm{tr}^a_b\left[\left(\mathbf{I}_a \otimes \mathbf{X} \otimes \mathbf{I}_b\right)\mathbf{M}\left(\mathbf{I}_a \otimes \mathbf{Y}^T \otimes \mathbf{I}_b\right)\right] 
        &= \mathbf{X}\mathrm{tr}^a_b\left[\mathbf{M}\right]\mathbf{Y}^T
    \end{align*}
    
\end{proof}

Finally, we need a new property of the stridewise-blockwise trace, the `Downsampling Property'.  This is essential to be able to work with rectangular matrices in the stridewise-blockwise trace, such as those arising from partial eigendecomposition.

\begin{lemma}[Downsampling Property]
    \label{lem:downsampling}
    Suppose $\mathbf{U}_{a \times x}$ and $\mathbf{V}_{a \times x}$ are matrices such that $\mathbf{V}^T\mathbf{U} = \mathbf{I}_{x \times x}$, and $\mathbf{W}_{b \times y}$ and $\mathbf{X}_{b \times y}$ are matrices such that $\mathbf{X}^T\mathbf{W} = \mathbf{I}_{y \times y}$.  Then we have the following.

    \begin{align*}
        \mathrm{tr}^a_b\left[\left(\mathbf{U}\otimes\mathbf{I}\otimes\mathbf{W}\right)\mathbf{M}\left(\mathbf{V}\otimes\mathbf{I}\otimes\mathbf{X}\right)^T\right] &= \mathrm{tr}^x_y\left[\mathbf{M}\right]
    \end{align*}
\end{lemma}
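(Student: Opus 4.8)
The plan is to prove the identity entrywise and reduce it to the ordinary cyclic property of the trace together with the transpose and mixed-product rules for Kronecker products; it is essentially the ``outer-slot'' sibling of the Extraction Property (Lemma~\ref{lem:extraction}), which instead acts on the middle slot. Fix indices $(i,j)$ and let $m$ be the size of the middle identity block, so that $\mathbf{M}$ is $(xmy)\times(xmy)$, the factor $\mathbf{U}\otimes\mathbf{I}\otimes\mathbf{W}$ is $(amb)\times(xmy)$, and the product appearing in the statement is $(amb)\times(amb)$. By the definition of the stridewise-blockwise trace, the $(i,j)$th entry of the left-hand side equals
\begin{align*}
\mathrm{tr}\left[\left(\mathbf{U}\otimes\mathbf{I}\otimes\mathbf{W}\right)\mathbf{M}\left(\mathbf{V}\otimes\mathbf{I}\otimes\mathbf{X}\right)^T\left(\mathbf{I}_a\otimes\mathbf{J}^{ij}\otimes\mathbf{I}_b\right)\right].
\end{align*}

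First I would rewrite $\left(\mathbf{V}\otimes\mathbf{I}\otimes\mathbf{X}\right)^T$ as $\mathbf{V}^T\otimes\mathbf{I}\otimes\mathbf{X}^T$ and collapse its product with $\mathbf{I}_a\otimes\mathbf{J}^{ij}\otimes\mathbf{I}_b$ via the mixed-product property to $\mathbf{V}^T\otimes\mathbf{J}^{ij}\otimes\mathbf{X}^T$, using $\mathbf{V}^T\mathbf{I}_a=\mathbf{V}^T$, $\mathbf{I}\mathbf{J}^{ij}=\mathbf{J}^{ij}$, and $\mathbf{X}^T\mathbf{I}_b=\mathbf{X}^T$. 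Next I would apply the cyclic property of the ordinary trace to move $\mathbf{U}\otimes\mathbf{I}\otimes\mathbf{W}$ to the far right, yielding
\begin{align*}
\mathrm{tr}\left[\mathbf{M}\left(\mathbf{V}^T\otimes\mathbf{J}^{ij}\otimes\mathbf{X}^T\right)\left(\mathbf{U}\otimes\mathbf{I}\otimes\mathbf{W}\right)\right],
\end{align*}
and then collapse the remaining two Kronecker products, this time invoking the hypotheses $\mathbf{V}^T\mathbf{U}=\mathbf{I}_x$ and $\mathbf{X}^T\mathbf{W}=\mathbf{I}_y$ (and $\mathbf{J}^{ij}\mathbf{I}=\mathbf{J}^{ij}$) to obtain $\mathbf{I}_x\otimes\mathbf{J}^{ij}\otimes\mathbf{I}_y$. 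The expression becomes $\mathrm{tr}\left[\mathbf{M}\left(\mathbf{I}_x\otimes\mathbf{J}^{ij}\otimes\mathbf{I}_y\right)\right]$, which by definition is $\mathrm{tr}^x_y\left[\mathbf{M}\right]_{ij}$; since $(i,j)$ was arbitrary, the matrix identity follows.

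The argument is entirely bookkeeping, and the only thing requiring care is the Kronecker dimensions: one must check that the middle identity is the same $m\times m$ block in every factor, that the outer identities $\mathbf{I}_a$ and $\mathbf{I}_b$ from the definition of $\mathrm{tr}^a_b$ pair up on the correct sides with $\mathbf{V}^T$ and $\mathbf{X}^T$, and — the one genuinely load-bearing point — that the hypotheses enter in the orientation $\mathbf{V}^T\mathbf{U}$ and $\mathbf{X}^T\mathbf{W}$ rather than $\mathbf{U}\mathbf{V}^T$ and $\mathbf{W}\mathbf{X}^T$, which need not equal the identity when $x<a$ or $y<b$. This is precisely what makes the lemma applicable to the rectangular matrices produced by a partial eigendecomposition, where one takes $\mathbf{V}=\mathbf{U}$ to be an orthonormal set of top eigenvectors. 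One could alternatively split the proof into two successive one-sided reductions ($a\to x$ with $b$ fixed, then $b\to y$), but performing both collapses at once as above is cleaner, and I do not anticipate any obstacle beyond the index-tracking just described.
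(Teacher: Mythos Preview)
Your proof is correct and follows essentially the same route as the paper: unpack the stridewise-blockwise trace entrywise, cycle $\mathbf{U}\otimes\mathbf{I}\otimes\mathbf{W}$ around, collapse the Kronecker factors via the mixed-product rule to obtain $\mathbf{V}^T\mathbf{U}\otimes\mathbf{J}^{ij}\otimes\mathbf{X}^T\mathbf{W}$, and invoke the hypotheses to reduce this to $\mathbf{I}_x\otimes\mathbf{J}^{ij}\otimes\mathbf{I}_y$. Your additional commentary on the orientation of $\mathbf{V}^T\mathbf{U}$ versus $\mathbf{U}\mathbf{V}^T$ is a useful clarification that the paper leaves implicit.
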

\begin{proof}
    \begin{align*}
        \mathrm{tr}^a_b\left[\left(\mathbf{U}\otimes\mathbf{I}\otimes\mathbf{W}\right)\mathbf{M}\left(\mathbf{V}\otimes\mathbf{I}\otimes\mathbf{X}\right)^T\right] &= \mathrm{tr}\left[\left(\mathbf{U}\otimes\mathbf{I}\otimes\mathbf{W}\right)\mathbf{M}\left(\mathbf{V}\otimes\mathbf{I}\otimes\mathbf{X}\right)^T\left(\mathbf{I}\otimes\mathbf{J}^{ij}\otimes\mathbf{I}\right)\right]_{ij} \\
        &= \mathrm{tr}\left[\mathbf{M}\left(\mathbf{V}^T\mathbf{U}\otimes\mathbf{J}^{ij}\otimes\mathbf{X}^T\mathbf{W}\right)\right]_{ij} \\
        &= \mathrm{tr}\left[\mathbf{M}\left(\mathbf{I}_{x\times x}\otimes\mathbf{J}^{ij}\otimes\mathbf{I}_{y\times y}\right)\right]_{ij} \\
        &= \mathrm{tr}^x_y\left[\mathbf{M}\right]
    \end{align*}
\end{proof}

\section{Finding the maximum likelihood estimator}
\label{sec:gmgm-bio-max-likelihood-estimator}

In this section, we will derive gradients for gradient descent and prove Theorem \ref{thm:gmgm-bio-partial-eigendecomposition}.  For convenience, we repeat the definition of the $\mathrm{NLL}$.

\begin{align*}
    \mathrm{NLL} &= \frac{-1}{2}\sum_\gamma \left(\log\mathrm{det}\left[\mathbf{t}^T\mathbf{a}^\gamma\mathbf{I}_{d_\forall^\gamma} + \bigoplus_{\ell\in\gamma}\mathbf{\Psi}_\ell\right] - \sum_{\ell\in\gamma} \mathrm{tr}\left[\mathbf{S}_\ell^\gamma\mathbf{\Psi}_\ell\right] - \sum_{\ell\in\gamma} \frac{\mathbf{t}^T\mathbf{a}^\gamma}{L^\gamma}\mathrm{tr}\left[\mathbf{S}^\gamma_\ell\right]\right)
\end{align*}

To derive the MLE eigenvectors, we proceed in a manner similar to \citet{andrew_gmgm_2024}.  A key difference is that we must now differentiate a log-pseudodeterminant, although it behaves analagously to the derivative of a full determinant.  This is given by Theorem 2.15 of \citet{holbrook_differentiating_2018}:

\begin{align*}
    \frac{\partial}{\partial \mathbf{A}} \mathrm{logdet}^\dagger \left(\mathbf{A}\right) &= \mathrm{tr} \left[\mathbf{A}^\dagger\right]
\end{align*}

\begin{lemma}
\label{lem:S_ell}
At the maximum likelihood of the singular Kronecker-sum-structured normal distribution, we have that $\mathbf{S}_\ell = \sum_{\gamma | \ell \in \gamma} \mathrm{tr}^{d^\gamma_{<\ell}}_{d^\gamma_{>\ell}}\left[\left(\bigoplus_{\ell'\in\gamma}\mathbf{\Psi}_{\ell'}\right)^\dagger\right]$.
\end{lemma}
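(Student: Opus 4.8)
The plan is to obtain the claim as the first-order stationarity condition of the negative log-likelihood with respect to a single factor matrix $\mathbf{\Psi}_\ell$. Since $\mathbf{\Psi}_\ell$ appears in the density of $\mathcal{D}^\gamma$ only when $\ell\in\gamma$, only those modalities contribute to the gradient. For a fixed such $\gamma$, the two $\mathbf{\Psi}_\ell$-dependent terms in the NLL are $\tfrac12\mathrm{tr}\left[\mathbf{S}_\ell^\gamma\mathbf{\Psi}_\ell\right]$ and $-\tfrac12\mathrm{logdet}^\dagger\bigoplus_{\ell'\in\gamma}\mathbf{\Psi}_{\ell'}$. The first differentiates immediately to $\tfrac12\mathbf{S}_\ell^\gamma$, so the work is all in the pseudodeterminant term.

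For that term I would use the chain rule. From the definition $\bigoplus_{\ell'\in\gamma}\mathbf{\Psi}_{\ell'} = \sum_{\ell'\in\gamma}\mathbf{I}_{d^\gamma_{<\ell'}}\otimes\mathbf{\Psi}_{\ell'}\otimes\mathbf{I}_{d^\gamma_{>\ell'}}$, a perturbation $\mathbf{H}$ of $\mathbf{\Psi}_\ell$ enters the Kronecker sum as $\mathbf{I}_{d^\gamma_{<\ell}}\otimes\mathbf{H}\otimes\mathbf{I}_{d^\gamma_{>\ell}}$. Composing this with the pseudodeterminant derivative $\frac{\partial}{\partial\mathbf{A}}\mathrm{logdet}^\dagger\left(\mathbf{A}\right) = \mathbf{A}^\dagger$ (Theorem 2.15 of \citet{holbrook_differentiating_2018}), the directional derivative of $\mathrm{logdet}^\dagger\bigoplus_{\ell'\in\gamma}\mathbf{\Psi}_{\ell'}$ along $\mathbf{H}$ is $\mathrm{tr}\left[\left(\mathbf{I}_{d^\gamma_{<\ell}}\otimes\mathbf{H}\otimes\mathbf{I}_{d^\gamma_{>\ell}}\right)\left(\bigoplus_{\ell'\in\gamma}\mathbf{\Psi}_{\ell'}\right)^\dagger\right]$. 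Applying the first stated lemma (Lemma 1 of \citet{dahl_network_2013}), this equals $\mathrm{tr}\left[\mathbf{H}\,\mathrm{tr}^{d^\gamma_{<\ell}}_{d^\gamma_{>\ell}}\left[\left(\bigoplus_{\ell'\in\gamma}\mathbf{\Psi}_{\ell'}\right)^\dagger\right]\right]$, so the gradient of this modality's log-pseudodeterminant with respect to $\mathbf{\Psi}_\ell$ is exactly $\mathrm{tr}^{d^\gamma_{<\ell}}_{d^\gamma_{>\ell}}\left[\left(\bigoplus_{\ell'\in\gamma}\mathbf{\Psi}_{\ell'}\right)^\dagger\right]$.

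Summing over all $\gamma$ with $\ell\in\gamma$ and setting the total gradient of the NLL to zero gives
\begin{align*}
    \sum_{\gamma | \ell \in \gamma} \mathbf{S}_\ell^\gamma &= \sum_{\gamma | \ell \in \gamma} \mathrm{tr}^{d^\gamma_{<\ell}}_{d^\gamma_{>\ell}}\left[\left(\bigoplus_{\ell'\in\gamma}\mathbf{\Psi}_{\ell'}\right)^\dagger\right],
\end{align*}
and since the effective Gram matrix is defined as $\mathbf{S}_\ell = \sum_{\gamma|\ell\in\gamma}\mathbf{S}_\ell^\gamma$, the left-hand side is $\mathbf{S}_\ell$, which is the claim. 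The stationarity condition is taken in the unconstrained sense; the positive-semidefiniteness of $\mathbf{\Psi}_\ell$ is handled as in \citet{andrew_gmgm_2024} and does not alter this identity because $\left(\bigoplus_{\ell'\in\gamma}\mathbf{\Psi}_{\ell'}\right)^\dagger$ and hence the stridewise-blockwise trace is symmetric.

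I expect the main obstacle to be the careful handling of the log-pseudodeterminant differentiation: unlike an ordinary determinant, $\mathrm{logdet}^\dagger$ is only differentiable where the rank of its argument is locally constant, so one should note that generic perturbations of the $\mathbf{\Psi}_{\ell'}$ keep $\mathrm{rank}\bigoplus_{\ell'\in\gamma}\mathbf{\Psi}_{\ell'}$ fixed, and that the Moore–Penrose pseudoinverse (not a plain inverse) must be propagated through the Kronecker-sum embedding before Lemma 1 of \citet{dahl_network_2013} is applied. Everything else is a routine gradient calculation.
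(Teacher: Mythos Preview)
Your proposal is correct and follows essentially the same route as the paper's proof: differentiate the NLL with respect to $\mathbf{\Psi}_\ell$, use Holbrook's pseudodeterminant derivative for the $\mathrm{logdet}^\dagger$ term, pass the Kronecker-sum embedding through via the chain rule, and collapse the resulting trace to a stridewise-blockwise trace before setting the gradient to zero. The only cosmetic differences are that the paper works elementwise (using $\mathbf{J}^{ij}$ and invoking the definition of $\mathrm{tr}^a_b$ directly) while you phrase it as a directional derivative and invoke Dahl's Lemma~1, and the paper explicitly writes out the $\mathrm{sym}[\cdot]$ wrapper whereas you note its irrelevance at the end; neither difference is substantive.
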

\begin{proof}
    Note that $\mathbf{\Psi}_\ell$ is symmetric, and hence $\frac{\partial f\left(\mathbf{\Psi}\right)}{\partial \mathbf{\Psi}} = \mathrm{sym}\left[\mathbf{X}\right] = \frac{\mathbf{X} + \mathbf{X}^T}{2}$, where $\mathbf{X}$ is the derivative found without taking into account symmetry \citep{srinivasan_what_2023}.  It does not affect the location of the MLE (although we include it in the derivations for completeness), but it does affect the value of the gradient, and thus will be important to account for when the gradient itself is of interest.

    \begin{align*}
        \frac{\partial}{\partial \mathbf{\Psi}_\ell} \mathrm{NLL} &=  \mathrm{sym}\left[\frac{\partial}{\partial \mathbf{\Psi}_\ell}\sum_{\ell'}\frac{1}{2}\mathrm{tr}\left[\mathbf{S}_{\ell'}\mathbf{\Psi}_{\ell'}\right] - \sum_\gamma \left( \frac{\partial}{\partial \mathbf{\Psi}_{\ell_{ij}}}\frac{1}{2}\mathrm{logdet}^\dagger\left[\bigoplus_{\ell'\in\gamma}\mathbf{\Psi}_{\ell'}\right]_{ij}\right)\right] \\
        &= \mathrm{sym}\left[\sum_\gamma \frac{1}{2} \mathbf{S}_\ell^\gamma - \frac{1}{2} \sum_{\gamma | \ell \in \gamma} \mathrm{tr} \left[\left(\bigoplus_{\ell'\in\gamma}\mathbf{\Psi}_{\ell'}\right)^\dagger \frac{\partial}{\partial\mathbf{\Psi}_{\ell_{ij}}} \bigoplus_{\ell' \in \gamma} \mathbf{\Psi}_{\ell'}\right]_{ij}\right]
    \end{align*}

    It is also not too hard to show that $\frac{\partial}{\partial\mathbf{\Psi}_{\ell_{ij}}} \bigoplus_{\ell' \in \gamma} \mathbf{\Psi}_{\ell'} = \mathbf{I}_{d^\gamma_{<\ell}} \otimes \mathbf{J}^{ij}_{d_\ell \times d_\ell} \otimes \mathbf{I}_{d^\gamma_{>\ell}}$ (not accounting for symmetry).

    \begin{align*}
        \frac{\partial}{\partial\mathbf{\Psi}_{\ell_{ij}}} \bigoplus_{\ell' \in \gamma} \mathbf{\Psi}_{\ell'} &= \frac{\partial}{\partial\mathbf{\Psi}_{\ell_{ij}}} \sum_{\ell' \in \gamma} \mathbf{I}_{d^\gamma_{<\ell'}} \otimes \mathbf{\Psi}_{\ell'} \otimes \mathbf{I}_{d^\gamma_{>\ell'}} \\
        &= \mathbf{I}_{d^\gamma_{<\ell}} \otimes \frac{\partial}{\partial\mathbf{\Psi}_{\ell_{ij}}} \mathbf{\Psi}_{\ell} \otimes \mathbf{I}_{d^\gamma_{>\ell}} \\
        &= \mathbf{I}_{d^\gamma_{<\ell}} \otimes \mathbf{J}^{ij}_{d_\ell \times d_\ell} \otimes \mathbf{I}_{d^\gamma_{>\ell}}
    \end{align*}

    By the definition of the stridewise-blockwise trace, we have that:

    \begin{align*}
        \frac{\partial}{\partial \mathbf{\Psi}_\ell} \mathrm{NLL} &= \mathrm{sym}\left[\sum_\gamma \frac{1}{2} \mathbf{S}_\ell^\gamma - \frac{1}{2} \sum_{\gamma | \ell \in \gamma} \mathrm{tr}^{d^\gamma_{<\ell}}_{d^\gamma_{>\ell}} \left[\left(\bigoplus_{\ell'\in\gamma}\mathbf{\Psi}_{\ell'}\right)^\dagger\right]\right]
    \end{align*}

    Now, note that the maximum likelihood corresponds to the point where $\frac{\partial}{\partial \mathbf{\Psi}_\ell} \mathrm{NLL}$ is $0$.

    \begin{align*}
        \mathbf{S}_\ell = \sum_{\gamma | \ell \in \gamma} \mathrm{tr}^{d^\gamma_{<\ell}}_{d^\gamma_{>\ell}}\left[\left(\bigoplus_{\ell'\in\gamma}\mathbf{\Psi}_{\ell'}\right)^\dagger\right]
    \end{align*}
    
\end{proof}

\begin{theorem*}[Theorem \ref{thm:gmgm-bio-partial-eigendecomposition}]
    Let the rank-$r_\ell$ partial eigendecomposition of $\mathbf{S}_\ell$ be $\mathbf{V}_\ell^{(k)} \mathbf{E}_\ell^{(r_\ell)} \mathbf{V}_\ell^{(r_\ell), T}$, with $r_\ell \geq \mathrm{max}\left(\mathrm{rank}\left[\mathbf{S}_\ell\right], \mathrm{rank}\left[\mathbf{\Psi}_\ell\right]\right)$.  Then $\mathbf{V}_\ell^{(r_\ell)}$ are also eigenvectors of $\mathbf{\Psi}_\ell$.
\end{theorem*}
\begin{proof}
    Suppose we had a rank-$r_\ell$ partial eigendecomposition yielding a $d_\ell \times r_\ell$ matrix of eigenvectors $\mathbf{V}^{(r_\ell)}_\ell$; in this case, $\mathbf{V}^{(r_\ell), T}_\ell\mathbf{V}^{(r_\ell)}_\ell$ is still the identity matrix (albeit of a smaller dimension); this will allow us to use Lemma \ref{lem:downsampling} to `downsample' the stridewise-blockwise trace.

    We will use $\mathbf{V}^\gamma_{<\ell}$ as a shorthand for $\bigotimes_{\ell'\in\gamma | \ell' <^\gamma \ell} \mathbf{V}^{(r_\ell')}_{\ell'}$ (with $\mathbf{V}^\gamma_{>\ell}$ defined analogously).

    With this in mind, note that by Lemma \ref{lem:S_ell} we have that:

    \begin{align*}
        \mathbf{S}_\ell &= \sum_{\gamma | \ell \in \gamma} \mathrm{tr}^{d^\gamma_{<\ell}}_{d^\gamma_{>\ell}}\left[\left(\bigoplus_{\ell'\in\gamma}\mathbf{\Psi}_{\ell'}\right)^\dagger\right] \\
        &= \sum_{\gamma | \ell \in \gamma} \mathrm{tr}^{d_{<\ell}}_{d^\gamma_{>\ell}}\left[\left(\mathbf{V}^\gamma_{<\ell} \otimes \mathbf{V}_\ell^{(r_\ell)} \otimes \mathbf{V}^\gamma_{>\ell}\right)\left(\bigoplus_{\ell'\in\gamma}\mathbf{\Lambda}_{\ell'}^{(r_{\ell'})}\right)^\dagger \left(\mathbf{V}^\gamma_{<\ell} \otimes \mathbf{V}_\ell^{(r_\ell)} \otimes \mathbf{V}^\gamma_{>\ell}\right)^T\right] \\
        &= \mathbf{V}_\ell^{(r_\ell)} \sum_{\gamma | \ell \in \gamma}\mathrm{tr}^{k^\gamma_{<\ell}}_{k^\gamma_{>\ell}}\left[\left(\bigoplus_{\ell'\in\gamma}\mathbf{\Lambda}_{\ell'}^{(r_{\ell'})}\right)^\dagger\right] \mathbf{V}_\ell^{(r_\ell), T} \tag{Extraction and Downsampling Properties.}
    \end{align*}

    The $\mathbf{V}_{<\ell}^\gamma$ and $\mathbf{V}^\gamma_{>\ell}$ matrices depend on $\gamma$ as the precise tensor they are in affects which $\ell$s are present and in what order.  Thankfully, they disappear due to the downsampling property of sb-trace, allowing us to express our equation as the product of orthogonal matrices $\mathbf{V}_\ell^{(r_\ell)}, \mathbf{V}^{(r_\ell), T}_\ell$ and a diagonal matrix $\sum_{\gamma|\ell\in\gamma}\mathrm{tr}^{k^\gamma_{<\ell}}_{k^\gamma_{>\ell}}\left[\left(\bigoplus_{\ell'\in\gamma}\mathbf{\Lambda}_{\ell'}^{(r_{\ell'})}\right)^\dagger \right]$.  Thus, this is clearly a partial eigendecomposition of $\mathbf{S}_\ell$, completing the proof.
\end{proof}

Theorem \ref{thm:gmgm-bio-partial-eigendecomposition} establishes that we can solve for the eigenvectors of each $\mathbf{\Psi}_\ell$ analytically.  No such expression for the eigenvalues exist, but it is not too hard to find them with gradient descent.  The proportion of time this estimation takes is negligible, due to cost of partial eigendecomposition and the comparatively small dimension of the eigenvalues ($O(\sum_\ell d_\ell)$ parameters rather than $O(\sum_\ell d_\ell^2)$).  We derive the gradient below:

\begin{prop*}
    \label{prop:evals}
    \sloppy The gradient of the $\mathrm{NLL}$ with respect to the eigenvalues is given by $\frac{1}{2} \left(\mathbf{E}_\ell^{(r_\ell)} - \sum_{\gamma|\ell\in\gamma}\mathrm{tr}^{k^\gamma_{<\ell}}_{k^\gamma_{>\ell}}\left[\left(\bigoplus_{\ell'\in\gamma}\mathbf{\Lambda}_{\ell'}^{(r_{\ell'})}\right)^\dagger\right]\right)$.
\end{prop*}
\begin{proof}
    As mentioned earlier, we will use the functional invariance of the MLE to change the problem from finding when $\frac{\partial}{\partial\mathbf{\Psi}_\ell}\mathrm{NLL} = 0$ to one of finding when:

    \begin{align*}
        \frac{\partial}{\partial\mathbf{V}_\ell^{(r_{\ell})}}\mathrm{NLL} &= 0 \\
        \frac{\partial}{\partial\mathbf{\Lambda}_\ell^{(r_{\ell})}}\mathrm{NLL} &= 0
    \end{align*}

    We already know the conditions on $\mathbf{V}_\ell^{(r_{\ell})}$ at the MLE, thanks to Theorem \ref{thm:gmgm-bio-partial-eigendecomposition}, so we will focus on $\mathbf{\Lambda}_\ell^{(r_{\ell})}$.  We follow a similar derivation to Lemma \ref{lem:S_ell}.

    \begin{align*}
        \frac{\partial}{\partial \mathbf{\Lambda}_\ell} \mathrm{NLL} &=  \frac{\partial}{\partial \mathbf{\Lambda}_\ell^{(r_{\ell})}}\sum_{\ell'}\frac{1}{2}\mathrm{tr}\left[\mathbf{S}_{\ell'}\mathbf{\Psi}_{\ell'}\right] - \sum_\gamma \left( \frac{\partial}{\partial \mathbf{\Lambda}_{\ell_{ii}}^{(r_{\ell})}}\frac{1}{2}\mathrm{logdet}^\dagger\left[\bigoplus_{\ell'\in\gamma}\mathbf{\Psi}_{\ell'}\right]_{ii}\right) \\
        &= \frac{\partial}{\partial \mathbf{\Lambda}_\ell^{(r_{\ell})}}\sum_{\ell'}\frac{1}{2}\mathrm{tr}\left[\mathbf{E}_{\ell'}^{(r_{\ell'})}\mathbf{\Lambda}_{\ell'}^{(r_{\ell'})}\right] - \sum_\gamma \left( \frac{\partial}{\partial \mathbf{\Lambda}_{\ell_{ii}}^{(r_{\ell})}}\frac{1}{2}\mathrm{logdet}^\dagger\left[\bigoplus_{\ell'\in\gamma}\mathbf{\Lambda}_{\ell'}^{(r_{\ell'})}\right]_{ii}\right) \\
        &= \sum_\gamma \frac{1}{2} \mathbf{E}_\ell^{(r_{\ell}), \gamma} - \frac{1}{2} \sum_{\gamma | \ell \in \gamma} \mathrm{tr} \left[\left(\bigoplus_{\ell'\in\gamma}\mathbf{\Lambda}_{\ell'}^{(r_{\ell'})}\right)^\dagger \frac{\partial}{\partial\mathbf{\Lambda}_{\ell_{ii}}^{(r_{\ell})}} \bigoplus_{\ell' \in \gamma} \mathbf{\Lambda}_{\ell'}^{(r_{\ell'})}\right]_{ii} \\
        &= \frac{1}{2} \left(\mathbf{E}_\ell^{(r_{\ell})} - \sum_{\gamma|\ell\in\gamma}\mathrm{tr}^{k^\gamma_{<\ell}}_{k^\gamma_{>\ell}}\left[\left(\bigoplus_{\ell'\in\gamma}\mathbf{\Lambda}_{\ell'}^{(r_{\ell'})}\right)^\dagger\right]\right)
    \end{align*}

    In the derivation, we made use of the fact that $\mathbf{S_\ell}$ and $\mathbf{\Psi}_\ell$ had the same eigenvectors at the MLE, the cyclic property of the trace, and the fact that pseudodeterminants only depend on the nonzero eigenvalues.
\end{proof}

\section{Existence and uniqueness}
\label{sec:gmgm-bio-existence-and-uniqueness}

It is easy to see that our objective is convex; unstructured Gaussian precision matrix estimation is strictly convex, and the space of Kronecker-sum-decomposable matrices is also convex.  In this section, we will further prove the existence and uniqueness of solutions (Theorem \ref{thm:gmgm-bio-well-foundedness}).

\begin{lemma}[Existence]
    \label{lem:existence}
    There exists a minimizer of the negative log likelihood in the domain $\boldsymbol{\Lambda}_\ell^{(r_{\ell})} \geq \epsilon_\ell$ (Assumption \ref{ass:minimum-eigenvalue}), where $\epsilon_\ell$ is some strictly positive constant (for each $\ell$),  as long as $\mathbf{E}_\ell^{(r_{\ell})} \in \mathbb{R}^{++}_{r_\ell}$ (i.e. $r_\ell \leq \mathrm{rank}\left[\mathbf{S}_\ell\right]$).
\end{lemma}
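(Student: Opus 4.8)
The plan is a textbook Weierstrass/coercivity argument: fix any choice of $\epsilon_\ell>0$, set $D_\epsilon := \{\boldsymbol{\lambda} : \boldsymbol{\lambda}_\ell^{(k_\ell)}\geq\epsilon_\ell\text{ componentwise, all }\ell\}$, and show that on the closed set $D_\epsilon$ the $\mathrm{NLL}$ — in the eigenvalue form established in the proof of Lemma \ref{lem:convexity} — is continuous and \emph{coercive}, so that a sublevel set is compact and the minimum is attained. Restricting to $D_\epsilon$ rather than the open positive orthant $\mathbb{R}_{k_\ell}^{++}$ of Lemma \ref{lem:convexity} does two jobs at once: $D_\epsilon$ is closed, and on it every eigenvalue of $\bigoplus_{\ell\in\gamma}\mathrm{diag}\,\boldsymbol{\lambda}_\ell$, which has the form $\sum_{\ell\in\gamma}(\boldsymbol{\lambda}_\ell)_{i_\ell}\geq\sum_{\ell\in\gamma}\epsilon_\ell>0$, is strictly positive; hence $\bigoplus_{\ell\in\gamma}\mathrm{diag}\,\boldsymbol{\lambda}_\ell\succ0$, the pseudodeterminant $\mathrm{logdet}^\dagger$ coincides with the ordinary $\mathrm{logdet}$, and the objective is a genuine finite continuous function of $\boldsymbol{\lambda}$ on $D_\epsilon$.

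First I would record the elementary bounds on the two pieces of $\mathrm{NLL}$ over $D_\epsilon$. Let $N:=\sum_\ell\mathrm{tr}\,\boldsymbol{\Lambda}_\ell^{(k_\ell)}$ be the total of all the eigenvalues and $e_{\min}:=\min_{\ell,i}(\mathbf{e}_\ell^{(k_\ell)})_i$; the hypothesis $\mathbf{E}_\ell^{(k_\ell)}\in\mathbb{R}_{k_\ell}^{++}$ is exactly what yields $e_{\min}>0$, so the linear term obeys $\tfrac12\sum_{\ell'}\mathrm{tr}[\mathbf{E}_{\ell'}^{(k_{\ell'})}\boldsymbol{\Lambda}_{\ell'}^{(k_{\ell'})}]\geq\tfrac12 e_{\min}N$. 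For the log-determinant term, each of the $k^\gamma_\forall$ eigenvalues of $\bigoplus_{\ell\in\gamma}\mathrm{diag}\,\boldsymbol{\lambda}_\ell$ lies in $[\,\sum_{\ell\in\gamma}\epsilon_\ell,\ N\,]$ (the upper bound because it is a nonnegative subsum of $N$), so $\mathrm{logdet}^\dagger[\bigoplus_{\ell\in\gamma}\mathrm{diag}\,\boldsymbol{\lambda}_\ell]\leq k^\gamma_\forall\log N$ once $N\geq1$. Writing $K:=\sum_\gamma k^\gamma_\forall$, these combine to $\mathrm{NLL}(\boldsymbol{\lambda})\geq\tfrac12 e_{\min}N-\tfrac12 K\log N$ for $N\geq1$, which tends to $+\infty$ as $N\to\infty$. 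Since every coordinate of a point of $D_\epsilon$ is positive, the set $\{\boldsymbol{\lambda}\in D_\epsilon : N(\boldsymbol{\lambda})\leq R\}$ is bounded for every $R$, so this inequality is precisely coercivity of $\mathrm{NLL}$ on $D_\epsilon$.

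The conclusion is then routine: fix $\boldsymbol{\lambda}^0\in D_\epsilon$ (for instance $\boldsymbol{\lambda}_\ell^{(k_\ell)}=\epsilon_\ell$ for every $\ell$) and consider $C:=\{\boldsymbol{\lambda}\in D_\epsilon : \mathrm{NLL}(\boldsymbol{\lambda})\leq\mathrm{NLL}(\boldsymbol{\lambda}^0)\}$. By the displayed bound $C$ is contained in a bounded region; it is closed because $D_\epsilon$ is closed and $\mathrm{NLL}$ is continuous on $D_\epsilon$; hence $C$ is compact, $\mathrm{NLL}$ attains a minimum on $C$, and that value is the minimum over all of $D_\epsilon$. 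Convexity (Lemma \ref{lem:convexity}) is not needed for existence here, though it will later guarantee that the set of minimizers is convex and, together with the identifiability analysis, give uniqueness up to the constant-diagonal shift.

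The one genuinely substantive point — and the step I would be most careful about — is seeing why $\mathbf{E}_\ell^{(k_\ell)}\in\mathbb{R}_{k_\ell}^{++}$ is indispensable, and arranging the estimate so that it visibly uses it. If some $(\mathbf{e}_\ell^{(k_\ell)})_i$ vanished, sending the corresponding $\lambda_{\ell,i}\to\infty$ would leave the linear term bounded while driving some eigenvalues of $\bigoplus_{\ell\in\gamma}\mathrm{diag}\,\boldsymbol{\lambda}_\ell$ to $+\infty$, so $-\mathrm{logdet}^\dagger\to-\infty$ and the infimum would be $-\infty$ with no minimizer; this is the analytic shadow of the earlier remark that one needs the decomposition rank $k_\ell$ to be at most $\mathrm{rank}[\mathbf{S}_\ell]$. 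Two cosmetic matters remain: discarding the additive $k^\gamma_\forall\log2\pi$ constants (harmless, as already noted), and — if a minimizer of the \emph{unconstrained} problem over $\mathbb{R}_{k_\ell}^{++}$ is wanted — remarking that for $\epsilon_\ell$ small enough the minimizer produced above lies in the open orthant and is therefore also a minimizer there; but that goes beyond what the statement claims.
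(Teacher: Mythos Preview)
Your proof is correct and takes a genuinely different route from the paper's. You run a textbook coercivity/Weierstrass argument: bound the linear part below by $\tfrac12 e_{\min}N$ using the positivity hypothesis on $\mathbf{E}_\ell^{(k_\ell)}$, bound the $\log\det$ part above by $\tfrac12 K\log N$, and conclude that sublevel sets are compact. The paper instead works coordinatewise: it writes out the stationarity condition $\mathbf{E}_{\ell_{ii}}=\sum_{\gamma|\ell\in\gamma}\sum_{\mathbf j}\bigl(\lambda_{\ell_i}+\sum_{\ell'}\lambda_{\ell'_{j_{\ell'}}}\bigr)^{-1}$, observes that once $\lambda_{\ell_i}$ exceeds $\bigl(\sum_\gamma d^\gamma_{\backslash\ell}\bigr)/\mathbf{E}_{\ell_{ii}}$ the partial derivative $\partial\mathrm{NLL}/\partial\lambda_{\ell_i}$ is strictly positive (since the other $\lambda$'s are at least $\epsilon_{\ell'}>0$), and hence the minimizer must lie in the explicit box $\prod_{\ell,i}[\epsilon_\ell,\ (\sum_\gamma d^\gamma_{\backslash\ell})/\mathbf{E}_{\ell_{ii}}]$.

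What each buys: your argument is shorter, entirely standard, and makes transparent exactly where the hypothesis $e_{\min}>0$ enters (and why it is sharp, as you note in your final paragraph). The paper's argument delivers more: an explicit, data-dependent upper bound on every eigenvalue at the optimum, which is useful both as a sanity check during optimization and as a diagnostic for choosing $k_\ell$ (if the iterates are pushing against the $\epsilon_\ell$ floor, $k_\ell$ is too large). Either proof suffices for the statement as written.
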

\begin{proof}
    First, note that $\bigtimes_{\ell}\bigtimes_{1\leq i\leq r_\ell} [\epsilon_\ell, \infty]$ is a compact subset of the extended reals, and thus a minimum to any continuous function must exist on this set.  As it is convex, it will be a global minimum.  However, the minimum could exist at $\infty$, i.e., only in the limit.  To rule out this case, we will establish an upper bound on the solution.

    Recall that any minimum must satisfy $\mathbf{E}_\ell^{(r_{\ell})} = \sum_{\gamma|\ell\in\gamma}\mathrm{tr}^{k^\gamma_{<\ell}}_{k^\gamma_{>\ell}}\left[\left(\oplus_{\ell'\in\gamma}\mathbf{\Lambda}_{\ell'}^{(r_{\ell'})}\right)^\dagger\right]$ (for each $\ell$).  It will be helpful to see what this formula equates to elementwise:

    \begin{align*}
        \mathbf{E}_{\ell_{ii}}^{(r_{\ell})} &= \sum_{\gamma|\ell\in\gamma}\mathrm{tr}^{d^\gamma_{<\ell}}_{d^\gamma_{>\ell}}\left[\left(\oplus_{\ell'\in\gamma}\mathbf{\Lambda}_{\ell'}^{(r_{\ell'})}\right)^\dagger\right]_{ii} \\
        &= \sum_{\gamma|\ell\in\gamma}\mathrm{tr}\left[\left(\oplus_{\ell'\in\gamma}\mathbf{\Lambda}_{\ell'}^{(r_{\ell'})}\right)^\dagger\left(\mathbf{I}_{r_{<\ell}^\gamma}\otimes\mathbf{J}^{ii}\otimes\mathbf{I}_{r_{>\ell}^\gamma}\right)\right] \\
        &= \sum_{\gamma|\ell\in\gamma}\sum_{\text{  }\mathbf{j} \text{ indexing all non-$\ell$ axes}} \frac{1}{\lambda_{\ell_i} + \sum_{\ell'} \lambda_{\ell'_{j_{\ell'}}}}
    \end{align*}

    Thus, every element of $\mathbf{E}_\ell^{(r_{\ell})}$ has a corresponding element of $\mathbf{\Lambda}_\ell^{(r_{\ell})}$ that appears in all summands composing it, and vice versa.  Suppose $\lambda_{\ell_i} = \frac{\sum_\gamma d_{\backslash\ell}^\gamma}{\mathbf{E}_{\ell_{ii}}}$ (for a given $\ell$, and every $i$).  Note that if for all other $\ell'$, $\lambda_{\ell'_{j_{\ell'}}}$ were 0, then the sum of these $\lambda_\ell$ terms would sum to exactly $\mathbf{E}_{\ell_{ii}}$.  However, as $\boldsymbol{\Lambda}_\ell^{(r_{\ell})} \geq \epsilon_\ell$, each reciprocal is smaller than the case where they are 0; thus, they cannot add up to $\mathbf{E}_{\ell_{ii}}$.  
    
    As $\lambda_{\ell_i}$ increases above this bound, the problem only becomes more severe.  This is easy to see by checking the gradient, which indicates that the negative log-likelihood increases as one increases $\lambda_{\ell_i}$.

    \begin{align*}
        \frac{\partial}{\partial\mathbf{\Lambda}_{\ell_{ii}}^{(r_\ell)}} \mathrm{NLL} &= \frac{1}{2}\left(\mathbf{E}_{\ell_{ii}} -  \sum_{\gamma|\ell\in\gamma}\sum_{\text{  }\mathbf{j} \text{ indexing all non-$\ell$ axes}} \frac{1}{\lambda_{\ell_i} + \sum_{\ell'} \lambda_{\ell'_{j_{\ell'}}}}\right) \\
        &> \frac{1}{2}\left(\mathbf{E}_{\ell_{ii}} -  \mathbf{E}_{\ell_{ii}}\right) \\
        &= 0
    \end{align*}

    This implies that the minimum must occur between $\frac{\sum_\gamma d_{\backslash\ell}^\gamma}{\mathbf{E}_{\ell_{ii}}} \geq \lambda_{\ell_{ii}} \geq \epsilon_\ell$.  Clearly, this rules out the unsavory case of the minimum occurring in the limit.  This completes the proof.
\end{proof}

We already saw in Section \ref{sec:gmgm-bio-identifiability} that the diagonals of $\mathbf{\Psi}_\ell$ are not identifiable.  However, this non-identifiability is only an effect of our parameterization of the full precision matrices $\{\mathbf{\Omega}^\gamma\}$; it is reasonable to ask if the solution is unique up to this non-identifiability; Lemma \ref{lem:uniqueness-up-to-identifiability} shows that this is true.

\begin{lemma}[Uniqueness up-to Identifiability]
    \label{lem:uniqueness-up-to-identifiability}
    The solution of our optimization problem is unique, except for the non-identifiability in the diagonals. 
\end{lemma}
\begin{proof}
    As mentioned earlier, we can frame our optimization problem as the strictly convex problem of unstructured precision matrix estimation (as in the graphical lasso problem), subject to a (convex) Kronecker-sum-decomposability constraint.  As the problem is strictly convex, there is a unique solution.  Thus, any non-uniqueness must come from our parameterization of Kronecker-sum-decomposable matrices.
\end{proof}

We now tie these two lemmas together to complete the statement of Theorem \ref{thm:gmgm-bio-well-foundedness}.

\begin{theorem*}[Theorem \ref{thm:gmgm-bio-well-foundedness}]
    There exists a unique MLE (up to the non-identifiability of the diagonals of the Kronecker sum) that estimates the precision matrix of the singular Kronecker-sum-structured normal distribution, as long as $r_\ell \leq \mathrm{rank}\left[\mathbf{S}_\ell\right]$, subject to Assumption \ref{ass:minimum-eigenvalue}.
\end{theorem*}
\begin{proof}
    This follows directly from Lemmas \ref{lem:existence} and \ref{lem:uniqueness-up-to-identifiability}.
\end{proof}

\section{Derivation of the Fisher information}
\label{sec:gmgm-bio-fisher-information}

This section contains the derivation of the Fisher Information, which is far too long and tedious to put in the main paper.  For convenience, we repeat the definition of the $\mathrm{NLL}$.

\begin{align*}
    \mathrm{NLL} &= \frac{-1}{2}\sum_\gamma \left(\log\mathrm{det}\left[\mathbf{t}^T\mathbf{a}^\gamma\mathbf{I}_{d_\forall^\gamma} + \bigoplus_{\ell\in\gamma}\mathbf{\Psi}_\ell\right] - \sum_{\ell\in\gamma} \mathrm{tr}\left[\mathbf{S}_\ell^\gamma\mathbf{\Psi}_\ell\right] - \sum_{\ell\in\gamma} \frac{\mathbf{t}^T\mathbf{a}^\gamma}{L^\gamma}\mathrm{tr}\left[\mathbf{S}^\gamma_\ell\right]\right)
\end{align*}

Lemmas \ref{lem:first-derivative-psi}, \ref{lem:t-derivatives}, and \ref{lem:double-psi-derivative} contain the calculations that derive the Hessian of the $\mathrm{NLL}$.

\begin{lemma}
    \label{lem:first-derivative-psi}
    $\frac{\partial}{\partial\psi_{\ell_{ij}}} \mathrm{NLL} = -\frac{1}{2}\mathrm{sym}\left[\sum_{\gamma|\ell\in\gamma} \mathrm{tr}^{d^\gamma_{<\ell}}_{d^\gamma_{>\ell}}\left[\left(\mathbf{t}^T\mathbf{a}^\gamma\mathbf{I_{d^\gamma_\forall}} + \bigoplus_{\ell\in\gamma}\mathbf{\Psi}_\ell\right)^{-1}\right]_{ij} - \mathbf{S}_{\ell_{ij}}\right]$
\end{lemma}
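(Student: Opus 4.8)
The plan is to differentiate the $\mathrm{NLL}$ of the identifiable parameterization term by term, treating $\psi_{\ell_{ij}}$ as the free parameter (with $\psi_{\ell_{ji}}$ identified with it, which is the source of the $\mathrm{sym}$ wrapper). Write $\mathbf{\Omega}^\gamma = \mathbf{t}^T\mathbf{a}^\gamma \mathbf{I}_{d^\gamma_\forall} + \bigoplus_{\ell'\in\gamma}\mathbf{\Psi}_{\ell'}$. First I would note that the third summand $\sum_{\ell'\in\gamma}\frac{\mathbf{t}^T\mathbf{a}^\gamma}{L^\gamma}\mathrm{tr}[\mathbf{S}^\gamma_{\ell'}]$ does not depend on any $\mathbf{\Psi}_{\ell'}$, so it contributes nothing. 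For the second summand, $\frac{\partial}{\partial\psi_{\ell_{ij}}}\mathrm{tr}[\mathbf{S}^\gamma_{\ell}\mathbf{\Psi}_\ell] = (\mathbf{S}^\gamma_\ell)_{ij}$ (ignoring symmetry), and summing over the modalities containing $\ell$ gives $\sum_{\gamma|\ell\in\gamma}(\mathbf{S}^\gamma_\ell)_{ij} = (\mathbf{S}_\ell)_{ij}$ by the definition of the effective Gram matrix.

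The only nontrivial piece is the log-determinant term. Since in the identifiable representation $\mathbf{t}^T\mathbf{a}^\gamma > 0$ shifts every eigenvalue of $\bigoplus_{\ell'\in\gamma}\mathbf{\Psi}_{\ell'}$ strictly above zero, $\mathbf{\Omega}^\gamma$ is full rank and the ordinary identity $\frac{\partial}{\partial A}\log\det A = A^{-1}$ applies (no pseudodeterminant subtlety here). Applying the chain rule, $\frac{\partial}{\partial\psi_{\ell_{ij}}}\log\det\mathbf{\Omega}^\gamma = \mathrm{tr}\left[(\mathbf{\Omega}^\gamma)^{-1}\frac{\partial}{\partial\psi_{\ell_{ij}}}\mathbf{\Omega}^\gamma\right]$. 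I would then reuse the computation from the proof of Lemma~\ref{lem:S_ell}, which shows $\frac{\partial}{\partial\psi_{\ell_{ij}}}\bigoplus_{\ell'\in\gamma}\mathbf{\Psi}_{\ell'} = \mathbf{I}_{d^\gamma_{<\ell}}\otimes\mathbf{J}^{ij}_{d_\ell\times d_\ell}\otimes\mathbf{I}_{d^\gamma_{>\ell}}$ (the constant shift drops out). Substituting this and invoking the definition of the stridewise-blockwise trace, $\mathrm{tr}[\mathbf{M}(\mathbf{I}_a\otimes\mathbf{J}^{ij}\otimes\mathbf{I}_b)] = \mathrm{tr}^a_b[\mathbf{M}]_{ij}$, yields $\frac{\partial}{\partial\psi_{\ell_{ij}}}\log\det\mathbf{\Omega}^\gamma = \mathrm{tr}^{d^\gamma_{<\ell}}_{d^\gamma_{>\ell}}\left[(\mathbf{\Omega}^\gamma)^{-1}\right]_{ij}$, and summing over $\gamma$ with $\ell\in\gamma$ gives the first term inside the bracket of the claim.

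Collecting the pieces, $\frac{\partial}{\partial\psi_{\ell_{ij}}}\mathrm{NLL} = -\frac{1}{2}\left(\sum_{\gamma|\ell\in\gamma}\mathrm{tr}^{d^\gamma_{<\ell}}_{d^\gamma_{>\ell}}[(\mathbf{\Omega}^\gamma)^{-1}]_{ij} - \mathbf{S}_{\ell_{ij}}\right)$, and finally I would wrap the whole expression in $\mathrm{sym}[\cdot]$ to account for the symmetry constraint on $\mathbf{\Psi}_\ell$, exactly as in Lemma~\ref{lem:S_ell} (citing \citet{srinivasan_what_2023} as there). I do not expect a genuine obstacle: the computation is routine given Lemma~\ref{lem:S_ell} and the definition of the stridewise-blockwise trace; the only point that needs a sentence of care is justifying the passage from $\mathrm{logdet}^\dagger$ to $\mathrm{logdet}$, i.e. that the additive $\mathbf{t}^T\mathbf{a}^\gamma\mathbf{I}$ term renders $\mathbf{\Omega}^\gamma$ nonsingular so that the standard log-determinant derivative is valid.
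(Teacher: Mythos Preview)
Your proposal is correct and matches the paper's approach: the paper's proof is the single sentence ``The derivation is nearly identical to that of Lemma~\ref{lem:S_ell},'' and your write-up is precisely that derivation carried out in the identifiable parameterization, using the same Kronecker-sum derivative and stridewise-blockwise trace step. Your extra remark that the additive $\mathbf{t}^T\mathbf{a}^\gamma\mathbf{I}$ term lets one use the ordinary $\log\det$ derivative rather than the pseudodeterminant version is a welcome clarification the paper leaves implicit (note it already writes $\log\det$ rather than $\log\det^\dagger$ in the appendix $\mathrm{NLL}$).
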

\begin{proof}
    This follows from Lemma \ref{lem:S_ell}.
\end{proof}

\begin{lemma}
    \label{lem:t-derivatives}
    \begin{align*}
        \frac{\partial}{\partial t_i} \mathrm{NLL} &= \sum_\gamma -\frac{a^\gamma_i}{2}\mathrm{tr}\left[\left(\mathbf{t}^T\mathbf{a}^\gamma\mathbf{I}_{d^\gamma_\forall} + \bigoplus_{\ell\in\gamma}\mathbf{\Psi}_\ell\right)^{-1}\right] + \frac{a^\gamma_i}{2}\mathrm{tr}\left[\mathbf{S}^\gamma\right] \\
        \frac{\partial}{\partial t_i\partial t_j} \mathrm{NLL} &= \sum_\gamma \frac{a^\gamma_ia^\gamma_j}{2}\mathrm{tr}\left[\left(\mathbf{t}^T\mathbf{a}^\gamma\mathbf{I}_{d^\gamma_\forall} + \bigoplus_{\ell\in\gamma}\mathbf{\Psi}_\ell\right)^{-2}\right]
        \\
        \frac{\partial}{\partial t_i\partial \psi_{\ell_{jk}}} \mathrm{NLL} &= \mathrm{sym}\left[\sum_{\gamma|\ell\in\gamma} \frac{a^\gamma_i}{2}\mathrm{tr}^{d^\gamma_{<\ell}}_{d^\gamma_{>\ell}}\left[\left(\mathbf{t}^T\mathbf{a}^\gamma\mathbf{I}_{d^\gamma_\forall} + \bigoplus_{\ell\in\gamma}\mathbf{\Psi}_\ell\right)^{-2}\right]\right]_{jk}
    \end{align*}
\end{lemma}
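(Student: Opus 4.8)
The plan is to differentiate the $\mathrm{NLL}$ term by term, treating $\mathbf{t}^T\mathbf{a}^\gamma$ as a scalar depending linearly on each $t_i$ with $\partial(\mathbf{t}^T\mathbf{a}^\gamma)/\partial t_i = a^\gamma_i$. Throughout I abbreviate $\mathbf{A}^\gamma = \mathbf{t}^T\mathbf{a}^\gamma\mathbf{I}_{d^\gamma_\forall} + \bigoplus_{\ell\in\gamma}\mathbf{\Psi}_\ell$ and use the standard matrix-calculus identities $\partial_x\log\det\mathbf{A} = \mathrm{tr}[\mathbf{A}^{-1}\partial_x\mathbf{A}]$ and $\partial_x\mathbf{A}^{-1} = -\mathbf{A}^{-1}(\partial_x\mathbf{A})\mathbf{A}^{-1}$, together with the cyclic property of the trace.

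For the first-order derivative, only the $\log\det$ term and the last term of the $\mathrm{NLL}$ depend on $t_i$. Since $\partial_{t_i}\mathbf{A}^\gamma = a^\gamma_i\mathbf{I}_{d^\gamma_\forall}$, the $\log\det$ term contributes $a^\gamma_i\mathrm{tr}[(\mathbf{A}^\gamma)^{-1}]$. For the last term I use that each $\mathrm{tr}[\mathbf{S}^\gamma_\ell] = \lVert\mathcal{D}^\gamma\rVert_F^2$ is the squared Frobenius norm of the tensor and hence independent of $\ell$, so $\sum_{\ell\in\gamma}\mathrm{tr}[\mathbf{S}^\gamma_\ell] = L^\gamma\mathrm{tr}[\mathbf{S}^\gamma]$ and $\partial_{t_i}\sum_{\ell\in\gamma}\tfrac{\mathbf{t}^T\mathbf{a}^\gamma}{L^\gamma}\mathrm{tr}[\mathbf{S}^\gamma_\ell] = a^\gamma_i\mathrm{tr}[\mathbf{S}^\gamma]$. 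Multiplying by $-\tfrac12$ and summing over $\gamma$ gives the stated formula for $\partial_{t_i}\mathrm{NLL}$.

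For $\partial_{t_i}\partial_{t_j}\mathrm{NLL}$, I differentiate $-\tfrac{a^\gamma_i}{2}\mathrm{tr}[(\mathbf{A}^\gamma)^{-1}]$ in $t_j$; using $\partial_{t_j}(\mathbf{A}^\gamma)^{-1} = -a^\gamma_j(\mathbf{A}^\gamma)^{-2}$ yields $\tfrac{a^\gamma_ia^\gamma_j}{2}\mathrm{tr}[(\mathbf{A}^\gamma)^{-2}]$, while the $\mathrm{tr}[\mathbf{S}^\gamma]$ piece is constant in $t_j$ and drops out. For $\partial_{t_i}\partial_{\psi_{\ell_{jk}}}\mathrm{NLL}$, I differentiate the same quantity in $\psi_{\ell_{jk}}$; recalling from Lemma \ref{lem:S_ell} that $\partial_{\psi_{\ell_{jk}}}\mathbf{A}^\gamma = \mathbf{I}_{d^\gamma_{<\ell}}\otimes\mathbf{J}^{jk}\otimes\mathbf{I}_{d^\gamma_{>\ell}}$ (before symmetrization), the chain rule and the cyclic property of the trace give $\partial_{\psi_{\ell_{jk}}}\mathrm{tr}[(\mathbf{A}^\gamma)^{-1}] = -\mathrm{tr}[(\mathbf{A}^\gamma)^{-2}(\mathbf{I}_{d^\gamma_{<\ell}}\otimes\mathbf{J}^{jk}\otimes\mathbf{I}_{d^\gamma_{>\ell}})] = -\mathrm{tr}^{d^\gamma_{<\ell}}_{d^\gamma_{>\ell}}[(\mathbf{A}^\gamma)^{-2}]_{jk}$ by the definition of the stridewise-blockwise trace. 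Only modalities with $\ell\in\gamma$ contribute, and applying $\mathrm{sym}[\cdot]$ to account for the symmetry of $\mathbf{\Psi}_\ell$, exactly as in Lemma \ref{lem:S_ell}, produces the claimed expression.

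The computations are routine; the only points requiring care are the Frobenius-norm identity $\sum_{\ell\in\gamma}\mathrm{tr}[\mathbf{S}^\gamma_\ell] = L^\gamma\mathrm{tr}[\mathbf{S}^\gamma]$ that collapses the last $\mathrm{NLL}$ term, and the translation of $\mathrm{tr}[(\mathbf{A}^\gamma)^{-2}(\mathbf{I}\otimes\mathbf{J}^{jk}\otimes\mathbf{I})]$ into stridewise-blockwise-trace notation while keeping track of the symmetrization $\mathrm{sym}[\cdot]$ (which changes the value of the mixed derivative but not its symmetric part). I do not expect any genuine obstacle.
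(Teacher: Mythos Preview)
Your proposal is correct and follows essentially the same route as the paper's own proof: differentiate the $\log\det$ term via $\mathrm{tr}[\mathbf{A}^{-1}\partial\mathbf{A}]$ with $\partial_{t_i}\mathbf{A}^\gamma=a^\gamma_i\mathbf{I}$, collapse the $\mathbf{S}^\gamma_\ell$ sum using that every $\mathrm{tr}[\mathbf{S}^\gamma_\ell]$ equals the tensor's squared Frobenius norm, and then obtain the second and mixed derivatives from $\partial\mathbf{A}^{-1}=-\mathbf{A}^{-1}(\partial\mathbf{A})\mathbf{A}^{-1}$, rewriting the mixed case via the stridewise--blockwise trace and symmetrizing at the end. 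The only expository difference is that you make the Frobenius-norm identity explicit, whereas the paper invokes it silently.
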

\begin{proof}
    First, observe the following:

    \begin{align*}
        \frac{\partial}{\partial t_i} \log\mathrm{det}\left[\mathbf{t}^T\mathbf{a}^\gamma \mathbf{I}_{d^\gamma_\forall} + \bigoplus_{\ell\in\gamma}\mathbf{\Psi}_\ell\right] &= \mathrm{tr}\left[\left(\mathbf{t}^T\mathbf{a}^\gamma \mathbf{I}_{d^\gamma_\forall} + \bigoplus_{\ell\in\gamma}\mathbf{\Psi}_\ell\right)^{-1}\frac{\partial}{\partial t_i} \left(\mathbf{t}^T\mathbf{a}^\gamma \mathbf{I}_{d^\gamma_\forall} + \bigoplus_{\ell\in\gamma}\mathbf{\Psi}_\ell\right)\right] \\
        &= \mathrm{tr}\left[\left(\mathbf{t}^T\mathbf{a}^\gamma \mathbf{I}_{d^\gamma_\forall} + \bigoplus_{\ell\in\gamma}\mathbf{\Psi}_\ell\right)^{-1}a_i^\gamma \mathbf{I}_{d^\gamma_\forall}\right] \\
        &= a_i^\gamma\mathrm{tr}\left[\left(\mathbf{t}^T\mathbf{a}^\gamma \mathbf{I}_{d^\gamma_\forall} + \bigoplus_{\ell\in\gamma}\mathbf{\Psi}_\ell\right)^{-1}\right]
    \end{align*}

    Furthermore, note that:

    \begin{align*}
        \frac{\partial}{\partial t_i}\sum_{\ell\in\gamma} \frac{\mathbf{t}^T\mathbf{a}^\gamma}{L^\gamma}\mathrm{tr}\left[\mathbf{\mathbf{S}^\gamma_\ell}\right] &= \sum_{\ell\in\gamma}\frac{a^\gamma_i}{L^\gamma}\mathrm{tr}\left[\mathbf{S}^\gamma_\ell\right] \\
        &= \frac{a^\gamma_i}{L^\gamma}\mathrm{tr}\left[\mathbf{S}^\gamma\right]
    \end{align*}

    Putting these together yields the first claim:

    \begin{align*}
        \frac{\partial}{\partial t_i} \mathrm{NLL} &= \sum_\gamma -\frac{a^\gamma_i}{2}\mathrm{tr}\left[\left(\mathbf{t}^T\mathbf{a}^\gamma\mathbf{I}_{d^\gamma_\forall} + \bigoplus_{\ell\in\gamma}\mathbf{\Psi}_\ell\right)^{-1}\right] + \frac{a^\gamma_i}{2}\mathrm{tr}\left[\mathbf{S}^\gamma\right]
    \end{align*}

    Differentiating a second time results in the following:

    \begin{align*}
        \frac{\partial}{\partial t_i \partial t_j} \mathrm{NLL} &= \sum_\gamma \frac{a^\gamma_i}{2}\mathrm{tr}\left[\left(\mathbf{t}^T\mathbf{a}^\gamma\mathbf{I}_{d^\gamma_\forall} + \bigoplus_{\ell\in\gamma}\mathbf{\Psi}_\ell\right)^{-1}\left(a^\gamma_i\mathbf{I}_{d^\gamma_\forall}\right)\left(\mathbf{t}^T\mathbf{a}^\gamma\mathbf{I}_{d^\gamma_\forall} + \bigoplus_{\ell\in\gamma}\mathbf{\Psi}_\ell\right)^{-1}\right] \\
        &= \sum_\gamma \frac{a^\gamma_ia^\gamma_j}{2}\mathrm{tr}\left[\left(\mathbf{t}^T\mathbf{a}^\gamma\mathbf{I}_{d^\gamma_\forall} + \bigoplus_{\ell\in\gamma}\mathbf{\Psi}_\ell\right)^{-2}\right]
    \end{align*}

    Likewise, differentiating w.r.t. $\psi_{\ell_{jk}}$ gives us the final part of our result.  We'll delay the symmetrization step until the end to prevent the equation running off the right margin of the page.

    \begin{align*}
        \frac{\partial^\mathrm{nosym}}{\partial t_i \partial \psi_{\ell_{jk}}} \mathrm{NLL} &= \sum_{\gamma|\ell\in\gamma} \frac{a^\gamma_i}{2}\mathrm{tr}\left[\left(\mathbf{t}^T\mathbf{a}^\gamma\mathbf{I}_{d^\gamma_\forall} + \bigoplus_{\ell\in\gamma}\mathbf{\Psi}_\ell\right)^{-1}\left(\mathbf{I}_{d^\gamma_{<\ell}} \otimes \mathbf{J}^{jk} \otimes \mathbf{I}_{d^\gamma_{>\ell}}\right)\left(\mathbf{t}^T\mathbf{a}^\gamma\mathbf{I}_{d^\gamma_\forall} + \bigoplus_{\ell\in\gamma}\mathbf{\Psi}_\ell\right)^{-1}\right] \\
        &= \sum_{\gamma|\ell\in\gamma} \frac{a^\gamma_i}{2}\mathrm{tr}^{d^\gamma_{<\ell}}_{d^\gamma_{>\ell}}\left[\left(\mathbf{t}^T\mathbf{a}^\gamma\mathbf{I}_{d^\gamma_\forall} + \bigoplus_{\ell\in\gamma}\mathbf{\Psi}_\ell\right)^{-2}\right]_{jk} \\
        \frac{\partial}{\partial t_i \partial \psi_{\ell_{jk}}} \mathrm{NLL} &= \mathrm{sym}\left[\sum_{\gamma|\ell\in\gamma} \frac{a^\gamma_i}{2}\mathrm{tr}^{d^\gamma_{<\ell}}_{d^\gamma_{>\ell}}\left[\left(\mathbf{t}^T\mathbf{a}^\gamma\mathbf{I}_{d^\gamma_\forall} + \bigoplus_{\ell\in\gamma}\mathbf{\Psi}_\ell\right)^{-2}\right]\right]_{jk}
    \end{align*}

    This completes the proof.
\end{proof}

\begin{lemma}
    \label{lem:double-psi-derivative}
    \begin{align*}
        \frac{\partial^\mathrm{nosym}}{\partial\psi_{\ell^1_{i^1j^1}}\partial\psi_{\ell^2_{i^2j^2}}} \mathrm{NLL} &= \frac{1}{2}\sum_{\gamma|\ell^1,\ell^2\in\gamma} \mathrm{tr}^{d^\gamma_{<\ell^1}}_{d^\gamma_{>\ell^1}}\left[\left(\mathbf{\Omega}^\gamma\right)^{-1}\left(\mathbf{I}_{d^\gamma_{<\ell^2}} \otimes \mathbf{J}^{i^2j^2} \otimes \mathbf{I}_{d^\gamma_{>\ell^2}} \right)\left(\mathbf{\Omega}^\gamma\right)^{-1}\right]_{i^1j^1}&
    \end{align*}
\end{lemma}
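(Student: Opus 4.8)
The plan is to obtain the Hessian entry by differentiating the first-order derivative of Lemma \ref{lem:first-derivative-psi} once more, using only the standard identity for the derivative of a matrix inverse. First I would write out the (un-symmetrized) first derivative, noting which pieces of the $\mathrm{NLL}$ can possibly survive a second differentiation: the term $-\sum_{\ell\in\gamma}\mathrm{tr}\left[\mathbf{S}^\gamma_\ell\mathbf{\Psi}_\ell\right]$ is linear in $\mathbf{\Psi}$, so it contributes the constant $-\mathbf{S}_{\ell^1_{i^1j^1}}$ and then drops out, and the trace-parameter term $\sum_{\ell\in\gamma}\frac{\mathbf{t}^T\mathbf{a}^\gamma}{L^\gamma}\mathrm{tr}\left[\mathbf{S}^\gamma_\ell\right]$ does not depend on $\mathbf{\Psi}$ at all. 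Hence, exactly as in Lemma \ref{lem:first-derivative-psi} prior to symmetrization,
\[
\frac{\partial^{\mathrm{nosym}}}{\partial\psi_{\ell^1_{i^1j^1}}}\mathrm{NLL} = \frac{-1}{2}\sum_{\gamma|\ell^1\in\gamma}\left(\mathrm{tr}^{d^\gamma_{<\ell^1}}_{d^\gamma_{>\ell^1}}\left[\left(\mathbf{\Omega}^\gamma\right)^{-1}\right]_{i^1j^1} - \mathbf{S}_{\ell^1_{i^1j^1}}\right),
\]
where we use the full-rank representation $\mathbf{\Omega}^\gamma = \mathbf{t}^T\mathbf{a}^\gamma\mathbf{I}_{d^\gamma_\forall} + \bigoplus_{\ell\in\gamma}\mathbf{\Psi}_\ell$ (and its ordinary inverse) as elsewhere in this appendix.

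Next I would differentiate with respect to $\psi_{\ell^2_{i^2j^2}}$. Two facts do all the work. First, as already computed in the proof of Lemma \ref{lem:S_ell}, $\frac{\partial}{\partial\psi_{\ell^2_{i^2j^2}}}\bigoplus_{\ell'\in\gamma}\mathbf{\Psi}_{\ell'} = \mathbf{I}_{d^\gamma_{<\ell^2}}\otimes\mathbf{J}^{i^2j^2}\otimes\mathbf{I}_{d^\gamma_{>\ell^2}}$ when $\ell^2\in\gamma$, and is $\mathbf{0}$ otherwise; this is what collapses the summation from $\{\gamma\mid\ell^1\in\gamma\}$ to $\{\gamma\mid\ell^1,\ell^2\in\gamma\}$. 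Second, the matrix-inverse rule gives
\[
\frac{\partial}{\partial\psi_{\ell^2_{i^2j^2}}}\left(\mathbf{\Omega}^\gamma\right)^{-1} = -\left(\mathbf{\Omega}^\gamma\right)^{-1}\left(\mathbf{I}_{d^\gamma_{<\ell^2}}\otimes\mathbf{J}^{i^2j^2}\otimes\mathbf{I}_{d^\gamma_{>\ell^2}}\right)\left(\mathbf{\Omega}^\gamma\right)^{-1}.
\]
Since the stridewise-blockwise trace $\mathrm{tr}^a_b[\cdot]_{i^1j^1}$ is a linear functional (by its definition $\mathrm{tr}\left[\mathbf{M}\left(\mathbf{I}_a\otimes\mathbf{J}^{i^1j^1}\otimes\mathbf{I}_b\right)\right]$), it commutes with the derivative, so I may substitute directly inside it. Carrying along the $\frac{-1}{2}$ prefactor, the two minus signs cancel and I arrive at
\[
\frac{\partial^{\mathrm{nosym}}}{\partial\psi_{\ell^1_{i^1j^1}}\partial\psi_{\ell^2_{i^2j^2}}}\mathrm{NLL} = \frac{1}{2}\sum_{\gamma|\ell^1,\ell^2\in\gamma}\mathrm{tr}^{d^\gamma_{<\ell^1}}_{d^\gamma_{>\ell^1}}\left[\left(\mathbf{\Omega}^\gamma\right)^{-1}\left(\mathbf{I}_{d^\gamma_{<\ell^2}}\otimes\mathbf{J}^{i^2j^2}\otimes\mathbf{I}_{d^\gamma_{>\ell^2}}\right)\left(\mathbf{\Omega}^\gamma\right)^{-1}\right]_{i^1j^1},
\]
which is the claimed identity.

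There is no genuine obstacle here; the computation is routine matrix calculus, and the heavy lifting (the form of $\partial_{\psi}\bigoplus\mathbf{\Psi}$ and the behaviour of $\mathrm{tr}^a_b$) was already established. The only points needing care are bookkeeping ones: keeping the ``nosym'' version straight (the symmetrization step, applied at the end as in Lemma \ref{lem:t-derivatives}, is deliberately suppressed here), tracking the sign from the inverse-derivative rule against the $\frac{-1}{2}$ in front of the $\mathrm{NLL}$, and not conflating the two axes — the inner factor $\mathbf{J}^{i^2j^2}$ lives on the $\ell^2$ block while the outer stridewise-blockwise trace is still taken with respect to axis $\ell^1$ at position $(i^1,j^1)$. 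Note also that this intermediate identity needs no assumption that $\ell^1\neq\ell^2$; the split into the $\ell^1=\ell^2$ and $\ell^1\neq\ell^2$ cases, and evaluation under the null hypothesis $\mathbf{\Omega}^\gamma = \frac{1}{(\sigma^\gamma)^2}\mathbf{I}_{d^\gamma_\forall}$, is carried out afterwards en route to Lemma \ref{lem:null-hypothesis-F}.
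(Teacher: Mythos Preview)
Your proposal is correct and follows essentially the same approach as the paper: start from the first derivative of Lemma~\ref{lem:first-derivative-psi}, drop the $\mathbf{S}$-term as it is constant in $\mathbf{\Psi}$, and apply the matrix-inverse derivative rule together with the known form of $\partial_{\psi}\bigoplus\mathbf{\Psi}$ inside the linear functional $\mathrm{tr}^a_b[\cdot]_{i^1j^1}$. If anything, you are slightly more explicit than the paper about why the stridewise-blockwise trace commutes with the derivative and why the sum collapses to $\{\gamma\mid\ell^1,\ell^2\in\gamma\}$.
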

\begin{proof}

    This follows from Lemma \ref{lem:first-derivative-psi}.  First we will compute the unsymmetrized derivative.

    \begin{align*}
        \frac{\partial^\mathrm{nosym}}{\partial\psi_{\ell^1_{i^1j^1}}\partial\psi_{\ell^2_{i^2j^2}}} \mathrm{NLL} &= -\frac{1}{2}\frac{\partial}{\partial\psi_{\ell^2_{i^2j^2}}}\sum_{\gamma|\ell^1\in\gamma} \mathrm{tr}^{d^\gamma_{<\ell^1}}_{d^\gamma_{>\ell^1}}\left[\left(\mathbf{t}^T\mathbf{a}^\gamma\mathbf{I_{d^\gamma_\forall}} + \bigoplus_{\ell\in\gamma}\mathbf{\Psi}_{\ell^1}\right)^{-1}\right]_{i^1j^1} \\
        &= -\frac{1}{2}\frac{\partial}{\partial\psi_{\ell^2_{i^2j^2}}}\sum_{\gamma|\ell^1\in\gamma} \mathrm{tr}^{d^\gamma_{<\ell^1}}_{d^\gamma_{>\ell^1}}\left[\left(\mathbf{\Omega}^\gamma\right)^{-1}\right]_{i^1j^1} \\
        &= \frac{1}{2}\sum_{\gamma|\ell^1,\ell^2\in\gamma} \mathrm{tr}^{d^\gamma_{<\ell^1}}_{d^\gamma_{>\ell^1}}\left[\left(\mathbf{\Omega}^\gamma\right)^{-1}\left(\mathbf{I}_{d^\gamma_{<\ell^2}} \otimes \mathbf{J}^{i^2j^2} \otimes \mathbf{I}_{d^\gamma_{>\ell^2}} \right)\left(\mathbf{\Omega}^\gamma\right)^{-1}\right]_{i^1j^1} \\
    \end{align*}

    \sloppy If we let the unsymmetrized derivative be $\mathbf{X}_{\ell^1_{i^1j^1}\ell^2_{i^2j^2}}$, the symmetrized version is:
    
    \[\frac{\mathbf{X}_{\ell^1_{i^1j^1}\ell^2_{i^2j^2}} + \mathbf{X}_{\ell^1_{j^1i^1}\ell^2_{i^2j^2}} + \mathbf{X}_{\ell^1_{i^1j^1}\ell^2_{j^2i^2}} + \mathbf{X}_{\ell^1_{j^1i^1}\ell^2_{j^2i^2}}}{4}\]
\end{proof}

With these, we can derive $\mathbf{F}$.  As the derivation of inter-axis connections in $\mathbf{F}$ is complicated, and will be aided by the following lemma, which is just an application of the fact that the trace distributes over Kronecker products.

\begin{lemma}
    \label{lem:trace-extraction}
    Let $\left\{\mathbf{M}_\ell\right\}_\ell$ be a subset of matrices where some subset $X$ has the property that, for all $\ell\in X$, $\mathbf{M}_\ell = \mathbf{I}_{d_\ell \times d_\ell}$, where $d_\ell$ represents the size of the $\ell$th element, and, for all $\ell\notin X$, $\mathbf{M}_\ell = \mathbf{J}_{d_\ell \times d_\ell}^{i_\ell j_\ell}$ (the matrix of zeros except at with a 1 at position $(i_\ell, j_\ell)$).

    Then $\mathrm{tr}\left[\bigotimes_\ell \mathbf{M}_\ell\right] = \left(\prod_{\ell\in X} d_\ell\right)\left(\prod_{\ell\notin X}\delta_{i_\ell j_\ell}\right)$.
\end{lemma}
\begin{proof}
    \begin{align*}
        \mathrm{tr}\left[\bigotimes_\ell \mathbf{M}_\ell\right] &= \prod_\ell \mathrm{tr}\left[\mathbf{M}_\ell\right] \\
        &= \prod_{\ell\in X} \mathrm{tr}\left[\mathbf{M}_\ell\right]\prod_{\ell\notin X} \mathrm{tr}\left[\mathbf{M}_\ell\right] \\
        &= \left(\prod_{\ell\in X}d_\ell\right)\prod_{\ell\notin X} \mathrm{tr}\left[\mathbf{M}_\ell\right] \\
        &= \left(\prod_{\ell\in X}d_\ell\right)\left(\prod_{\ell \notin X} \delta_{i_\ell j_\ell}\right)
    \end{align*}
\end{proof}

We are now equipped with all the tools we need to prove Lemma \ref{lem:null-hypothesis-F}, the values of $\mathbf{F}$ under the null hypothesis.

\begin{lemma}
    \label{lem:null-hypothesis-F}
    Under the null hypothesis where $\mathbf{\Omega}^\gamma = \frac{1}{\left(\sigma^\gamma\right)^2}\mathbf{I}_{d^\gamma_\forall}$, we have the following:

    \begin{align*}
        \mathbf{F}_{t_it_j} &= \sum_\gamma \frac{d^\gamma_\forall\left(\sigma^\gamma\right)^4}{2}\left(\mathbf{a}^\gamma\mathbf{a}^{\gamma, T}\right)_{ij} \\
        \mathbf{F}_{t_i\psi_{\ell_{jk}}} &= \sum_{\gamma|\ell\in\gamma} \frac{a^\gamma_id^\gamma_\forall\left(\sigma^\gamma\right)^4}{2 d_\ell}\delta_{jk} \\
        \mathbf{F}_{\psi_{\ell^1_{i^1j^1}}\psi_{\ell^2_{i^2j^2}}} &= \frac{1}{2}\sum_{\gamma|\ell^1,\ell^2\in\gamma}\left(\sigma^\gamma\right)^4\frac{d_\forall^\gamma}{d_{\ell^1}d_{\ell^2}}\delta_{i^2j^2}\delta_{i^1j^1} \tag{$\ell^1 \neq \ell^2$} \\
        \mathbf{F}_{\psi_{\ell^1_{i^1j^1}}\psi_{\ell^2_{i^2j^2}}} &= \frac{1}{2}\sum_{\gamma|\ell^1,\ell^2\in\gamma}\left(\sigma^\gamma\right)^4\frac{d^\gamma_\forall}{d_\ell}\left(\delta_{i^1j^1i^2j^2} + (1 - \delta_{i^1j^1i^2j^2})\frac{\delta_{i^1i^2}\delta_{j^1j^2}}{2}\right)\tag{$\ell^1 = \ell^2$}
    \end{align*}
\end{lemma}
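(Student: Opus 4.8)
The plan is to obtain $\mathbf{F}$ as the expected Hessian of the identifiable-representation $\mathrm{NLL}$ stated just above the lemma, and then substitute the null value $\mathbf{\Omega}^\gamma=(\sigma^\gamma)^{-2}\mathbf{I}_{d^\gamma_\forall}$, which collapses the general expression to the four displayed entries. The key preliminary observation is that only the $-\tfrac12\sum_\gamma\log\mathrm{det}^\dagger\mathbf{\Omega}^\gamma$ term contributes to the Hessian: the remaining pieces are jointly linear in $\mathbf{t}$ and $\{\mathbf{\Psi}_\ell\}$ (they equal $\tfrac12\sum_\gamma\mathrm{tr}[\mathbf{\Omega}^\gamma\mathbf{S}^\gamma]$ with $\mathbf{S}^\gamma=\mathrm{vec}[\mathcal{D}^\gamma]\mathrm{vec}[\mathcal{D}^\gamma]^{T}$), so their second derivative vanishes identically. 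Hence the Hessian is deterministic --- the expectation in the definition of $\mathbf{F}$ is a formality --- and, since $\mathbf{\Omega}^\gamma$ is affine in every parameter, the chain rule together with the second-derivative rule for $-\log\det$ gives, for any two parameters $\theta,\theta'$,
\begin{align*}
    \mathbf{F}_{\theta\theta'} &= \frac{1}{2}\sum_{\gamma}\mathrm{tr}\left[\left(\mathbf{\Omega}^\gamma\right)^\dagger\frac{\partial\mathbf{\Omega}^\gamma}{\partial\theta}\left(\mathbf{\Omega}^\gamma\right)^\dagger\frac{\partial\mathbf{\Omega}^\gamma}{\partial\theta'}\right],
\end{align*}
where $\partial\mathbf{\Omega}^\gamma/\partial t_i=a^\gamma_i\mathbf{I}_{d^\gamma_\forall}$, $\partial\mathbf{\Omega}^\gamma/\partial\psi_{\ell_{jk}}=\mathbf{I}_{d^\gamma_{<\ell}}\otimes\mathrm{sym}[\mathbf{J}^{jk}]\otimes\mathbf{I}_{d^\gamma_{>\ell}}$ (using the symmetrization convention from the proof of Lemma \ref{lem:S_ell}), and a summand vanishes whenever the relevant axis is absent from $\gamma$, which is what produces the restricted sums in the statement. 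Deriving this formula in general --- and sorting it into its intra-axis, inter-axis, inter-trace and trace--axis blocks --- is exactly the tedious computation deferred to Appendix \ref{sec:deferred-proofs}; the present lemma only needs its restriction to the null.

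Under the null $\mathbf{\Omega}^\gamma$ is full rank, so $(\mathbf{\Omega}^\gamma)^\dagger=(\sigma^\gamma)^2\mathbf{I}_{d^\gamma_\forall}$, the two inverse factors become scalars, and $\mathbf{F}_{\theta\theta'}=\tfrac12\sum_\gamma(\sigma^\gamma)^4\,\mathrm{tr}[(\partial_\theta\mathbf{\Omega}^\gamma)(\partial_{\theta'}\mathbf{\Omega}^\gamma)]$. Each case is then a trace of a Kronecker product of identities and the rank-one blocks $\mathbf{J}^{ij}$, which I would evaluate with $\mathrm{tr}[\mathbf{A}\otimes\mathbf{B}]=\mathrm{tr}[\mathbf{A}]\mathrm{tr}[\mathbf{B}]$, $\mathrm{tr}[\mathbf{J}^{ij}]=\delta_{ij}$, $\mathbf{J}^{ab}\mathbf{J}^{cd}=\delta_{bc}\mathbf{J}^{ad}$ and $d^\gamma_{\backslash\ell}=d^\gamma_\forall/d_\ell$: the $t_it_j$ block gives $\mathrm{tr}[\mathbf{I}_{d^\gamma_\forall}]\,a^\gamma_ia^\gamma_j=d^\gamma_\forall(\mathbf{a}^\gamma\mathbf{a}^{\gamma,T})_{ij}$; the $t_i\psi_{\ell_{jk}}$ block gives $a^\gamma_i(d^\gamma_\forall/d_\ell)\,\mathrm{tr}[\mathrm{sym}[\mathbf{J}^{jk}]]=a^\gamma_i(d^\gamma_\forall/d_\ell)\delta_{jk}$; the $\ell^1\neq\ell^2$ block, since the two blocks sit on distinct axes, factors as $(d^\gamma_\forall/(d_{\ell^1}d_{\ell^2}))\,\mathrm{tr}[\mathrm{sym}[\mathbf{J}^{i^1j^1}]]\,\mathrm{tr}[\mathrm{sym}[\mathbf{J}^{i^2j^2}]]=(d^\gamma_\forall/(d_{\ell^1}d_{\ell^2}))\delta_{i^1j^1}\delta_{i^2j^2}$; and the $\ell^1=\ell^2$ block equals $(d^\gamma_\forall/d_\ell)\,\mathrm{tr}[\mathrm{sym}[\mathbf{J}^{i^1j^1}]\,\mathrm{sym}[\mathbf{J}^{i^2j^2}]]$, whose inner trace is $1$ when all four indices coincide (a diagonal parameter with itself), $\tfrac12$ when $(i^1,j^1)=(i^2,j^2)$ with $i^1\neq j^1$ (an off-diagonal parameter with itself), and $0$ otherwise --- which is exactly the bracketed factor in the statement. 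Multiplying each by $\tfrac12(\sigma^\gamma)^4$ and summing over the modalities that contain the relevant axes recovers all four lines.

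The hard part is not any individual trace but the degrees-of-freedom and symmetry bookkeeping: one must differentiate with respect to the reduced parameterization of Section \ref{sec:identifiability} (tracelessness, plus the dropped diagonal entry $\psi_{\ell_{11}}$) so that the sub-block of $\mathbf{F}$ relevant to edge testing is invertible, and one must carry the $\mathrm{sym}[\cdot]$ operator through consistently --- it is precisely what yields the $\tfrac12$ attached to the off-diagonal-with-itself entries, and why $\mathrm{sym}[\mathbf{J}^{jk}]$ appears rather than $\mathbf{J}^{jk}$. A lesser point is the pseudodeterminant: it causes no difficulty here because the null precision is full rank, so $\mathrm{det}^\dagger=\mathrm{det}$ on a neighbourhood of the null; the further reduction in degrees of freedom coming from the genuine low-rank hypothesis is deliberately ignored, as discussed before the statement.
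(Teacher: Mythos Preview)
Your proposal is correct and follows essentially the same route as the paper: compute the Hessian of the $-\tfrac12\log\det\mathbf{\Omega}^\gamma$ term (the only non-linear piece), substitute the null $\mathbf{\Omega}^\gamma=(\sigma^\gamma)^{-2}\mathbf{I}$, and reduce each block to an elementary Kronecker trace. The only cosmetic differences are that you bake the symmetry correction into $\partial\mathbf{\Omega}^\gamma/\partial\psi_{\ell_{jk}}=\mathbf{I}\otimes\mathrm{sym}[\mathbf{J}^{jk}]\otimes\mathbf{I}$ from the outset and evaluate with the bare identities $\mathrm{tr}[\mathbf{A}\otimes\mathbf{B}]=\mathrm{tr}[\mathbf{A}]\mathrm{tr}[\mathbf{B}]$ and $\mathbf{J}^{ab}\mathbf{J}^{cd}=\delta_{bc}\mathbf{J}^{ad}$, whereas the paper first writes the general (pre-null) Hessian via the stridewise-blockwise trace (Lemmas \ref{lem:t-derivatives}--\ref{lem:double-psi-derivative}), specializes, applies its trace-extraction Lemma \ref{lem:trace-extraction}, and only then averages over the four index swaps; the two symmetrization strategies coincide term-by-term, so neither buys anything the other doesn't.
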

\begin{proof}
    \begin{align*}
        \frac{\partial}{\partial t_i\partial t_j} \mathrm{NLL} &= \sum_\gamma \frac{a^\gamma_ia^\gamma_j}{2}\mathrm{tr}\left[\left(\mathbf{\Omega}^\gamma\right)^{-2}\right] \\
        &= \sum_\gamma \frac{a^\gamma_ia^\gamma_j d^\gamma_\forall\left(\sigma^\gamma\right)^4}{2} \\
        &= \sum_\gamma \frac{d^\gamma_\forall\left(\sigma^\gamma\right)^4}{2}\left(\mathbf{a}^\gamma\mathbf{a}^{\gamma, T}\right)_{ij}
        \\
        \frac{\partial}{\partial t_i\partial \psi_{\ell_{jk}}} \mathrm{NLL} &= \mathrm{sym}\left[\sum_\gamma \frac{a^\gamma_i}{2}\mathrm{tr}^{d^\gamma_{<\ell}}_{d^\gamma_{>\ell}}\left[\left(\mathbf{\Omega}^\gamma\right)^{-2}\right]_{jk}\right] \\
        &= \mathrm{sym}\left[\sum_\gamma \frac{a^\gamma_id^\gamma_\forall\left(\sigma^\gamma\right)^4}{2d_\ell}\left[\mathbf{I}_{d^\gamma_{\backslash\ell}}\right]_{jk}\right] \\
        &= \frac{1}{2}\sum_\gamma \frac{a^\gamma_id^\gamma_\forall\left(\sigma^\gamma\right)^4}{2 d_\ell}\delta_{jk} + \frac{a^\gamma_id^\gamma_\forall\left(\sigma^\gamma\right)^4}{2 d_\ell}\delta_{kj} \\
        &= \sum_\gamma \frac{a^\gamma_id^\gamma_\forall\left(\sigma^\gamma\right)^4}{2 d_\ell}\delta_{jk}
    \end{align*}

    \begin{align*}
        \frac{\partial^\mathrm{nosym}}{\partial\psi_{\ell^1_{i^1j^1}}\partial\psi_{\ell^2_{i^2j^2}}} \mathrm{NLL}
        &= \frac{1}{2}\sum_{\gamma|\ell^1\in\gamma} \mathrm{tr}^{d^\gamma_{<\ell^1}}_{d^\gamma_{>\ell^1}}\left[\left(\mathbf{\Omega}^\gamma\right)^{-1}\left(\mathbf{I}_{d^\gamma_{<\ell^2}} \otimes \mathbf{J}^{i^2j^2} \otimes \mathbf{I}_{d^\gamma_{>\ell^2}} \right)\left(\mathbf{\Omega}^\gamma\right)^{-1}\right]_{i^1j^1} \\
        &= \frac{1}{2}\sum_{\gamma|\ell^1\in\gamma} \left(\sigma^\gamma\right)^4\mathrm{tr}^{d^\gamma_{<\ell^1}}_{d^\gamma_{>\ell^1}}\left[\left(\mathbf{I}_{d^\gamma_{<\ell^2}} \otimes \mathbf{J}^{i^2j^2} \otimes \mathbf{I}_{d^\gamma_{>\ell^2}} \right)\right]_{i^1j^1}
    \end{align*}

    Let us focus on what happens to the stridewise-blockwise trace term under algebraic manipulations.

    \begin{align*}
        \mathrm{tr}^{d^\gamma_{<\ell^1}}_{d^\gamma_{>\ell^1}}\left[\left(\mathbf{I}_{d^\gamma_{<\ell^2}} \otimes \mathbf{J}^{i^2j^2} + \mathbf{J}^{j^2i^2} \otimes \mathbf{I}_{d^\gamma_{>\ell^2}} \right)\right]_{i^1j^1} &= \mathrm{tr}\left[\left(\mathbf{I}_{d^\gamma_{<\ell^2}} \otimes \mathbf{J}^{i^2j^2} \otimes \mathbf{I}_{d^\gamma_{>\ell^2}} \right)\left(\mathbf{I}_{d^\gamma_{<\ell^1}} \otimes \mathbf{J}^{i^1j^1} \otimes \mathbf{I}_{d^\gamma_{>\ell^1}} \right)\right]
    \end{align*}

    We now split the $(\psi_{\ell^1_{i^1j^1}}, \psi_{\ell^2_{i^2j^2}})$ computation into two cases, one in which $\ell = \ell^1 = \ell^2$ and another where $\ell^1 \neq \ell^2$.  In both cases we use Lemma \ref{lem:trace-extraction}.

    \textbf{Case 1: } $\ell^1 \neq \ell^2$

    \begin{align*}
        \mathrm{tr}^{d^\gamma_{<\ell^1}}_{d^\gamma_{>\ell^1}}\left[\left(\mathbf{I}_{d^\gamma_{<\ell^2}} \otimes \mathbf{J}^{i^2j^2} \otimes \mathbf{I}_{d^\gamma_{>\ell^2}} \right)\right]_{i^1j^1} &= \frac{d_\forall^\gamma}{d_{\ell^1}d_{\ell^2}}\delta_{i^2j^2}\delta_{i^1j^1}
    \end{align*}

    \textbf{Case 2: } $\ell = \ell^1 = \ell^2$

    \begin{align*}
        \mathrm{tr}^{d^\gamma_{<\ell}}_{d^\gamma_{>\ell}}\left[\left(\mathbf{I}_{d^\gamma_{<\ell}} \otimes \mathbf{J}^{i^2j^2} \otimes \mathbf{I}_{d^\gamma_{>\ell}} \right)\right]_{i^1j^1} &= \mathrm{tr}\left[\left(\mathbf{I}_{d^\gamma_{<\ell}} \otimes \mathbf{J}^{i^2j^2} \otimes \mathbf{I}_{d^\gamma_{>\ell}} \right)\left(\mathbf{I}_{d^\gamma_{<\ell}} \otimes \mathbf{J}^{i^1j^1} \otimes \mathbf{I}_{d^\gamma_{>\ell}} \right)\right] \\
        &= \mathrm{tr}\left[\left(\mathbf{I}_{d^\gamma_{<\ell}} \otimes \mathbf{J}^{i^2j^2}\mathbf{J}^{i^1j^1} \otimes \mathbf{I}_{d^\gamma_{>\ell}} \right)\right] \\
        &= \frac{d^\gamma_\forall}{d_\ell}\left(\mathrm{tr}\left[\mathbf{j}^{i^2}\mathbf{j}^{j^2, T}\mathbf{j}^{i^1}\mathbf{j}^{j^1, T}\right]\right) \\
        &= \frac{d^\gamma_\forall}{d_\ell}\left(\mathrm{tr}\left[\mathbf{j}^{j^2, T}\mathbf{j}^{i^1}\mathbf{j}^{j^1, T}\mathbf{j}^{i^2}\right] \right) \\
        &= \frac{d^\gamma_\forall}{d_\ell}\left(\delta_{j^2i^1}\delta_{j^1i^2}\right)
    \end{align*}

    This is the nonsymmetric derivative.  Symmetricizing requires replacing $\delta_{j^2i^1}\delta_{j^1i^2}$ with the average over all symmetries, swapping $i^1$ and $j^1$ and swapping $i^2$ and $j^2$.  These symmetries do not affect the different axis case, but do affect the same-axis case.

    One of the symmetries of $\delta_{j^2i^1}\delta_{j^1i^2}$ is $\delta_{i^1i^2}\delta_{j^1j^2}$, which we will rewrite as  $\delta_{(i^1, j^1)(i^2, j^2)}$ to make it more clear what is being compared.

    Suppose $i^1 = j^1$ is diagonal.  Then this is only true for $i^1 = i^2 = j^1 = j^2$, which if true under one symmetry then it is true under all symmetries.

    Suppose now that neither are diagonal, and recall we required $i \leq j$.  Then, any symmetries that swap the ordering of $i^1$ and $j^1$ will never be satisfied, unless they also swap $i^2$ and $j^2$.  This lets us rewrite the same-axis case as:

    \begin{align*}
        \delta_{i^1j^1i^2j^2} + (1 - \delta_{i^1j^1i^2j^2})\frac{\delta_{i^1i^2}\delta_{j^1j^2}}{2}
    \end{align*}

    This completes the proof.
    
\end{proof}

Recall that the only independent parameters are $\mathbf{t}$ and $\psi_{\ell_{ij}}$ for $i \leq j$ and $(i, j) \neq (1, 1)$.  There are three types of parameters here; $\mathbf{t}$, diagonal $\psi_{\ell_{ii}}$, and off-diagonal $\psi_{\ell_{ij}}$.  Of particular interest to us are the interactions between off-diagonal elements $\psi_{\ell_{ij}}$ (where $\delta_{ij} = 0$).  It is clear they cannot interact with $\mathbf{t}$ terms due to the $\delta_{ij}$ term in its formula.  They also cannot interact with any $\psi_{\ell'_{i'j'}}$ in different axes, since the $\delta_{ij}\delta_{i'j'}$ term is only nonzero for the diagonal elements.  Finally, the $\delta$ terms governing interactions within the same axis are only nonzero when $(i, j) = (i', j')$ (self-interactions) or $(i, j) = (j', i')$, which for off-diagonal elements can never be true due to the $i \leq j$ requirement.

\begin{table}[hpt!]
    \centering
    \begin{threeparttable}
    \begin{tabular}{c|ccc}
         &  $\mathbf{t}$ & Diagonal $\psi_{ii}$ & Off-diagonal $\psi_{ij}$ \\\hline
         $\mathbf{t}$ & Yes\tnote{a} & Yes & No \\
         Diagonal $\psi_{ii}$ & Yes & Only in different axes & No \\
         Off-diagonal $\psi_{ij}$ & No & No & No
    \end{tabular}
    \caption{Allowed interactions between parameters in $\mathbf{F}$.}
    \label{tab:F-interactions}
    \begin{tablenotes}
        \item[a] This depends on $\left\{\mathbf{a}^\gamma\right\}$.  For most datasets, the most natural choice of identifiable parameterization results in the $\mathbf{t}$-$\mathbf{t}$ interactions forming a diagonal matrix (none of the $\mathbf{t}$ interact).
    \end{tablenotes}
    \end{threeparttable}
\end{table}

For all possible interactions, consult Table \ref{tab:F-interactions} - these can be derived by investigating the $\delta$ terms in the formula for $\mathbf{F}$.  The fact that off-diagonal elements do not interact with anything in $\mathbf{F}$ is quite significant, as seen by the following corollary to Lemma \ref{lem:null-hypothesis-F}.

\begin{corollary}
    \label{cor:F-structure}
    Suppose we order $\mathbf{F}$ such that the first rows/columns represent $\mathbf{t}$ terms, the next represent diagonal $\psi_{ii}$ terms, and the last represent the off-diagonal terms.  Then, we have the following:

    \begin{align*}
        \mathbf{F} &= \begin{bmatrix}
            \mathbf{A} & \mathbf{0} \\
            \mathbf{0} & \mathbf{B}
        \end{bmatrix}
    \end{align*}

    $\mathbf{B}$ represents the off-diagonal terms, and \textbf{is a diagonal matrix}.  This allows a convenient expression for $\mathbf{F}^{-1}$, by inverting blockwise.
\end{corollary}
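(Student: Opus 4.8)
The plan is to read the block structure directly off the closed-form entries of $\mathbf{F}$ supplied by Lemma \ref{lem:null-hypothesis-F}, keeping careful track of the index conventions for the independent parameters (namely $i \leq j$ and $(i,j) \neq (1,1)$). The statement has two parts: first, that the off-diagonal-$\psi$ rows and columns decouple from the $\mathbf{t}$-rows and the diagonal-$\psi$-rows, so that reordering as described puts $\mathbf{F}$ in the form $\mathrm{diag}(\mathbf{A}, \mathbf{B})$ with $\mathbf{A}$ collecting the $\mathbf{t}$ and diagonal-$\psi$ parameters and $\mathbf{B}$ collecting the off-diagonal $\psi$ parameters; and second, that $\mathbf{B}$ is itself diagonal.

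For the decoupling I would examine the Kronecker-delta factors in Lemma \ref{lem:null-hypothesis-F} case by case. The entry $\mathbf{F}_{t_i\psi_{\ell_{jk}}}$ carries a factor $\delta_{jk}$, so it vanishes whenever $\psi_{\ell_{jk}}$ is off-diagonal, killing all $\mathbf{t}$-to-off-diagonal couplings. For $\ell^1 \neq \ell^2$, the entry $\mathbf{F}_{\psi_{\ell^1_{i^1j^1}}\psi_{\ell^2_{i^2j^2}}}$ carries the factor $\delta_{i^1j^1}\delta_{i^2j^2}$, so it vanishes unless both parameters are diagonal; this removes every cross-axis interaction that involves an off-diagonal parameter (both the off-diagonal/off-diagonal and the off-diagonal/diagonal cases). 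For $\ell^1 = \ell^2$, the term $\delta_{i^1j^1i^2j^2}$ requires all four indices to coincide and hence vanishes identically on off-diagonal parameters, leaving only $\tfrac{1}{2}\delta_{i^1i^2}\delta_{j^1j^2}$ times the modality sum, which in particular is zero whenever exactly one of the two parameters is off-diagonal. Together these three observations give the claimed $2\times 2$ block form, and they are exactly the entries of Table \ref{tab:F-interactions}.

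For diagonality of $\mathbf{B}$ I would restrict the same-axis formula to pairs of off-diagonal parameters. The surviving contribution is $\tfrac{1}{2}\delta_{i^1i^2}\delta_{j^1j^2}$ (summed over modalities), which is nonzero precisely when $(i^1,j^1) = (i^2,j^2)$. The only other way two index pairs could name the same edge would be $(i^1,j^1) = (j^2,i^2)$, but this is ruled out by the ordering $i^1 < j^1$ and $i^2 < j^2$, since it would force $i^1 = j^2 > i^2 = j^1 > i^1$. Hence each off-diagonal $\psi$ parameter interacts only with itself, so $\mathbf{B}$ is diagonal. The expression for $\mathbf{F}^{-1}$ then follows by blockwise inversion, $\mathbf{F}^{-1} = \mathrm{diag}(\mathbf{A}^{-1}, \mathbf{B}^{-1})$, with $\mathbf{B}^{-1}$ obtained entrywise.

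The step requiring the most care — though it is more bookkeeping than genuine obstacle — is handling the $i \leq j$ convention correctly, in particular confirming that the transpose coincidence $(i^1,j^1) = (j^2,i^2)$ cannot contribute to $\mathbf{B}$; everything else is a direct inspection of the delta factors already computed in Lemma \ref{lem:null-hypothesis-F}.
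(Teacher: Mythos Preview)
Your proposal is correct and follows essentially the same approach as the paper: both arguments read the block structure directly off the Kronecker-delta factors in Lemma~\ref{lem:null-hypothesis-F}, checking each interaction type ($\mathbf{t}$--off-diagonal, cross-axis, same-axis) and using the $i \leq j$ convention to rule out the transpose coincidence. Your treatment is in fact slightly more careful than the paper's informal discussion preceding the corollary, particularly in spelling out why the same-axis diagonal/off-diagonal coupling vanishes and in writing out the chain of inequalities that excludes $(i^1,j^1)=(j^2,i^2)$.
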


We will now give the example of a dataset $\gamma_1 = (\ell_1, \ell_2), \gamma_2 = (\ell_1, \ell_3)$ where $d_{\ell_1} = 3$ and $d_{\ell_2} = d_{\ell_3} = 2$.  Note that $\mathbf{a}^{\gamma_i} = \begin{bmatrix}
    \delta_{1i} & \delta_{2i}
\end{bmatrix}$ in this case (and most cases).  This leads to $\mathbf{F}_{\mathbf{t}, \mathbf{t}}$ being proportional to the identity matrix.  If we let $A^{\gamma} = \frac{d^\gamma_\forall\left(\sigma^\gamma\right)^4}{2}$, then it becomes easy to express the matrix.  Exact derivations are left as an exercise to the reader.

\[\scriptscriptstyle
\sum_\gamma A^\gamma \hspace{5pt} \begin{blockarray}{cccccccccccc}
t_1 & t_2 & \begin{matrix}\ell_1 \\ (2, 2)\end{matrix} & \begin{matrix}\ell_1 \\ (3, 3)\end{matrix} & \begin{matrix}\ell_2 \\ (2, 2)\end{matrix} & \begin{matrix}\ell_3 \\ (2, 2)\end{matrix} & \begin{matrix}\ell_1 \\ (1, 2)\end{matrix} & \begin{matrix}\ell_1 \\ (1, 3)\end{matrix} & \begin{matrix}\ell_1 \\ (2, 3)\end{matrix} & \begin{matrix}\ell_2 \\ (1, 2)\end{matrix} & \begin{matrix}\ell_3 \\ (1, 2)\end{matrix}\\
\begin{block}{(ccccccccccc)c}
  1 & 0 & \frac{1}{d_{\ell_1}} & \frac{1}{d_{\ell_1}} & \frac{1}{d_{\ell_2}} & \frac{1}{d_{\ell_3}} & 0 & 0 & 0 & 0 & 0 & t_1 \\
  0 & 1 & \frac{1}{d_{\ell_1}} & \frac{1}{d_{\ell_1}} & \frac{1}{d_{\ell_2}} & \frac{1}{d_{\ell_3}} & 0 & 0 & 0 & 0 & 0 & t_2 \\
  \frac{1}{d_{\ell_1}} & \frac{1}{d_{\ell_1}} & \frac{1}{d_{\ell_1}} & 0 & \frac{1}{d_{\ell_1\ell_2}} & \frac{1}{d_{\ell_1\ell_3}} & 0 & 0 & 0 & 0 & 0 & \begin{matrix}\ell_1 \\ (2, 2)\end{matrix} \\
  \frac{1}{d_{\ell_1}} & \frac{1}{d_{\ell_1}} & 0 & \frac{1}{d_{\ell_1}} & \frac{1}{d_{\ell_1\ell_2}} & \frac{1}{d_{\ell_1\ell_3}} & 0 & 0 & 0 & 0 & 0 & \begin{matrix}\ell_1 \\ (3, 3)\end{matrix} \\
  \frac{1}{d_{\ell_2}} & \frac{1}{d_{\ell_2}} & \frac{1}{d_{\ell_1}d_{\ell_2}} & \frac{1}{d_{\ell_1}d_{\ell_2}} & \frac{1}{d_{\ell_2}} & \frac{1}{d_{\ell_2\ell_3}} & 0 & 0 & 0 & 0 & 0 & \begin{matrix}\ell_2 \\ (2, 2)\end{matrix} \\
  \frac{1}{d_{\ell_3}} & \frac{1}{d_{\ell_3}} & \frac{1}{d_{\ell_1}d_{\ell_3}} & \frac{1}{d_{\ell_1}d_{\ell_3}} & \frac{1}{d_{\ell_2}d_{\ell_3}} & \frac{1}{d_{\ell_3}} & 0 & 0 & 0 & 0 & 0 & \begin{matrix}\ell_3 \\ (2, 2)\end{matrix} \\
  0 & 0 & 0 & 0 & 0 & 0 & \frac{1}{2d_{\ell_1}} & 0 & 0 & 0 & 0 & \begin{matrix}\ell_1 \\ (1, 2)\end{matrix} \\
  0 & 0 & 0 & 0 & 0 & 0 & 0 & \frac{1}{2d_{\ell_1}} & 0 & 0 & 0  & \begin{matrix}\ell_1 \\ (1, 3)\end{matrix} \\
  0 & 0 & 0 & 0 & 0 & 0 & 0 & 0 & \frac{1}{2d_{\ell_1}} & 0 & 0 & \begin{matrix}\ell_1 \\ (2, 3)\end{matrix} \\
  0 & 0 & 0 & 0 & 0 & 0 & 0 & 0 & 0 & \frac{1}{2d_{\ell_2}} & 0 & \begin{matrix}\ell_2 \\ (1, 2)\end{matrix} \\
  0 & 0 & 0 & 0 & 0 & 0 & 0 & 0 & 0 & 0 & \frac{1}{2d_{\ell_3}} & \begin{matrix}\ell_3 \\ (1, 2)\end{matrix} \\
\end{block}
\end{blockarray}
 \]
 
With everything in place, we can now define the hypothesis test.

\begin{theorem*}[Theorem \ref{thm:gmgm-bio-hypothesis-testing}]
    Under the null hypothesis $\mathbf{\Omega}^\gamma = \frac{1}{\left(\sigma^\gamma\right)^2}\mathbf{I}_{d^\gamma_\forall}$, we have the following distribution for each $\psi_{\ell_{ij}}$ independently:

    \begin{align*}
        \sqrt{\frac{\sum_{\gamma|\ell\in\gamma} d^\gamma_\forall\left(\sigma^\gamma\right)^4}{d_\ell}}\frac{\psi_{\ell_{ij}}}{2} &\mathrel{\dot{\sim}} \mathcal{N}\left(0, 1\right)
    \end{align*}
\end{theorem*}
\begin{proof}
    As noted in Corollary \ref{cor:F-structure}, the edge-edge interactions in $\mathbf{F}$ form a diagonal matrix, and there are no cross-terms connecting edges to diagonal and $\mathbf{t}$ parameters.  Thus, inversion is simple.

    By Lemma \ref{lem:null-hypothesis-F}, each edge-edge precision term for axis $\ell$ in $\mathbf{F}$ can be expressed as $\sum_{\gamma|\ell\in\gamma} \frac{d^\gamma_\forall\left(\sigma^\gamma\right)^4}{4d_\ell}$.  In $\mathbf{F}^{-1}$, this becomes $\frac{4d_\ell}{\sum_{\gamma|\ell\in\gamma} d^\gamma_\forall\left(\sigma^\gamma\right)^4}$.

    Thus, under the null hypothesis, we have that:

    \begin{align*}
        \psi_{\ell_{ij}} &\mathrel{\dot{\sim}} \mathcal{N}\left(0, \frac{4d_\ell}{\sum_{\gamma|\ell\in\gamma} d^\gamma_\forall\left(\sigma^\gamma\right)^4}\right) \\
        \sqrt{\frac{\sum_{\gamma|\ell\in\gamma} d^\gamma_\forall\left(\sigma^\gamma\right)^4}{d_\ell}}\frac{\psi_{\ell_{ij}}}{2} &\mathrel{\dot{\sim}} \mathcal{N}\left(0, 1\right)
    \end{align*}
\end{proof}

This completes the derivations.  While deriving such a test was tedious, it gives us an interpretable way to set the threshold for our recovered graphs, by choosing to only keep statistically significant edges!

\section{Preserving input sparsity}
\label{sec:gmgm-bio-input-sparsity}

The final practical issue to surmount is that of preserving input sparsity.  $\mathrm{mat}_\ell\left[\{\mathcal{D}^\gamma\}_\gamma\right]$ has $d_\forall$ elements, but in many contexts, such as transcriptomics, the dataset will be highly sparse; the majority of the input is full of zeros.  Ideally, we would never `densify' the matrix.  Using COCA, however, requires computing the matrix of ranks, which will be fully dense.  If unaddressed, this would enforce an $\Omega(d_\forall)$ minimum memory requirement.  We will demonstrate the problem with an example, where $\Phi$ represents the cumulative density function of the normal distribution.

\begin{amsthmexample}[Demonstration that ranking densifies data]
    \begin{align*}
    \begin{bmatrix}
        3 & 0  & -2 & 0 & 4  \\
        0 & -1 & -3  & 0 & -2
    \end{bmatrix} &\rightarrow \begin{bmatrix}
        4 & 2  & 1 & 2 & 5  \\
        4 & 3 & 1  & 4 & 2
    \end{bmatrix} \tag{rank data} \\
    &\rightarrow \begin{bmatrix}
        \Phi^{-1}\left(\frac{4}{6}\right) & \Phi^{-1}\left(\frac{2}{6}\right)  & \Phi^{-1}\left(\frac{1}{6}\right) & \Phi^{-1}\left(\frac{2}{6}\right) & \Phi^{-1}\left(\frac{5}{6}\right)  \\
        \Phi^{-1}\left(\frac{4}{6}\right) & \Phi^{-1}\left(\frac{3}{6}\right) & \Phi^{-1}\left(\frac{1}{6}\right)  & \Phi^{-1}\left(\frac{4}{6}\right) & \Phi^{-1}\left(\frac{2}{6}\right)
    \end{bmatrix} \tag{map to normal} \\
    &\approx \begin{bmatrix}
        0.43 & -0.43  & -0.97 & -0.43 & 0.97  \\
        0.43 & 0 & -0.97 & 0.43 & -0.43
    \end{bmatrix}
\end{align*}
\end{amsthmexample}

The exact values depend on how you handle ties when ranking data, but the general idea remains the same regardless.  The trick to avoiding the densification is to realize that, in each row, all zero values get mapped to the same final value.  We can rewrite our example as:

\begin{align*}
    \begin{bmatrix}
        0.43+0.43 & 0  & -0.97+0.43 & 0 & 0.97+0.43  \\
        0 & -0.43 & -0.97-0.43 & 0 & -0.43-0.43
    \end{bmatrix} + \begin{bmatrix}
        -0.43 \\ 0.43
    \end{bmatrix}\mathbf{1}_{1\times 5}
\end{align*}

Let $z_i$ be the normally-distributed value that the zeros in row $i$ get mapped to.  We can express our transformation more generally:

\begin{align*}
    \begin{bmatrix}
        \mathbf{x}_1 \\
        ... \\
        \mathbf{x}_a
    \end{bmatrix} &= \begin{bmatrix}
        \mathbf{x}_1 - z_1 \\
        ... \\
        \mathbf{x}_a - z_a
    \end{bmatrix} + \mathbf{z}\mathbf{1}^T
\end{align*}

Crucially, we have expressed our transformed data as the sum of a matrix with the same sparsity structure as our original data, and a rank-one matrix.  If we compute the SVD on the first matrix, we can then use a rank-one update algorithm to calculate the SVD of the sum of the two matrices \citep{brand_fast_2006}.  It is not difficult to work out $\mathbf{z}$, and hence we can perform this procedure without ever densifying our input matrix.

\newpage
\section{Additional experiments}
\label{sec:gmgm-bio-additional-experiments}

\subsection{Synthetic data}

In this section and in the main paper, we generated ground truth graphs from the Barabasi-Albert distribution.  We chose this distribution because it is a scale-free distribution; many real-world networks, such as social networks and gene regulatory networks, are approximately scale-free.  We then generated synthetic data using these graphs and the Kronecker-sum-structured normal distribution.

To test performance, and area-under-PR curves (AUPR) for GmGM and our proposed modifications to GmGM; when an algorithm correctly identifies an edge, it is considered a true positive.  When comparing to prior work, we used their default convergence tolerance and max iterations.

\begin{figure}
        \includegraphics[width=0.48\textwidth]{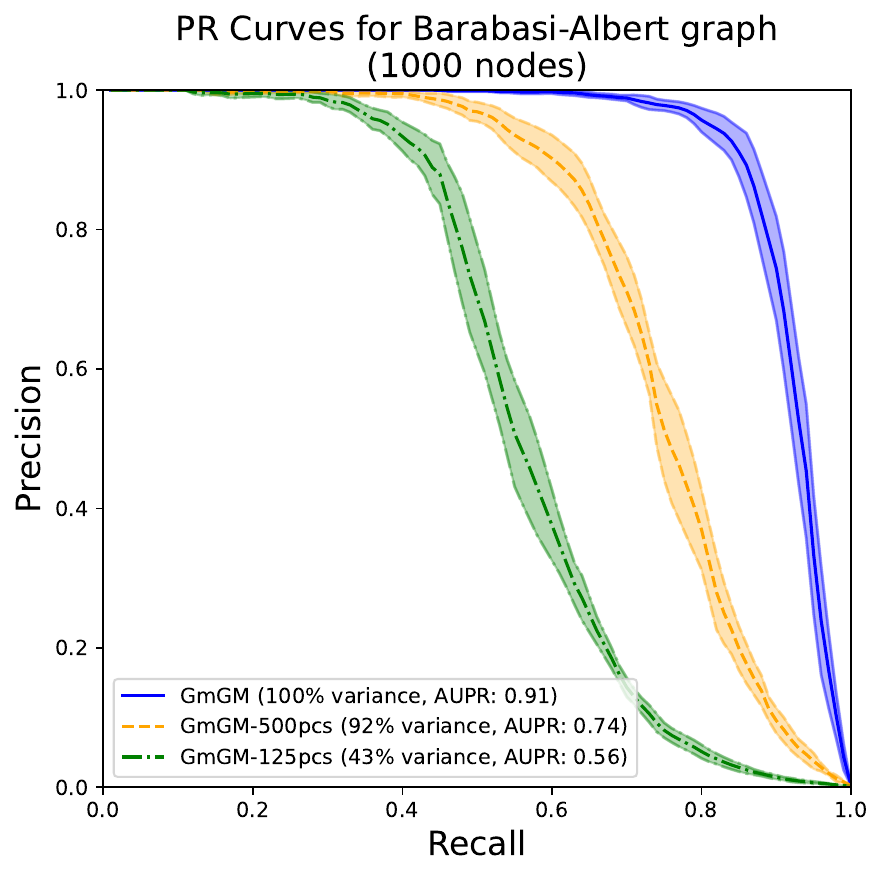}%
        \label{fig:pr-curves-1000}
    \hfill
        \includegraphics[width=0.48\textwidth]{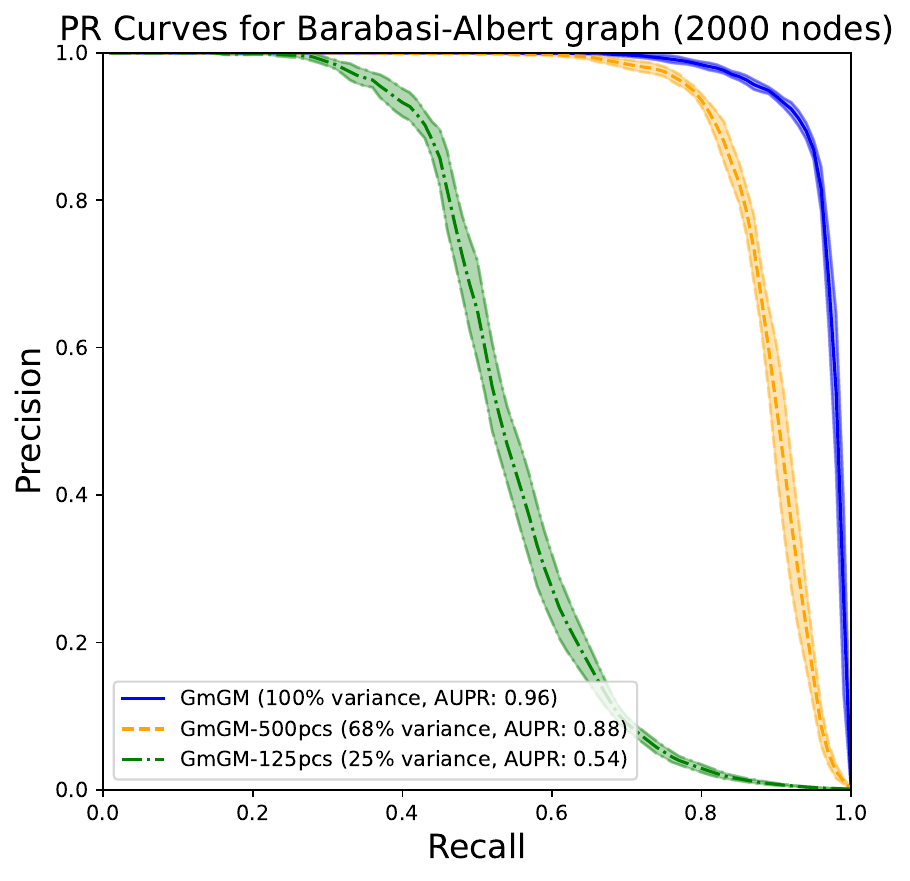}%
        \label{fig:pr-curves-2000}
    
    \medskip
    
        \includegraphics[width=0.48\textwidth]{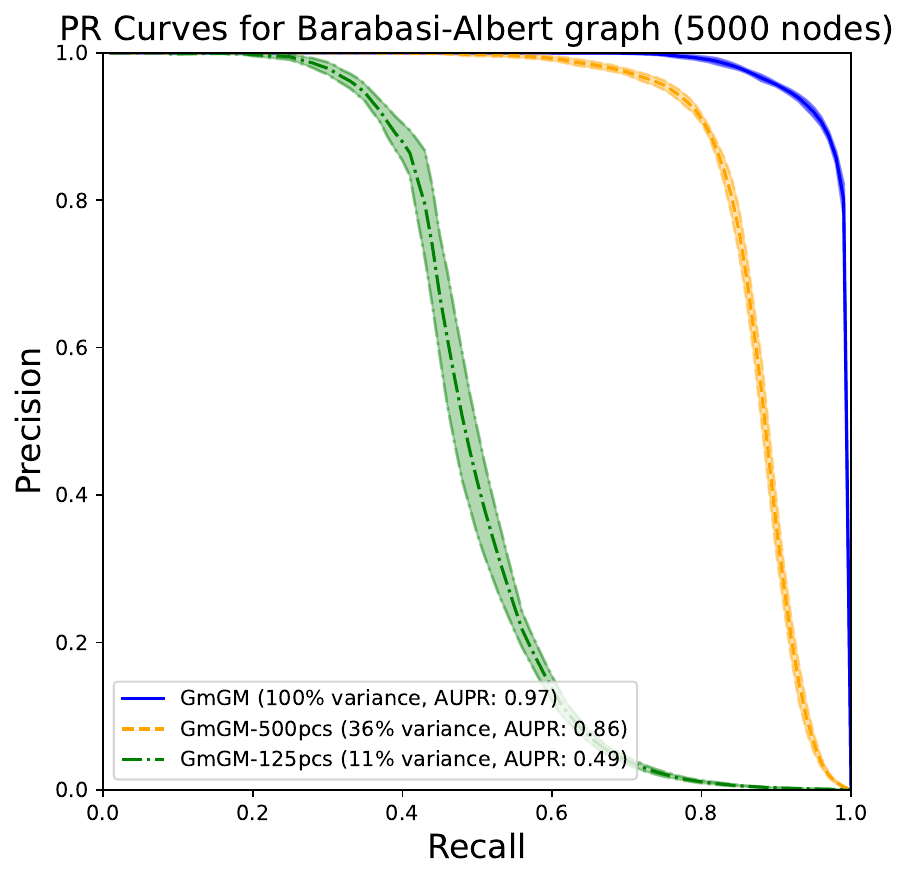}%
        \label{fig:pr-curves-5000}
    \hfill
        \includegraphics[width=0.48\textwidth]{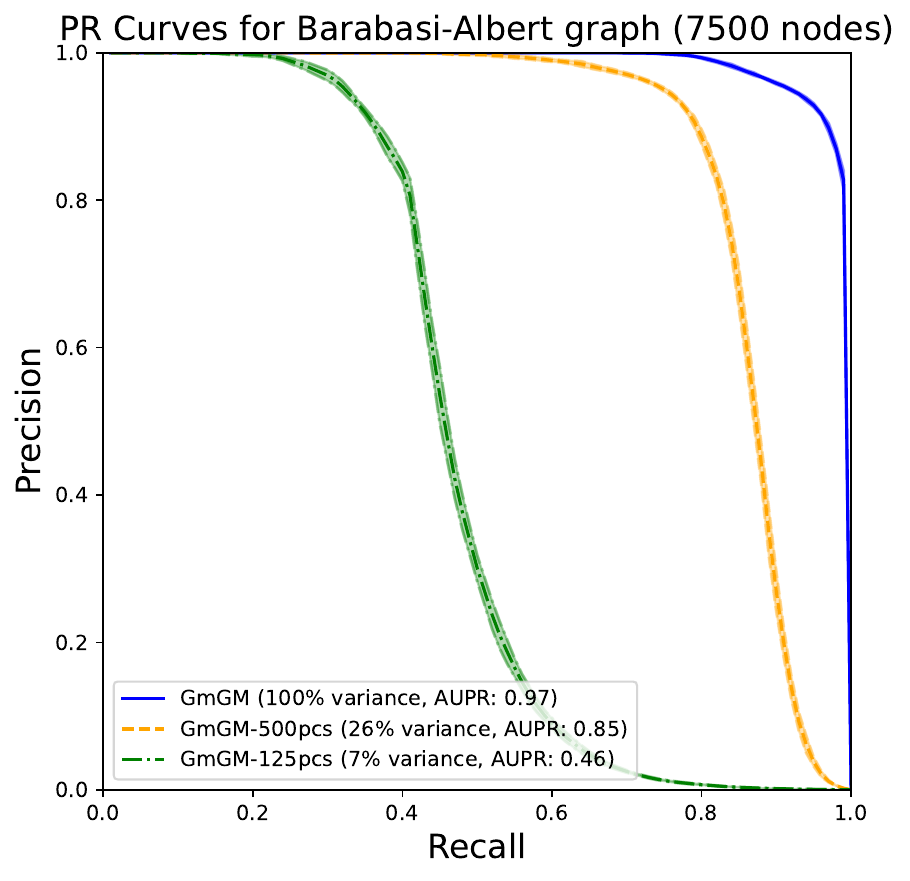}%
        \label{fig:pr-curves-7500}
    \caption{Precision-recall curves, with AUPR reported, for $n\in\{1000, 2000, 5000,  7500\}$ datasets of shape $n\times n$ whose graphs ome from a Barabasi-Albert distribution.  We compare the full-rank GmGM algorithm, along with our proposed algorithm with 500 principal components and a more extreme 125 principal components.  We see that 500 principal components still yield good results as the dataset size increases.}
    \label{fig:gmgm-bio-pr-curves}
\end{figure}

We report our PR curves and their AUPRs in Figure \ref{fig:gmgm-bio-pr-curves}.  It is clear that, for a fixed amount of components, the AUPR stays roughly the same as the axis size increases.  This is true even as the percentage of variance explained decreases.  There is a clear trade-off between the performance of the method and the number of principal components chosen.  Keeping 500 components still has high performance even as the percent of variance explained decreases; this shows the method is suitable on problem sizes that full-rank methods such as GmGM simply cannot handle.

\subsection{Sanity check of million-cell results}

At the beginning of this paper, we promised scalability to a million cells.  Here, we will use a 1,248,980 cell, 36,571 gene PBMC dataset by \cite{yazar_single-cell_2022}.  For preprocessing, we removed any cells and genes that had 0 counts everywhere.  Then, we kept only the top 2,200 highly variable genes.  No cells had uniformly 0 counts, so the post-processing dataset size contained 1,249,980 cells and 2200 genes.

Due to the size of the dataset, we kept only the top 50 principal components ($\sim$50\% of the explained variance).  We keep only the statistically significant edges and use the Nonparanormal Skeptic.  Overall, 5890 edges were statistically significant.  We do not compare to prior work as this dataset is far too large for them to handle.

\begin{table}[]
    \centering
    \scalebox{0.7}{
    \begin{tabular}{cccc}\hline
         Algorithm & Cluster & Biological Process & p-value \\\hline\hline
         & $\begin{matrix}
             \text{1} \\ \text{(67 genes)}
         \end{matrix}$ & $\begin{matrix}
             \text{cell-cell adhesion} \\
             \text{apoptotic process} \\
             \text{positive regulation of cellular process}
         \end{matrix}$ & $\begin{matrix}
             1.0\times 10^{-9} \\
             3.1\times 10^{-9} \\
             5.1\times 10^{-9}
         \end{matrix}$\\\cline{2-4}
         & $\begin{matrix}
             \text{2} \\ \text{(65 genes)}
         \end{matrix}$ & $\begin{matrix}
             \text{protein maturation} \\
             \text{response to endoplasmic reticulum stress} \\
             \text{organonitrogen compound metabolic process}
         \end{matrix}$ &$\begin{matrix}
             5.6\times 10^{-7} \\
             1.3\times 10^{-6} \\
             9.1\times 10^{-6}
         \end{matrix}$\\\cline{2-4}
         GmGM (p=0.05) & $\begin{matrix}
             \text{3} \\ \text{(60 genes)}
         \end{matrix}$ & $\begin{matrix}
             \text{\color{blue}immune response} \\
             \text{\color{blue}leukocyte mediated immunity} \\
             \text{\color{blue}immune system process}
         \end{matrix}$ & $\begin{matrix}
             9.5\times 10^{-15} \\
             3.8\times 10^{-14} \\
             8.7\times 10^{-14}
         \end{matrix}$\\\cline{2-4}
         & $\begin{matrix}
             \text{4} \\ \text{(43 genes)}
         \end{matrix}$ & $\begin{matrix}
             \text{GO:0048523} \\
             \text{GO:0051172} \\
             \text{GO:0031324}
         \end{matrix}$ & $\begin{matrix}
             4.7\times 10^{-5} \\
             6.8\times 10^{-5} \\
             8.9\times 10^{-5}
         \end{matrix}$\\\cline{2-4}
         & $\begin{matrix}
             \text{5} \\ \text{(27 genes)}
         \end{matrix}$ & $\begin{matrix}
             \text{\color{blue}positive regulation of immune system process} \\
             \text{\color{blue}regulation of immune response} \\
             \text{\color{blue}GO:0002504}
         \end{matrix}$ &$\begin{matrix}
             8.3\times 10^{-15} \\
             2.3\times 10^{-14} \\
             2.4\times 10^{-14}
         \end{matrix}$\\\cline{2-4}
         & $\begin{matrix}
             \text{6} \\ \text{(14 genes)}
         \end{matrix}$ & $\begin{matrix}
             \text{intracellular sequestering of iron ion} \\
             \text{sequestering of iron ion} \\
             \text{negative regulation of actin filament polymerization}
         \end{matrix}$ &$\begin{matrix}
             8.1\times 10^{-3} \\
             1.1\times 10^{-2} \\
             1.3\times 10^{-2}
         \end{matrix}$\\\hline
    \end{tabular}
    }
    \caption{GO terms with long names have been replaced with their GO ID; most were of the form ``negative regulation of cellular/nitrogen compound (metabolic) process''.  Immune-related GO terms have been colored blue.  These were identified using the Python API of the GProfiler \citep{kolberg_gprofilerinteroperable_2023} functional enrichment tool.}
    \label{tab:gmgm-bio-million-cell-global-structure}
\end{table}

We then investigated the GO terms of the clusters of our graphs.  If our graph were nonsense, we would expect there to be no significant GO terms associated with any clusters.  However, in Table \ref{tab:gmgm-bio-million-cell-global-structure} we can see that this is not the case; we find six clusters corresponding to meaningful groupings of genes, including immune-related groupings (which we would expect and hope to find in a PBMC dataset, which contain mostly immune cells).

This example ran in only a couple of minutes - faster than would be predicted from the runtimes in Figure \ref{fig:gmgm-bio-runtime}, likely due to the high sparsity of the dataset\footnote{Figure \ref{fig:gmgm-bio-runtime} gives performance on dense datasets, whereas our algorithm, unlike prior work, benefits substantially from sparsity in the dataset as well.  This is because there are many efficient partial eigendecomposition routines for sparse data.}.

\newpage
\subsection{COIL Dataset}

\begin{figure}[t!]
    \centering
    \includegraphics[width=0.9\textwidth]{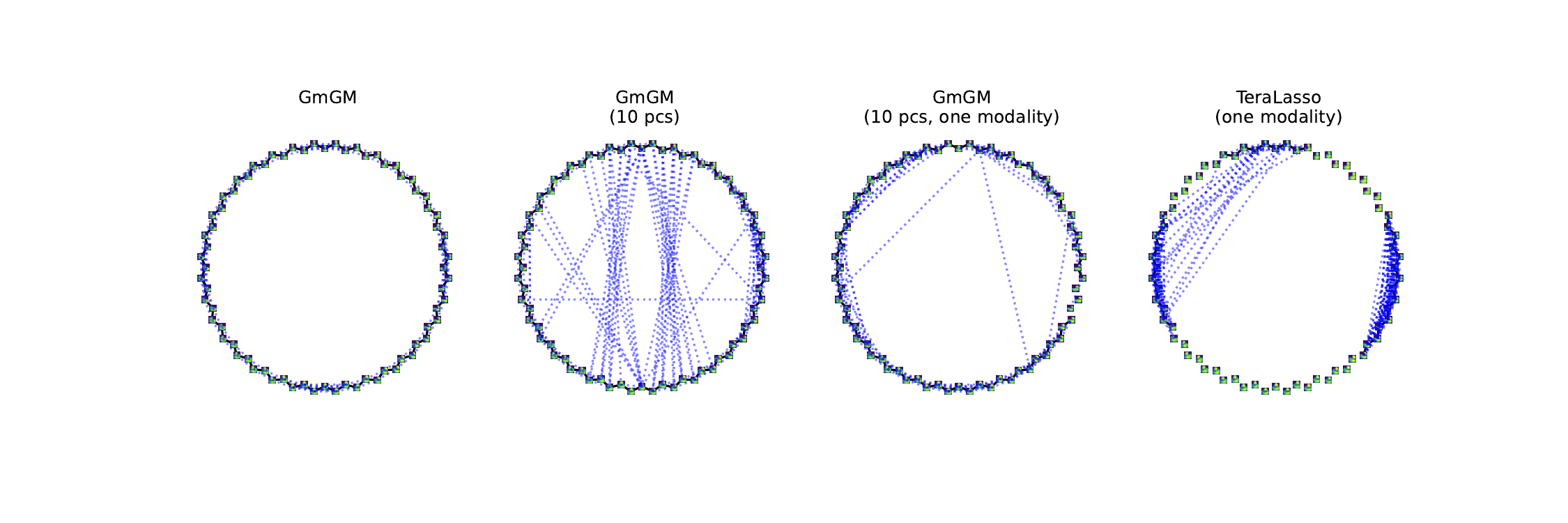}
    \caption{A comparison of several methods to find the frame graph.}
    \label{fig:gmgm-bio-coil-all}
\end{figure}

\begin{table}[]
    \centering
    \begin{threeparttable}
    \begin{tabular}{p{70pt}|cccccc|p{50pt}}
        Connections & Adjacent & One off & Two off & Three off & More & Total & Method \\ \hline
        GmGM & 72 & 72 & 25 & 4 & 0 & 173 & p=0.05 \\
        GmGM (10pcs) & 72 & 64 & 13 & 0 & 51 & 200 & p=0.05 \\
        GmGM (10pcs, one modality) & 66 & 58 & 44 & 13 & 19 & 200  & Thresholding \\
        TeraLasso (one modality) & 33 & 31 & 26 & 22 & 85 & 197\tnote{a} & Thresholding
    \end{tabular}
    \begin{tablenotes}
        \item[a] We were not able to get exactly 200 edges
    \end{tablenotes}
    \caption{The performance of several methods on reconstructing the frame graph of COIL-20 videos.  The adjacent column denotes how many edges were between adjacent frames (the `correct' edges), whereas the later columns denote how many edges exist between frames $i$ and $i\pm (n+1)$ for various $n$.  The More column summarizes all connections where $n>3$.}
    \label{tab:gmgm-bio-coil-performance}
    \end{threeparttable}
\end{table}

\begin{table}[t]
    \centering
    \begin{tabular}{p{70pt}|cccc}
        Connections & $\begin{matrix}\text{Precision/Recall}\\\text{(exact)}\end{matrix}$ & $\begin{matrix}\text{Precision/Recall}\\\text{(one off)}\end{matrix}$ & $\begin{matrix}\text{Precision/Recall}\\\text{(two off)}\end{matrix}$ & $\begin{matrix}\text{Precision/Recall}\\\text{(three off)}\end{matrix}$ \\ \hline
        GmGM & \textbf{41.6\%}/\textbf{100\%} & \textbf{83.2\%}/\textbf{100\%} & \textbf{97.6\%}/\textbf{78.2\%} & \textbf{100\%}/60\% \\
        GmGM (10pcs) & 36\%/\textbf{100\%} & 60\%/94.4\% & 74.5\%/69\% & 74.5\%/51.7\% \\
        GmGM (10pcs, one modality) & \color{blue}33\%/91.7\% & \color{blue}62\%/86\% & \color{blue}84\%/77.8\% & \color{blue}90.5\%/\textbf{62.8\%} \\
        TeraLasso (one modality) & 16.8\%/45.8\% & 32.5\%/44.4\% & 45.7\%/41.7\% & 56.7\%/38.9\%
    \end{tabular}
    \caption{The precisions and recalls for each algorithm, when we consider correct edges to be those between edges $i$ and $i\pm (n+1)$, for $n=0$ (`exact'), $n=1$ (`one off'), $n=2$ (`two off'), and $n=3$ (`three off').  In each column, the best performing model's results were given in bold for each metric.  \textbf{Bold numbers} represent the best performance overall, and {\color{blue}blue numbers} represent the best performance when restricted to one modality.}
    \label{tab:gmgm-bio-coil-pr}
\end{table}

It is difficult to test on real-world data, as ground truth graphs are often unknown.  This is perhaps why the COIL-20 video dataset \citep{nene_columbia_1996} has become a standard test for multi-axis models.

The dataset consists of 20 videos of 72 frames and 128x128 pixels.  Each video consists of a simple object rotating 360 degrees, and thus the frame graph can reasonably expected to correspond to this circular structure.  To test this, we considered the dataset to have the form $\left\{\gamma_i = (\ell_\mathrm{frame}, \ell_{\mathrm{row}(i)}, \ell_{\mathrm{col}(i)})\right\}_i$, that is, that there is a singular frame graph describing the rotational structure of the images, and then individual row and column graphs for each object (since the structure of each object may be different).  In other words, the dataset had 20 modalities, each of size 72x128x128.  TeraLasso can only handle a single modality, so to compare against it we ran our algorithm another time, this time on only one modality (the duck object, ``obj1'').  We compared the results using all principal components and when using only the top 10.

In the aforementioned experiments, we kept only the edges that were statistically significant to the 5\% level after applying the Bonferroni correction.  No edges were statistically significant, so instead we kept the same amount of edges as were found to be significant when considering the whole dataset (200).  We then did the same with TeraLasso.

The results can be compared visually in Figure \ref{fig:gmgm-bio-coil-all}.  These show that GmGM over all modalities does the best, which is to be expected - it uses the most information and does not make a low-rank assumption. TeraLasso cannot handle data of this form; however, even if we restrict our data to only a single modality and 10 principal components, our method performs best.  Numerical values are given in Tables \ref{tab:gmgm-bio-coil-performance} and \ref{tab:gmgm-bio-coil-pr}; we can see that, when considering statistically significant edges in the whole dataset, our method finds all the correct edges (even with only 10 principal components).  The false positives it finds are typically close-to-correct, which is reasonable.

\newpage
\section{Neuronal scRNA-seq dataset details}
\label{sec:gmgm-bio-lncrna-data-details}

\begin{figure}
    \centering
    \includegraphics[width=0.45\linewidth]{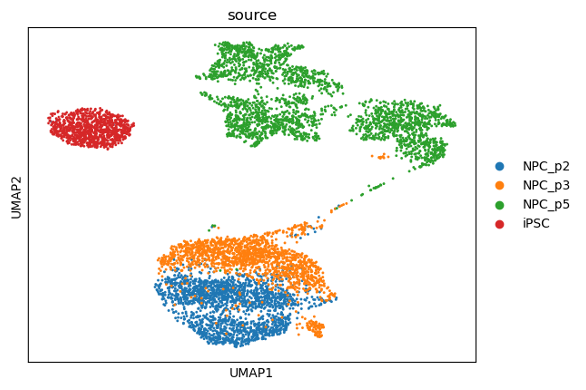}
    \includegraphics[width=0.48\linewidth]{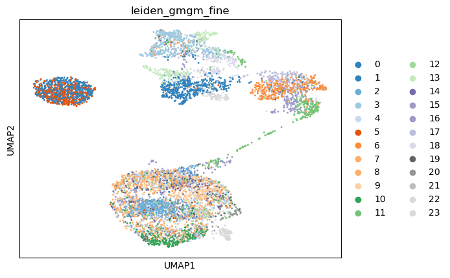}
    \caption{(Left) The four time points of the data, displayed as a UMAP plot. (Right) The cell clusters after applying Leiden to GmGM's output graph.}
    \label{fig:gmgm-bio-lncrna-passages}
\end{figure}

The dataset consists of four separate time points; the original iPSC cells, as well as cells taken from passages 2, 3, and 5 (`p2', `p3', and `p5'); these are shown graphically in Figure \ref{fig:gmgm-bio-lncrna-passages}.  In total, there are 6876 cells and 29,324 genes.  The dataset was generated in-house; the following subsections describe how this was done.

\subsection{Culture of induced pluripotent stem cells (iPSCs)}

STEMCELL Technologies’s SCTi003-A line (STEMCELL Technologies; $\#$200-0511) were cultured in complete mTeSR plus media (STEMCELL Technologies) on Matrigel® coated (Corning) 6-well plates (Helena Biosciences) and passaged at 1:40-1:60 split ratio with ReLeSR (STEMCELL Technologies) at approximately 70\% confluency. Cultures were regularly tested for pluripotency by immunocytochemistry and mycoplasma using MycoStrip lateral flow tests (InvivoGen).

\subsection{iPSC-derived neural progenitor cell (NPC) differentiation}

Neural induction of iPSCs was performed using the STEMdiff SMADi Neural Induction Kit (STEMCELL Technologies) according to the manufacturer's monolayer protocol.  iPSCs were dissociated with Accutase (Sigma) and resuspended as a single-cell suspension in DMEM/F12 (Thermo Fisher Scientific) for counting.  $2.5\times 10^6$ cells were pelleted at 300g for 5 minutes and resuspended in Neural Induction + SMADi Medium (NIM; STEMCELL Technologies) supplemented with 10 µM Y‑27632 onto hESC‑qualified Matrigel®‑coated 6‑well plates. Medium was replaced daily for 6–9 days, after which cells were passaged using Accutase and re‑plated at $2 \times 10^6$ cells per well. This induction–passage cycle was repeated until passage 3, at which point cells were re‑plated at $1.25 \times 10^6$ cells per well and subsequently maintained in STEMdiff Neural Progenitor Medium (STEMCELL Technologies) for all following passages.

\subsection{Cell fixation}

iPSCs were dissociated with Accutase at low passage and approximately 70\% confluency prior to fixation. NPCs were similarly lifted with Accutase on their designated passage day; for example, p2 NPCs were collected at passage 2 of differentiation. This procedure was performed for p2, p3, and p5 NPCs. Following cell counting, $1\times 10^6$ to $2\times 10^6$ cells were fixed using the Evercode Cell Fixation v2 kit (Parse Biosciences) according to the manufacturer’s user manual (v2.1.1). Final post‑fixation yields ranged from $7 \times 10^5$ to $1 \times 10^6$ cells, which were subsequently carried forward to barcoding.

\subsection{Barcoding, cDNA library preparation and sequencing}

Approximately $7 \times 10^4$ per sample were processed for barcoding and library preparation using the Evercode WT kit (v2.2.1) according to the manufacturers protocol (Parse Biosciences). Nine libraries were generated and assessed for fragment size and quality using an Agilent TapeStation prior to sequencing. The libraries were subsequently pooled and sequenced across two P4 flow cells on an Illumina NextSeq2000 using a 2x200bp paired end cycle. The resulting fastq files were processed using Trailmaker (Parse Biosciences) to generate gene read-counts per cell per time point.  This count matrix is what was passed into GmGM.

\section{Neuronal dataset cell marker genes}
\label{sec:gmgm-bio-lncrna-markers}

\begin{table}[]
    \centering
    \begin{tabular}{c|c}
        Cell type & Marker genes \\ \hline
        induced pluripotent stem cells & NANOG, POU5F1, LIN28A, DPPA4, TDGF1, ZSCAN10, UTF1  \\
        neural progenitors & PAX6, HES1, NES1, MSI1, VIM, FABP7 \\
        cortical neurons & TBR1, BCL11B, MAP2, RBFOX3, SLC17A7, SYT1, GRIN2B \\
        astrocytes & NFIA, SOX9, GFAP, AQP4, S100B, ALDH1L1, GLUL, SLC1A3, SLC1A2 \\
        oligodendrocytes & SOX10, MYRF, MBP, MOG, PLP1, MAG, CNP \\
        neural ectoderm & SOX1, ZIC1, OTX2, LHX2, EMX2, DACH1, SIX3
    \end{tabular}
    \caption{Marker genes for various neuronal cell types.}
    \label{tab:gmgm-bio-marker-genes}
\end{table}

We used the marker genes given in Table \ref{tab:gmgm-bio-marker-genes} to score each cluster.  A cell cluster was considered statistically significantly associated with a gene if the gene expression was significantly different in that cluster compared to others; cell clusters were then assigned the cell type corresponding to the one whose marker genes were the most commonly significantly differentially expressed.

\section{GO terms for network metrics}
\label{sec:gmgm-bio-go-term-cluster-metrics}

\begin{table}[t!]
    \centering
    \begin{tabular}{cp{2cm}|p{1.7cm}|p{7.5cm}}
        Reason & Category & Fundamental GO terms & Regulators/Occurs-In \\\hline
        Fundamental Process & DNA Replication & GO:0006260 & GO:0008156, GO:0045740, GO:0006275 \\\hline
        Fundamental Process & Cell Cycle & GO:0007049 & GO:0045786, GO:0045787, GO:0051726 \\ \hline
        Fundamental Process & Translation & GO:0006412 & GO:0032062, GO:0017148, GO:0140244, GO:0140245, GO:0140243, GO:0099578, GO:0099577, GO:0032055, GO:0043557, GO:0043555, GO:0045727, GO:1902010, GO:0006417, GO:0032061, GO:0032056, GO:0099547, GO:0036490, GO:0036493, GO:0043556, GO:0032939 \\ \hline
        Fundamental Process & Splicing & GO:0008380 & GO:0033119, GO:0033120, GO:0042484 \\ \hline
        Neurogenesis & Nervous System Development & GO:0007399 & GO:0051961, GO:0051960, GO:0051962 \\ \hline
        Neurogenesis & Synapse Organization & GO:0050808, GO:0045202 & GO:0140241, GO:0099574, GO:0140243, GO:1905809, GO:0050807 \\ \hline
        Neurogenesis & Dendrite Development & GO:0016358, GO:0030425 & GO:0050773, GO:2000171, GO:1900006 \\ \hline
        Neurogenesis & Axon Development & GO:0061564, GO:0030424 & N/A \\ \hline
        Neurogenesis & Myelination & GO:0042552, GO:0043209 & GO:0031642, GO:0031641, GO:0031643, GO:0019911 \\ \hline
    \end{tabular}
    \caption{GO term choices for metric comparison in Figure \ref{fig:gmgm-bio-go-term-cluster-metrics}.  Does not show \textit{is\_a} or \textit{part\_of} descendants, which were also included.}
    \label{tab:gmgm-bio-go-term-choices}
\end{table}

In Figure \ref{fig:gmgm-bio-go-term-cluster-metrics}, we picked which GO terms in two groups - those which are important processes dominant in all cells, and those which are part of the development of a key component found in neurons.  For each GO term category (such as 'DNA replication' or 'axon development'), we selected the GO term most relevant to the process, all GO terms that are annotated as \textit{regulates}, \textit{positively\_regulates}, \textit{negatively\_regulates}, or \textit{occurs\_in} the original term, and then any children and descendents of those GO terms via \textit{part\_of} or \textit{is\_a} relations.  The precise GO terms are given in Table \ref{tab:gmgm-bio-go-term-choices}.

\section{Neuronal scRNA-seq dataset full results}
\label{sec:gmgm-bio-lncrna-data-results}

\begin{table}[]
    \centering
    \begin{tabular}{c|c>{\raggedright\arraybackslash}p{3.5cm}c>{\raggedright\arraybackslash}p{3cm}c}
        Module & Top 3 genes by centrality & Select GO terms & Size & Annotation & Moran's I \\ \hline
        Unassigned & N/A & N/A & 14405 & A catch-all module. & 0.56 \\ \hline
        Grey & TSPAN6, CIBAR1, C3orf62 & DNA-binding transcription factor activity, regulation of biological process & 6399 & Transcription factor module, only expressed in non-iPSCs. & 0.99 \\\hline
        Turquoise & LINC00678, LNCPRESS2, ESRG & gene expression, biological regulation, RNA binding & 6146 & Only expressed in iPSCs. & 1.04 \\\hline
        Blue & SOX6, ZBTB20, ENSG00000258631 & nervous system development, neuron projection & 903 & Only expressed in non-iPSCs. & 1.01 \\\hline
        Brown & DMD, MIR100HG, ENSG00000286757 & anatomical structure development, cell differentiation, cilium organization & 559 & Expressed only in a subset of p5s. & 0.96 \\\hline
        Yellow & HMGCS1, SCD, FADS1 & cell differentiation, system development & 409 & Only expressed in p2/p3s. & 0.94 \\\hline
        Green & ECT2, SMC4, KIF20B & cell cycle, chromosome segregation, cell division & 262 & Cell cycle module. & 0.83 \\\hline
        Red & CFTR, SORCS3, PDGFD & regulation of biological process, nervous system development, cell differentiation & 241 & Only expressed in a subset of p5s. & 0.91 
    \end{tabular}
    \caption{Every hdWGCNA module, in order of size.  Moran's I is a statistic that expresses the tendency of `nearby' cells to have a similar average expression of a given module.  In this case, `nearby' is defined by the Euclidean distance of gene expression vectors of each cell.}
    \label{tab:gmgm-bio-hdwgcna-gene-modules}
\end{table}

In this section, we give a breakdown of our gene modules and cell clusters.  The GmGM cell clusters (and their links to the GmGM gene modules) are given in Table \ref{tab:gmgm-bio-lncrna-cell-clusters}, the hdWGCNA gene modules are given in Table \ref{tab:gmgm-bio-hdwgcna-gene-modules}, and the GmGM gene modules are given in Table \ref{tab:gmgm-bio-lnc-big-table}; there are no hdWGCNA cell clusters as hdWGCNA does not create a cell network.

For the gene modules, we defined `interesting' modules which satisfied one of the following criteria: high Moran's I ($>$0.3), high lncRNA percentage ($>$0.3), being related to transcription factors, being significantly expressed in a cell cluster, or being significantly expressed only in a specific passage.

Every module of hdWGCNA was `interesting'; in contrast, only 39 GmGM modules were.  Interesting modules are likely indicative of some true biological process; for example, GmGM modules m5, m7, and m117 are all interesting, and based on the GO terms represent the cell cycle, the ribosome, and mitochondrial genes, respectively.

\newpage

\begin{longtable}{c|c>{\raggedright\arraybackslash}p{3.5cm}c>{\raggedright\arraybackslash}p{3cm}c}
        \label{tab:gmgm-bio-lnc-big-table}
        Module & Top 3 genes by centrality & Select GO terms & Size & Annotation & Moran's I \\ \hline
    \endfirsthead
        Module & Top 3 genes by centrality & Select GO terms & Size & Annotation & Moran's I \\ \hline
    \endhead
        \multicolumn{6}{p{\textwidth}}{\textbf{Table \thetable} (\textit{multi-page})\textbf{.} Every GmGM module, in order of size.  Bold genes either are, are antisense to, or differential transcripts of transcription factors.  Select GO terms being N/A does not necessarily indicate a lack of GO terms, merely a lack of ones we felt worth highlighting.  Bold modules were considered `interesting'.}
    \endfoot
        \multicolumn{6}{p{\textwidth}}{\textbf{Table \thetable} (\textit{multi-page})\textbf{.} Every GmGM module, in order of size.  Bold genes either are, are antisense to, or differential transcripts of transcription factors (such as \textbf{LHX-DT}).  Select GO terms being N/A does not necessarily indicate a lack of GO terms, merely a lack of ones we felt worth highlighting.  Bold modules were considered `interesting'.}
    \endlastfoot
        m0 & DYNAP, DENRP3, ENSG00000250855 & N/A & 9047 & A catch-all module. & 0.15 \\ \hline
        \textbf{m1} & \textbf{FOXH1}, LAD1, FLT1 & N/A & 1756 & Only expresses in iPSCs. & 1.07 \\ \hline
        \textbf{m2} & CROCC2, ADGB, LMNTD1 & cilium movement, cilium organization, axoneme assembly & 600 & Only expressed in a subset of p5s. & 0.82 \\ \hline
        \textbf{m3} & SLC17A6, STMN2, \textbf{LHX9} & nervous system development, synaptic signalling, neuron projection morphogenesis & 433 & Only expresses in potential cortical neurons. & 0.83 \\ \hline
        \textbf{m4} & ENSG00000288903, CARMN, CPED1 & extracellular matrix organization, cell surface receptor signaling pathway & 319 & Only expressed in a subset of p5s. & 0.71 \\ \hline
        \textbf{m5} & SGO2, KIF14, CCNB1 & many cell-cycle related GO terms & 315 & Only expressed in a subset of p2/p3/p5s. & 0.82 \\ \hline
        \textbf{m6} & PDZRN4, SGPP2, CT75 & N/A & 313 & Only expresses in a subset of p4s, high amount of lncRNA, includes and is highly correlated with expression of GRM8. & 0.94 \\ \hline
        \textbf{m7} & RPS6, RPL7A, RPS8 & Ribosome-related GO terms & 303 & Ribosomal gene module. & 0.77 \\ \hline
        \textbf{m8} & {ANKRD1}, {SERPINE1}, COL8A1 & angiogenesis, vasculature development, DNA-binding transcription factor activity & 300 & Next three most central genes are all unnamed lncRNA; mostly expressed in p2s/p3s. One of three transcription-factor-related clusters.  & 0.58 \\ \hline
        \textbf{m9} & SPARCL1, SORCS3, CA12 & branching morphogenesis of an epithelial tube, branching involved in ureteric bud morphogenesis, mesonephric tubule morphogenesis & 276 & Only expressed in a subset of p5 cells.  Contains and correlates strongly with SORCS3. & 0.89 \\ \hline
        \textbf{m10} & CDKN1A, INPP5D, ENSG00000251095 & cell death/apoptosis GO terms & 262 & Not expressed in p5s. & 0.57 \\ \hline
        \textbf{m11} & MALAT1, MT-RNR1, MT-RNR2 & neurogenesis & 251 & Not expressed in iPSCs. & 0.90 \\ \hline
        \textbf{m12} & BRIP1, DTL, ATAD2 & DNA replication, DNA metabolic process, various cell-cycle terms & 249 & Only expressed in a subset of cells. & 0.66 \\ \hline
        \textbf{m13} & \textbf{EBF2}, \textbf{NEUROD4}, \textbf{EBF3} & nervous system development, neurogenesis, DNA-binding transcription factor activity & 239 & One of three transcription-factor-related clusters, only expresses in potential cortical neurons. & 0.70 \\ \hline
        \textbf{m14} & MIR9-2HG, \textbf{MEF2C-AS2}, WNT7A & nervous system development, cell differentiation, neurogenesis & 212 & Only expresses in a subset of p2/p3/p5 cells.  Contains and correlates with he expression of MIR9-2HG. High lncRNA percentage.  & 0.76 \\ \hline
        m15 & SEC24B, USB8, ZNF106 & localization, nucleic acid biosynthetic process & 204 & N/A & 0.04 \\\hline
        \textbf{m16} & LINC02306, ENSG00000253693, LINC03000 & nervous system development, neurogenesis & 203 & Only expressed in a subset of p5s. & 0.74 \\ \hline
        \textbf{m17} & CCT2, CCDC25, WDR1 & N/A & 201 & N/A & 0.39 \\ \hline
        m18 & CSNK2A1, DDR1, GDI1 & N/A & 199 & N/A & 0.08 \\ \hline
        \textbf{m19} & TXNL1, TMEM178A, FXYD6 & N/A & 198 & Significantly overexpressed in cell cluster 4. & 0.40 \\ \hline
        \textbf{m20} & PRANCR, C1QTNF7-AS1, ENSG00000250195 & N/A & 198 & Not expressed in iPSCs.  High lncRNA percentage. & 0.42 \\ \hline
        m21 & MED27, MBTD1, \textbf{MGA} & N/A & 186 & N/A & 0.06 \\ \hline
        m22 & CLCN3, PHF20L1, \textbf{ZNF532} & N/A & 184 & N/A & 0.11 \\ \hline
        \textbf{m23} & GDF7, \textbf{PAX3}, SV2C & anatomical structure morphogenesis, glutamatergic synapse & 181 & Only expressed in a subset of non-iPSC cells. & 0.64 \\ \hline
        \textbf{m24} & CDC20B, CCNO, \textbf{FOXN4} & centriole assembly, centriole replication, de novo centriole assembly involved in multi-ciliated epithelial cell differentiation & 180 & Only expressed in a subset of p5 cells. & 0.53 \\ \hline
        \textbf{m25} & NOS2, AGBL1, \textbf{BARHL1} & N/A & 178 & Mostly expressed in p2/p3 cells, some in p5s. & 0.85 \\ \hline
        m26 & LENG8, 7SK, ATN1 & N/A & 177 & N/A & 0.18 \\ \hline
        \textbf{m27} & \textbf{TBR1}, ISLR2, \textbf{LHX1-DT} & neuron projection morphogenesis, telencephalon development, synapse organization, dendrite, neurogenesis & 177 & Only expresses in potential cortical neurons. & 0.76 \\ \hline
        m28 & MTOR, PIN4, INO80D & nucleic acid metabolic process, nucleobase-containing compount metabolic process & 175 & N/A & 0.23 \\ \hline
        m29 & \textbf{UBP1}, INPP4A, PRKCSH & N/A & 174 & N/A & 0.12 \\ \hline
        \textbf{m30} & RSPO2, \textbf{LMX1A}, DYNLRB2-AS1 & regulation of cell differentiation, regulation of signal transduction & 174 & Expressed in all non-iPSC cells, more expression in a subset of p5 cells. & 0.86 \\ \hline
        m31 & CAPN2, EXOC6, BDNF-AS & gene expression & 172 & N/A & 0.07 \\ \hline
        m32 & BMP2K, FARP2, TAF2 & gene expression, RNA biosynthetic process & 171 & N/A & 0.08 \\ \hline
        m33 & \textbf{E2F5}, CCDC136, PREPL & gene xpression & 171 & N/A & 0.08 \\ \hline
        m34 & CPSF7, FGGY, SLC20A2 & transcription by RNA polymerase II, regulation of transcription by RNA polymerase II & 170 & N/A & 0.07 \\ \hline
        m35 & ARHGAP35, MAPKAP1, ORC2 & N/A & 169 & N/A & 0.05 \\ \hline
        m36 & ZNF84, KDM5C, EXOC3 & positive regulation of post-translational protein modification & 169 & N/A & 0.05 \\\hline
        m37 & RPAP3, SAFB, PEX14 & N/A & 168 & N/A & 0.10 \\\hline
        m38 & RAD50, SFSWAP, TBCA & MLL3/4 complex & 167 & N/A & 0.06 \\\hline
        m39 & DDX21, ENSG00000258131, LARP4 & N/A & 167 & N/A & 0.06 \\\hline
        m40 & RHEB, LINC00632, PRDX3 & N/A & 166 & N/A & 0.07 \\\hline
        m41 & MTREX, IP6K1, MED4 & gene expression, localization, translation & 164 & N/A & 0.16 \\\hline
        \textbf{m42} & CLEC2A, ENSG00000289578, NECTIN3-AS1 & N/A & 164 & Expressed only in a subset of p2/p3 cells, high lncRNA percentage. & 0.48 \\\hline
        m43 & PSMB7, KATNIP, SOS2 & inositol phosphate metabolic process, ``phosphotransferase activity, phosphate group as acceptor'' & 163 & N/A & 0.13 \\\hline
        m44 & MAPK8IP3, MED1, ANAPC1 & Glycosylphosphatidylinositol (GPI)-anchor biosynthesis & 163 & N/A & 0.09 \\\hline
        m45 & DZIP1, USP7, \textbf{AHDC1} & N/A & 163 & N/A & 0.10 \\\hline
        m46 & GBF1, CERK, \textbf{MRTFA} & mitochondrion & 161 & N/A & 0.08 \\\hline
        m47 & TARBP1, GCFC2, ESYT2 & N/A & 158 & N/A & 0.05 \\\hline
        m48 & SRP14, CEBPZ, CTTNBP2NL & RNA biosynthetic process, RNA metabolic process & 158 & N/A & 0.14 \\\hline
        m49 & SLC38A6, MIER1, ICE2 & proteolysis & 157 & N/A & 0.11 \\\hline
        \textbf{m50} & HMGCS1, MSMO1, HMGCR & sterol biosynthetic process, lipid biosynthetic process & 157 & Primarily expressed in p2/p3s, some expression in iPSCs and a subset of p5s. & 0.61 \\\hline
        \textbf{m51} & ENSG00000258312, MAMDC2, CCN1 & hippo signaling, cellular response to stimulus & 156 & Only expressed in p2/p3s. & 0.76 \\\hline
        m52 & SPG11, SSR1, SNX4 & N/A & 155 & N/A & 0.05 \\\hline
        m53 & UFL1-AS1, RGS12, MYH9 & N/A & 155 & N/A & 0.07 \\\hline
        m54 & TBC1D16, PRR14L, CMSS1 & N/A & 154 & N/A & 0.08 \\\hline
        m55 & WASL, DDAH2, PCGF3-AS1 & gene expression & N/A & 152 & 0.04 \\\hline
        m56 & XIST, LINC01515, HERC2P3 & N/A & 151 & N/A & 0.16 \\\hline
        m57 & RPS6KC1, PPFIA1, MCPH1 & N/A & 150 & N/A & 0.12 \\\hline
        m58 & CUL4B, QARS1, \textbf{PRDM4} & methylation & 148 & N/A & 0.04 \\\hline
        m59 & LDAH, FUBP1, KLHDC10 & N/A & 147 & N/A & 0.05 \\\hline
        m60 & IFT81, GANAB, TNFAIP3 & immune system response, immune system process & 147 & N/A & 0.18 \\\hline
        m61 & ZNF782, ENSG00000263551, SLCO6A1 & N/A & 146 & N/A & 0.05 \\\hline
        \textbf{m62} & CTNNA2, SMOC1, \textbf{SIM2} & neuron projection morphogenesis & 145 & Expressed in all non-iPSC cells. & 0.80 \\\hline
        \textbf{m63} & ATG5, ERCC5, PCBP1-AS1 & RNA processing, RNA metabolic process, gene expression & 144 & Expressed in all cells, decreasing through passages. & 0.34 \\\hline
        m64 & COPA, DHX57, MAPK1 & N/A & 144 & N/A & 0.06 \\\hline
        m65 & NDFIP1, SMURF1, TMEM245 & gene expression, RNA metabolic processing, lncRNA processing & 143 & N/A & 0.09 \\\hline
        m66 & RRAS2, FAM228B, IREB2 & N/A & 142 & N/A & 0.07 \\\hline
        m67 & CDK12, IARS1, ATP6V1D & tRNA metabolic process, RNA processing & 141 & N/A & 0.10 \\\hline
        m68 & SENP5, AMOTL1, ATE1 & N/A & 140 & N/A & 0.16 \\\hline
        m69 & BMERB1, VPS4A, RIMKLB & N/A & 140 & N/A & 0.12 \\\hline
        m70 & RICTOR, ZCCHC7, IGSF1 & N/A & 139 & N/A & 0.16 \\\hline
        \textbf{m71} & ENSG00000197585, ENSG00000237461, KYAT3 & N/A & 139 & High lncRNA percentage. & 0.26 \\\hline
        m72 & SLC25A46, LUC7L2, COX16 & N/A & 139 & N/A & 0.08 \\\hline
        m73 & GARRE1, HSPA8, AKAP8L & N/A & 138 & N/A & 0.17 \\\hline
        m74 & PPP2R2A, PGM3, MFSD8 & N/A & 138 & N/A & 0.05 \\\hline
        m75 & ZBED5, MRE11, TZNRD1 & mitochondrial membrane organization & 137 & N/A & 0.09 \\\hline
        m76 & RBM5, LAMP2, SCFD2 & N/A & 136 & N/A & 0.03 \\\hline
        m77 & NUMA1, NCK1, CFAP418-AS1 & succinate dehydrogenase (quinone) activity & 132 & N/A & 0.04 \\\hline
        m78 & UBE2D2, \textbf{ATF6}, SECISBP2L & N/A & 132 & N/A & 0.09 \\\hline
        m79 & NEK9, HYCC1, \textbf{ZNF577} & N/A & 132 & N/A & 0.09 \\\hline
        m80 & WDR70, CTBP2, \textbf{FOXO3} & serotonergic synapse, gultamatergic synapse & 131 & N/A & 0.10 \\\hline
        m81 & RHBDD1, BDP1, CDC14A & transcription factor TFIIIB complex & 129 & N/A & 0.07 \\\hline
        m82 & ABL1, RAPH1, PIGU & N/A & 127 & N/A & 0.07 \\\hline
        m83 & ENSG00000285713, RNF214, AMBRA1 & N/A & 127 & N/A & 0.04 \\\hline
        m84 & HADHA, TMSB10, ACTB & N/A & 126 & N/A & 0.33 \\\hline
        m85 & DNAJB14, HFM1, INTU & positive regulation of protein-containing complex disassembly & 125 & N/A & 0.06 \\\hline
        m86 & ZSWIM5, UBL3, PPM1A & N/A & 124 & N/A & 0.08 \\\hline
        m87 & ARMCX4, BSDC1, DCTN5 & DNA-templated transcription, regulation of DNA-templated transcription & 121 & N/A & 0.09 \\\hline
        m88 & GALNT13, TP53BP2, ANAPC10 & N/A & 121 & N/A & 0.06 \\\hline
        m89 & KDELR2, MEMO1, AIG1 & N/A & 120 & N/A & 0.06 \\\hline
        \textbf{m90} & \textbf{NFIA}, ENSG00000243620, \textbf{NFIB} & nervous system development, neuron differentiation, neurogenesis, DNA-binding transcription factor activity & 120 & One of the three transcription factor modules, highly expressed in p5s. & 0.93 \\\hline
        \textbf{m91} & ADAMTS18, \textbf{GLIS3}, LGR5 & cell adhesion, cell junction assembly, long-term synapse potentiation & 119 & Not expressed in iPSCs, \textbf{NR2F1} and related genes are very central to the module as well. & 0.85 \\\hline
        m92 & ATG2B, ALG11, MICU3 & N/A & 116 & N/A & 0.05 \\\hline
        m93 & CDC42SE2, RAB2A, FAM222B & N/A & 115 & N/A & 0.16 \\\hline
        m94 & AHCTF1, CHIC1, DIS3L2 & N/A & 115 & N/A & 0.11 \\\hline
        m95 & AGO4, PLEKHA8, PHF6 & N/A & 115 & N/A & 0.18 \\\hline
        m96 & NUMB, DLG3, UBE2B & regulation of primary metabolic process, regulation of macromolecule metabolic process & 113 & N/A & 0.07 \\\hline
        m97 & ARL13B, IFRD1, DPH6 & N/A & 108 & N/A & 0.07 \\\hline
        m98 & EML4, ELMO2, ANKRD6 & N/A & 107 & N/A & 0.10 \\\hline
        m99 & PRKCI, PTTG1IP, CCDC57 & ``hydrolase activity, acting on esther bonds'' & 104 & N/A & 0.04 \\\hline
        \textbf{m100} & \textbf{LMX1B}, C8orf34, FZD10 & N/A & 103 & Only expressed in a subset of p5 cells. & 0.68 \\\hline
        m101 & CALM3, PRKRA, ENSG00000285920 & N/A & 102 & N/A & 0.09 \\\hline
        m102 & DDX18, ENSG00000291130, \textbf{ZBTB25} & rRNA metabolic process, rRNA processing & 101 & N/A & 0.22 \\\hline
        m103 & TMEM168, NUP50, CCDC93 & N/A & 100 & N/A & 0.11 \\\hline
        \textbf{m104} & \textbf{ETV5}, LEMD1, TPRKB & negative regulation of lens fiber cell differentiation, regulation of lens fiber cell differentiation, negative regulation of ERK1 and ERK2 cascade & 96 & Expressed in all cells, but expression much higher in subset of p5 cells. & 0.25 \\\hline
        \textbf{m105} & H1-5, H1-2, H2AC8 & nucleosome assembly, chromatin remodeling, protein-DNA complex organization & 95 & Present in all passages, but only in a subset of each. & 0.55 \\\hline
        \textbf{m106} & SUPT20H, ATP6V1C1, ENSG00000289349 & N/A & 95 & High lncRNA percent. & 0.06 \\\hline
        m107 & STX16, CTNND1, OBSCN & N/A & 95 & N/A & 0.05 \\\hline
        m108 & SEC22A, TIAL1, SNHG29 & nucleic acid biosynthetic process & 94 & N/A & 0.08 \\\hline
        m109 & NUP107, DNAJB6, NBR1 & N/A & 91 & N/A & 0.08 \\\hline
        \textbf{m110} & LINC02646, DIRC3, PGK1 & pyruvate metabolic process, monosaccharide metabolic process, HIF1-signaling pathway, Glycolosis / Gluconeogenesis & 91 & Not expressed in p5s. & 0.38 \\\hline
        m111 & TMTC1, ATXN1, KRAS & N/A & 85 & N/A & 0.08 \\\hline
        m112 & KLHL7, CASTOR3P, AP2A2 & N/A & 85 & N/A & 0.09 \\\hline
        \textbf{m113} & CDH18, NAV3, SIPA1L3 & N/A & 84 & Expressed in all cells, but more so in certain subsets. & 0.47 \\\hline
        m114 & MCPH1-AS1, BCL9, CRYZL1 & N/A & 78 & N/A & 0.09 \\\hline
        m115 & SMARCB1, CLEC2D, PRKD1 & N/A & 74 & N/A & 0.08 \\\hline
        m116 & SCP2, GSPT1, ZMAT1 & N/A & 63 & N/A & 0.04 \\\hline
        \textbf{m117} & MT-CYB, MT-ND2, MT-ATP6 & oxidative phosphorylation, ATP synthesis coupled electron transport, NADH dehydrogenase activity & 48 & Consists of mitochondrial genes.  Expressed in all cells, less in p5s. & 0.35 \\\hline
        \textbf{m118} & ENSG00000227681, SAMD5, OSBPL1A & N/A & 37 & Expressed only in a subset of non-iPSC cells (all passages). & 0.69
\end{longtable}

\end{document}